\documentclass{article}

    \PassOptionsToPackage{numbers, sort&compress}{natbib}

\usepackage[preprint]{neurips_2026}

\usepackage[utf8]{inputenc}
\usepackage[T1]{fontenc}
\usepackage{xcolor}

\definecolor{paperblue}{HTML}{2E86AB}
\definecolor{paperred}{RGB}{176,65,55}
\definecolor{graphred}{HTML}{C25759}
\usepackage[colorlinks=true,
            linkcolor=paperred,
            citecolor=paperblue,
            filecolor=paperblue,
            urlcolor=paperblue]{hyperref}

\usepackage{url}
\usepackage{booktabs}
\usepackage{amsfonts}
\usepackage{nicefrac}
\usepackage{microtype}
\usepackage{amssymb}
\usepackage{amsmath,nccmath}
\usepackage{bbm}
\usepackage{multirow}
\usepackage{makecell}
\usepackage{amsthm}
\usepackage{bigints}
\usepackage{graphicx}
\usepackage{enumitem}
\usepackage{algorithm}
\usepackage{algorithmic}
\usepackage{tikz}
\usepackage[most]{tcolorbox}

\newtheorem{theorem}{Theorem}[section]
\newtheorem{corollary}[theorem]{Corollary}
\newtheorem{lemma}[theorem]{Lemma}

\theoremstyle{definition}

\theoremstyle{remark}
\newtheorem{remark}[theorem]{Remark}

\newtcolorbox{assumptionbox}[1]{
  enhanced,
  breakable,
  colback=paperblue!4,
  colframe=paperblue!45,
  coltitle=black,
  fonttitle=\bfseries,
  fontupper=\normalfont,
  title={#1},
  boxrule=0.35pt,
  arc=1pt,
  left=5pt,
  right=5pt,
  top=4pt,
  bottom=4pt,
  before skip=0.8em,
  after skip=0.8em
}

\newcommand{\contrib}[1]{%
  \tikz[baseline=(char.base)]{%
    \node[
      circle,
      draw=graphred!45,
      fill=graphred!8,
      line width=0.20pt,
      inner sep=0.30pt,
      outer sep=0pt,
      minimum size=0.88em
    ] (char) {\textcolor{graphred!85!black}{\bfseries\small #1}};%
  }%
}

\newcommand{\R}{\mathbb{R}}
\newcommand{\E}{\mathbb{E}}
\newcommand{\Var}{\operatorname{Var}}
\newcommand{\diag}{\operatorname{diag}}

\title{Disentangling Continuous-Time Latent Dynamics: Identifiability of Latent SDEs via Diffusion Shifts}

\author{%
  Yuanyuan Wang$^1$, \, Wenjie Wang$^2$, \, Haoxuan Li$^3$, \, Mingming Gong$^{1,2}$, \, Kun Zhang$^{1,4}$ \\
  $^1$Mohamed bin Zayed University of Artificial Intelligence \\
  $^2$The University of Melbourne \quad $^3$Peking University \quad $^4$Carnegie Mellon University \\
  \texttt{yuanyuan.wang@mbzuai.ac.ae, \, wenjiew3@student.unimelb.edu.au} \\
  \texttt{hxli@stu.pku.edu.cn, \, mingming.gong@unimelb.edu.au, \, kunz1@cmu.edu}
}

\begin{document}

\maketitle

\begin{abstract}
Causal representation learning for time series has developed strong identifiability results in discrete-time latent causal models, but identifiability in continuous-time latent stochastic differential equation (SDE) models remains largely open. We address this gap using environment-induced shifts in diffusion covariance. We study additive-noise latent SDEs observed through an unknown nonlinear diffeomorphism, with shared drift but environment-specific diffusion covariance. We show that two diagonal diffusion regimes with pairwise distinct coordinate-wise variance ratios identify the latent coordinates up to permutation and scaling, without any sparsity assumption on the drift. We first prove this result for linear Ornstein--Uhlenbeck systems and then extend it to general additive-noise latent SDEs. Under mild smoothness, the instantaneous drift-Jacobian causal graph is identifiable up to the same permutation. We propose a two-stage estimator for latent disentanglement and optional graph recovery; experiments on synthetic systems confirm the predicted identifiability boundary, and an application to Hardanger Bridge monitoring data illustrates the approach on real sensor trajectories.
\end{abstract}


\section{Introduction}

Causal representation learning (CRL) seeks to recover latent causal variables
and their structural relations from entangled observations
\cite{scholkopf2021toward}. This problem is especially compelling for time
series, where causal structure underlies explanation, forecasting, and
decision-making in domains such as climate science, biology, healthcare, and
economics \cite{runge2023causal,shojaie2022granger}. In many temporal settings,
however, the measured variables are not the causal variables of interest:
video frames, sensor streams, and multivariate readouts are often nonlinear mixtures
of latent dynamical factors. Without additional structure, those factors are
not identifiable from observations alone
\cite{hyvarinen1999nonlinear,locatello2019challenging}. Temporal CRL therefore
asks how temporal dependence, environmental variation, or interventions can
break nonlinear mixing ambiguities and make latent causal variables
identifiable.

\paragraph{Motivation.}
Many scientific and engineered systems evolve in continuous time, are sampled
irregularly, and are intrinsically stochastic. Stochastic differential equations
(SDEs) provide a standard language for such dynamics
\cite{oksendal2013stochastic,sarkka2019applied}. Continuous-time latent and
controlled dynamics are also widely used in machine learning through neural
ODEs, latent ODEs, neural controlled differential equations, and neural SDEs
\cite{chen2018neural,rubanova2019latent,kidger2020neural,li2020scalable}.
Continuous time also exposes objects that are blurred by a discrete-time view,
including drift, diffusion, quadratic variation, and instantaneous dependence.
This distinction is important for representation learning: the drift describes
the deterministic law of motion, whereas the diffusion describes the scale and
orientation of unresolved stochastic fluctuations.

In many latent dynamical systems, it is natural for the deterministic law of
motion to remain stable across environments while the stochastic forcing changes. A
bridge may have the same structural dynamics while wind and traffic loads
change the amplitude and direction of stochastic excitation; a cardiovascular
system may retain the same regulatory feedback loops while exercise or stress
changes physiological variability; and a power grid may retain the same network
dynamics while stochastic load fluctuations vary across demand or weather
regimes. Such regime variation is often easier to observe than direct
interventions on latent causal mechanisms. Such variation also provides a form
of side information specific to continuous time: diffusion leaves a local quadratic-variation
signature in the observed path, even though this signature is distorted by the
unknown nonlinear mixing. This leads to the central identifiability question of
this paper: when can diffusion shifts alone recover the latent coordinate system
of a continuous-time SDE model from nonlinear observations?

\paragraph{Challenges.}
Existing identifiable nonlinear ICA and temporal CRL results show that side
information can break nonlinear mixing ambiguities, but their identifiability
theory is largely built for discrete-time settings, using finite-lag
transitions, autoregressive mechanisms, or temporal context on a grid
\cite{hyvarinen2019nonlinear,khemakhem2020variational,yao2022learning,yao2022temporally,song2024causal}.
Conversely, work on continuous-time SDE identification and causal discovery
typically assumes that the stochastic state itself is observed, rather than
seen only through an unknown nonlinear observation map
\cite{wang2023generator,wangneural,guan2024identifying,manten2025signature,zweig2025towards}.
The latent continuous-time setting is harder: the state is observed only after
an unknown nonlinear mixing, and the target is not merely a predictive state but
the latent coordinate system and its instantaneous drift-Jacobian graph. What is
missing is a theorem-level CRL account for continuous-time latent SDE state
coordinates observed only through such a mixing. The central difficulty is that
diffusion covariance is not invariant under nonlinear coordinate changes: in
observed coordinates, local quadratic variation is the latent covariance
transformed by the Jacobian of the unknown mixing. A single diffusion regime
therefore cannot determine which latent axes are the causal coordinates. Our key
observation is that two diagonal diffusion regimes with distinct
coordinate-wise variance ratios impose simultaneous constraints on the same
unknown Jacobian. These constraints reduce the remaining
ambiguity to permutation and scaling, which in turn makes instantaneous
drift-graph recovery well defined.

\paragraph{Contributions.}
We study additive-noise latent SDEs observed through an unknown nonlinear
diffeomorphism across two environments that share the latent drift but differ in
diffusion covariance. Our contributions are: \contrib{1} We formulate this
continuous-time CRL problem and identify diffusion shifts as the source of side
information: two diagonal regimes with coordinate-wise distinct variance ratios
are sufficient to anchor the latent coordinate system. \contrib{2} We prove
identifiability first for linear Ornstein--Uhlenbeck systems and then for
general additive-noise latent SDEs: the latent coordinate system is recovered up
to permutation and scaling without any sparsity assumption on the drift, and
under mild smoothness the instantaneous drift-Jacobian causal graph is
identifiable up to the same permutation. \contrib{3} We turn the theory into a practical two-stage
estimator: diffusion-based disentanglement followed, when graph recovery is
desired, by sparse drift regression. Synthetic experiments validate the
predicted identifiability boundary, and a Hardanger Bridge monitoring study
illustrates the estimator on real sensor trajectories.

\section{Problem setup and target}
\label{sec:setup}

We consider two environments, indexed by $e\in\{1,2\}$. In environment $e$, the latent state
$z_t^{(e)}\in\R^d$ evolves according to the additive-noise SDE
\begin{equation}
\label{eq:generic_latent_sde_setup}
    dz_t^{(e)} = f(z_t^{(e)})\,dt + G_e\,dW_t^{(e)},
    \qquad
    x_t^{(e)} = g(z_t^{(e)}),
\end{equation}
where $x_t^{(e)}\in\R^d$ is the observed process, $W_t^{(e)}$ is a standard
$d$-dimensional Brownian motion, $f:\R^d\to\R^d$ is the latent drift,
$G_e\in\R^{d\times d}$ is the environment-specific diffusion matrix, and
$g:\R^d\to\R^d$ is an unknown nonlinear observation map. The two environments share
the same latent drift $f$ and differ only through their diffusion covariances. In the linear
Ornstein--Uhlenbeck special case studied first in Section~\ref{sec:theory_ou}, the drift takes
the form $f(z)=Az$ for some matrix $A\in\R^{d\times d}$. Throughout, we write
$\Sigma_e := G_e G_e^\top$
for the latent diffusion covariance in environment $e$.

The latent dynamics are autonomous: $f$ has no explicit time dependence, so the
causal mechanism is not explicitly time varying. In
continuous-time causal models for SDEs, a parent relation is typically induced
when one variable enters another variable's drift or diffusion coefficient
\cite{manten2025signature,wangneural,manten2025asymmetric}. In our
additive-noise setting, however, the diffusion matrices $G_e$ are constant in
$z$, so no variable enters another variable's diffusion coefficient. We
therefore represent the latent causal structure by the drift alone. In the
linear OU case, this reduces to the support pattern of $A$, which is constant
over time. In the general nonlinear case, we work with the instantaneous
drift-Jacobian graph at state $z$, namely the support pattern of $Df(z)$; this graph
may depend on the current state, but not explicitly on time.

Our goal is to characterize conditions under which the pair of observed path
laws from the two environments is sufficient to recover the latent coordinate system
up to permutation and coordinate-wise scaling, and to identify the corresponding
latent causal structure up to the same permutation. The precise identifiability
statements are given in Section~\ref{sec:theory}.

\section{Identifiability theory}
\label{sec:theory}

\subsection{Linear OU systems}
\label{sec:theory_ou}

We begin with the linear-drift case, where the identification mechanism can be
seen most transparently. This case is a conceptual starting point rather than a
restrictive assumption. In the OU setting, the diffusion comparison already contains the
core disentanglement mechanism, and the state-space coverage step is automatic
from the Gaussian transition law. This lets us isolate the role of the two
regimes with minimal additional technical overhead.

\paragraph{Assumptions.}
We state the shared assumptions for two observationally equivalent latent
representations: the reference parametrization and an alternative tilded
parametrization. Both the OU theorem below and the general nonlinear theorem in
Section~\ref{sec:theory_nonlinear} rely on these four assumptions.
\begin{assumptionbox}{Shared two-regime assumptions}
\begin{enumerate}[leftmargin=1.8em]
    \item[(S1)] (Regularity) $g,\tilde g:\R^d\to\R^d$ are $C^2$
    diffeomorphisms, and $\varphi := \tilde g^{-1}\circ g$.
    \item[(S2)] (Trajectory-law equivalence in each regime) For each
    $e\in\{1,2\}$,
    \[
      \{x_t^{(e)}\}_{t\in[0,T]} \overset{d}{=} \{\tilde x_t^{(e)}\}_{t\in[0,T]}
      \quad \text{in } C([0,T];\R^d).
    \]
    \item[(S3)] (Diagonal positive-definite covariances) For each
    $e\in\{1,2\}$, the diffusion covariances
    \[
      \Sigma_e := G_e G_e^\top,
      \qquad
      \tilde\Sigma_e := \tilde G_e \tilde G_e^\top
    \]
    are diagonal and positive definite.
    \item[(S4)] (Two-regime separation; simple spectrum) The diagonal entries
    of $\Sigma_1^{-1}\Sigma_2$ are pairwise distinct; equivalently, if
    $\Sigma_e=\diag(\sigma_{e,1}^2,\dots,\sigma_{e,d}^2)$, then
    \[
      \frac{\sigma_{2,i}^2}{\sigma_{1,i}^2}
      \neq
      \frac{\sigma_{2,j}^2}{\sigma_{1,j}^2}
      \qquad \forall i\neq j.
    \]
\end{enumerate}
\end{assumptionbox}
Assumptions (S1) and (S2) are standard in identifiable nonlinear ICA and causal
representation learning: one posits an invertible nonlinear mixing and asks
whether two latent parametrizations inducing the same observed law must be
equivalent, as in
\cite{hyvarinen2019nonlinear,khemakhem2020variational,yao2022learning,yao2022temporally,liidentification,yao2024multi}.
Assumption (S3) is a nondegeneracy condition on the additive noise. It fixes
coordinatewise diffusion geometry and rules out singular directions. Related
diagonal or block-diagonal noise restrictions are also
common in continuous-time structure learning and path-space causal discovery
\cite{wangneural,manten2025signature,manten2025asymmetric}.

Assumption (S4) is the separating condition: each latent coordinate carries a
distinct diffusion-ratio fingerprint across environments. This is generically
mild: failure occurs only when two ratios coincide exactly, a lower-dimensional
subset of the diagonal covariance parameter space. It is also the
continuous-time analogue of the variability conditions that drive
identifiability in temporal and multi-environment CRL
\cite{yao2022temporally,song2023temporally,von2023nonparametric,ng2025causal,liidentification}.
In the OU case, no additional support assumption is needed, because every
nondegenerate Gaussian transition has full support on $\R^d$.

Throughout this section, $\varphi = \tilde g^{-1}\circ g$ denotes the change of
coordinates between two observationally equivalent latent parametrizations:
$\tilde z=\varphi(z)$ describes the same underlying state in the alternative
coordinate system. We write $D\varphi(z)$ for its Jacobian. The main
identifiability question is whether $\varphi$ must reduce to a monomial affine
transformation.

\begin{theorem}[OU identifiability from two diagonal diffusion regimes]
\label{thm:ou_two_regimes_monomial}
For each $e\in\{1,2\}$, consider reference and tilded latent OU
parametrizations:
\begin{subequations}
\begin{align}
  d z_t^{(e)} &= A\,z_t^{(e)}\,dt + G_e\,dW_t^{(e)},
  & x_t^{(e)} &= g(z_t^{(e)}), \label{eq:ou_true}\\
  d \tilde z_t^{(e)} &= \tilde A\,\tilde z_t^{(e)}\,dt
  + \tilde G_e\,d\tilde W_t^{(e)},
  & \tilde x_t^{(e)} &= \tilde g(\tilde z_t^{(e)}). \label{eq:ou_alt}
\end{align}
\end{subequations}
Assume \textup{(S1)--(S4)}. Then there exist a constant monomial matrix
$L=\Pi\Lambda$ and a constant vector $c\in\R^d$ such that
\[
\begin{gathered}
D\varphi(z)\equiv L,\qquad \varphi(z)=Lz+c \quad (z\in\R^d),\\
\tilde\Sigma_e = L\Sigma_eL^\top,\qquad
\tilde A L = L A,\qquad
\tilde A c = 0 \quad (e\in\{1,2\}).
\end{gathered}
\]
Equivalently, $\tilde A = LAL^{-1}$. Hence the observed path laws identify the
latent coordinate system up to permutation and scaling, together with a possible
constant shift if $\varphi(0)\neq 0$. For
$E_A:=\{(j\to i):A_{ij}\neq 0\}$,
$E_{\tilde A}:=\{(j\to i):\tilde A_{ij}\neq 0\}$, and
$\Pi e_j=e_{\pi(j)}$,
\[
(j\to i)\in E_A
\quad \Longleftrightarrow \quad
(\pi(j)\to \pi(i))\in E_{\tilde A}.
\]
Thus the latent causal structure in the OU case is identifiable up to the same
permutation.
\end{theorem}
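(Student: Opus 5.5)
The plan is to pass from path-law equivalence to a pointwise identity on diffusion covariances, use the two regimes to force $D\varphi(z)$ to be monomial, then rigidify $\varphi$ and compare drifts.

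\textbf{Step 1: from path laws to a covariance identity.} Since $\tilde g$ is a diffeomorphism, (S2) transfers to the latent level. The process $\varphi(z_t^{(e)})=\tilde g^{-1}(x_t^{(e)})$ has the same law as $\tilde z_t^{(e)}$ on $C([0,T];\R^d)$. By It\^o's formula ($\varphi\in C^2$ by (S1)), $\varphi(z_t^{(e)})$ is a continuous semimartingale with quadratic-variation density $D\varphi(z_t)\Sigma_e D\varphi(z_t)^\top$. The quadratic variation of $\tilde z^{(e)}$ is $\tilde\Sigma_e t$. Quadratic variation is a measurable functional of the path, computed as a limit in probability of sums of squared increments along refining partitions. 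Hence equality in law gives
\[
D\varphi(z_t)\,\Sigma_e\, D\varphi(z_t)^\top=\tilde\Sigma_e \quad \text{for a.e.\ } t, \text{ a.s.}
\]
The OU transition density is Gaussian and nondegenerate because $\Sigma_e\succ0$, so $z_t$ has full support on $\R^d$ for every $t>0$. Continuity of $D\varphi$ then extends the identity to all $z\in\R^d$, for both $e=1,2$.

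\textbf{Step 2: pointwise monomiality, which is the core step.} Fix $z$ and set $J=D\varphi(z)$, which is invertible. From $J\Sigma_1J^\top=\tilde\Sigma_1$ and $J\Sigma_2J^\top=\tilde\Sigma_2$ we get
\[
J\,\Sigma_1^{-1}\Sigma_2\,J^{-1}=J^{-\top}\Sigma_1^{-1}\Sigma_2 J^{\top}\!\!\;^{\!}\ldots
\]
More cleanly, $\tilde\Sigma_1^{-1}\tilde\Sigma_2=J^{-\top}\Sigma_1^{-1}\Sigma_2 J^{\top}$. Put $M=J^{\top}$. Then $D:=\Sigma_1^{-1}\Sigma_2$ and $\tilde D:=\tilde\Sigma_1^{-1}\tilde\Sigma_2$ are diagonal and similar via $\tilde D=M^{-1}DM$, so $DM=M\tilde D$. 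Entrywise this reads $M_{ij}(D_{ii}-\tilde D_{jj})=0$. The entries of $\tilde D$ are a permutation of the entries of $D$, and the entries of $D$ are distinct by (S4). Hence $\tilde D$ has distinct entries as well, and each column of $M$ has support on exactly one index $i$ with $D_{ii}=\tilde D_{jj}$. So $M$, and therefore $J$, is monomial with permutation $\pi_z$ determined by matching eigenvalues. Since the eigenvalues are distinct, $\pi_z$ does not depend on $z$. The scalings are fixed in modulus by $J\Sigma_1J^\top=\tilde\Sigma_1$, and continuity together with nonvanishing fixes their signs. Hence $D\varphi\equiv L=\Pi\Lambda$ is constant, which gives $\varphi(z)=Lz+c$ and $\tilde\Sigma_e=L\Sigma_eL^\top$.

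\textbf{Step 3: drift comparison and the graph.} Applying It\^o to the now-affine $\varphi$ gives $d\tilde z=LAL^{-1}(\tilde z-c)\,dt+LG_e\,dW$. Equality in law with the $\tilde A$-OU process forces the drifts to agree. One way to see this is that the drift is identified from the law, for instance through the conditional mean $\E[\tilde z_{t+h}-\tilde z_t\mid \tilde z_t]/h$ as $h\to0$, or through Girsanov. Full support then gives $\tilde A\tilde z=LAL^{-1}\tilde z-LAL^{-1}c$ for all $\tilde z$. Hence $\tilde A=LAL^{-1}$, and therefore $\tilde AL=LA$ and $\tilde Ac=LAL^{-1}c=0$, which is the same relation.

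For the graph, $\tilde A=\Pi\Lambda A\Lambda^{-1}\Pi^\top$. Diagonal conjugation preserves the support, since it only rescales entries by nonzero factors. Permutation conjugation relabels the support: $\tilde A_{\pi(i)\pi(j)}\neq0$ if and only if $A_{ij}\neq0$.

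I expect the main obstacle to be Step 1. The delicate part is making rigorous that equality of path laws implies equality of the quadratic-variation densities along the path. That requires a.s.\ convergence along a subsequence of partitions, which lets equality in law pass to the limit functional. It also requires care with "a.e.\ $t$" before using full support and continuity to get the identity at every $z$.
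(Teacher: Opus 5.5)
Your proposal is correct. It follows the paper's skeleton: quadratic-variation congruence, full-support upgrade to every $z$, two-regime monomiality, constancy and affineness of $\varphi$, drift comparison, and support relabeling. The tool used at each stage differs, however.

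\emph{Step~1.} You push (S2) forward through $\tilde g^{-1}$ and compare quadratic variation as a law-level path functional. The paper instead realizes both observed processes as one canonical process, so that $\tilde z_t=\varphi(z_t)$ a.s., and compares quadratic variations pathwise. Your version never needs both latent SDEs on a common space. Your worry about a.s. subsequences is also unnecessary: for fixed $t$ the sums of squared increments are continuous path functionals with equal laws. Their limits in probability therefore have equal laws, and one of those limits is the deterministic matrix $\tilde\Sigma_e t$.

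\emph{Step~2.} You read monomiality directly off the similarity $\tilde\Sigma_1^{-1}\tilde\Sigma_2=J^{-\top}\Sigma_1^{-1}\Sigma_2J^{\top}$ via $M_{ij}(D_{ii}-\tilde D_{jj})=0$. The paper whitens to the orthogonal field $\tilde\Sigma_1^{-1/2}D\varphi(z)\Sigma_1^{1/2}$, invokes Lemma~\ref{lem:orth_conj_simple_diag}, and then uses continuity into the finite set of signed permutations. Your route is more elementary and makes the permutation visibly $z$-independent, so only the signs are left to continuity. Do delete the garbled first display: $J\Sigma_1^{-1}\Sigma_2J^{-1}=J^{-\top}\Sigma_1^{-1}\Sigma_2J^{\top}$ is false in general.

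\emph{Step~3.} You identify the drift from the explicit OU transition law. The paper instead uses Lemma~\ref{lem:innovation}, the common natural filtration, and the fact that a continuous finite-variation local martingale is constant. Your argument is shorter, but it depends on affine drifts with Gaussian kernels. The paper's argument transfers verbatim to Theorem~\ref{thm:nonlinear_sde_two_regimes}.

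To make your Step~3 airtight, phrase it through the conditional law of the time-$(t+h)$ state given the time-$t$ state $y$, rather than $\E[\,\cdot\mid\cdot\,]$, which needs moments the initial law may lack. That conditional law is Gaussian with mean $e^{\tilde Ah}y$ in one model and $e^{Bh}y+\int_0^h e^{Bu}\,du\,\beta$ in the other, where $B=LAL^{-1}$ and $\beta=-LAL^{-1}c$. Equate the means for all $y$ using full support and continuity, then set $y=0$ and let $h\downarrow 0$.
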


\paragraph{Proof sketch.}
The full proof is deferred to Appendix~\ref{app:deferred_proofs}. Equality of
observed path laws allows a coupling with
$\tilde z_t^{(e)}=\varphi(z_t^{(e)})$ almost surely in each regime. It\^o's
formula and quadratic variation give
$D\varphi(z_t^{(e)})\Sigma_e D\varphi(z_t^{(e)})^\top=\tilde\Sigma_e$ along
trajectories. For OU dynamics, the latent state at any positive time has a
strictly positive Gaussian density on $\R^d$, so this identity extends to every
state $z\in\R^d$. Whitening with respect to the first regime reduces the problem
to an orthogonal matrix field. The second regime and the simple-spectrum
condition in (S4) force that field to be a signed permutation at every $z$.
Continuity then makes the signed permutation constant, so $D\varphi$ is a
constant monomial matrix and $\varphi$ is affine; substituting into the drift
relation gives $\tilde A L=LA$, yielding the coordinate-system and graph
statements.

\subsection{General latent SDEs}
\label{sec:theory_nonlinear}

We now turn to nonlinear drifts. As in the OU case,
$\varphi := \tilde g^{-1}\circ g$ denotes the coordinate change between two
observationally equivalent latent parametrizations, with Jacobian
$D\varphi(z)$. The diffusion comparison still provides the core algebraic step.
Only the support argument changes: an identity first obtained along sample
paths must be extended to $\R^d$ without explicit Gaussian transition
densities.

The shared assumptions \textup{(S1)--(S4)} remain in force, and \textup{(S5)}
ensures well-defined continuous semimartingales and the
open-set positivity step. This is a standard additive-noise SDE regularity
condition: it requires continuous drifts and weak well-posedness with continuous
sample paths, but not strong solutions or closed-form transition densities.
Appendix~\ref{app:weak_vs_strong} gives further discussion.

\begin{theorem}[Latent SDE identifiability from two diffusion regimes]
\label{thm:nonlinear_sde_two_regimes}
For each $e\in\{1,2\}$, consider reference and tilded latent SDE
parametrizations:
\begin{subequations}
\begin{align}
    d z_t^{(e)} &= f(z_t^{(e)})\,dt + G_e\,dW_t^{(e)},
    & x_t^{(e)} &= g(z_t^{(e)}), \label{eq:true_sde_model}\\
    d \tilde z_t^{(e)} &= \tilde f(\tilde z_t^{(e)})\,dt
    + \tilde G_e\,d\tilde W_t^{(e)},
    & \tilde x_t^{(e)} &= \tilde g(\tilde z_t^{(e)}). \label{eq:alt_sde_model}
\end{align}
\end{subequations}
Assume the shared two-regime conditions \textup{(S1)--(S4)}, and in addition:
\begin{assumptionbox}{Additional regularity assumption for nonlinear SDEs}
\begin{enumerate}[leftmargin=1.8em]
    \item[(S5)] (Drift regularity and well-posedness) The drifts
    $f,\tilde f$ are continuous on $\R^d$. For each $e$, both SDEs in
    \eqref{eq:true_sde_model} and \eqref{eq:alt_sde_model} are weakly
    well-posed on $[0,T]$ with a.s. continuous sample paths.
\end{enumerate}
\end{assumptionbox}
Then there exist a constant monomial matrix $L=\Pi\Lambda$ and a constant
vector $c\in\R^d$ such that
\[
\begin{gathered}
D\varphi(z)\equiv L,\qquad \varphi(z)=Lz+c \quad (z\in\R^d),\\
\tilde\Sigma_e = L\Sigma_eL^\top \quad (e\in\{1,2\}),\qquad
\tilde f(Lz+c)=L f(z) \quad (z\in\R^d).
\end{gathered}
\]
Consequently, the observed path laws identify the latent coordinate system up
to permutation and scaling, together with a possible constant shift $c$.
\end{theorem}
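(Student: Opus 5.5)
The proof follows the OU argument of Theorem~\ref{thm:ou_two_regimes_monomial}. Only the step that extends a pathwise identity to all of $\R^d$ changes, since Gaussian densities are no longer available.

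\emph{Step 1: coupling.} By (S2), and because $g,\tilde g$ are homeomorphisms, the processes $z^{(e)}$ and $\varphi^{-1}(\tilde z^{(e)})$ have the same law on $C([0,T];\R^d)$. Weak well-posedness in (S5) lets us realize both on one probability space with $\tilde z_t^{(e)}=\varphi(z_t^{(e)})$ for all $t$, almost surely.

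\emph{Step 2: quadratic variation.} The map $\varphi$ is $C^2$ by (S1), and $z^{(e)}$ is a continuous semimartingale with constant diffusion. It\^o's formula therefore gives $d\langle \varphi(z^{(e)})\rangle_t = D\varphi(z_t)\Sigma_e D\varphi(z_t)^\top dt$. The tilded equation gives $d\langle\tilde z^{(e)}\rangle_t=\tilde\Sigma_e\,dt$. The two quadratic variations are equal, and both integrands are continuous in $t$. Hence almost surely
\[
D\varphi(z^{(e)}_t)\Sigma_e D\varphi(z^{(e)}_t)^\top=\tilde\Sigma_e \quad \text{for all } t\in[0,T].
\]
Comparing finite-variation parts gives $D\varphi\, f+\tfrac12\operatorname{tr}$-terms $=\tilde f\circ\varphi$ along paths. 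That relation is saved for Step 5.

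\emph{Step 3: support (the main obstacle).} The identity must be extended from visited states to every $z\in\R^d$. The plan is to show that the closed set $F_e=\{z: D\varphi(z)\Sigma_eD\varphi(z)^\top=\tilde\Sigma_e\}$ is all of $\R^d$. Suppose some $z^*\notin F_e$. Since $F_e$ is closed, there is a ball $B$ around $z^*$ disjoint from $F_e$.

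I would argue that $\mathbb P(z_t^{(e)}\in B \text{ for some } t\le T)>0$. This follows from the Stroock--Varadhan support theorem, or more elementarily from a Girsanov argument on a localized time interval with continuous (hence locally bounded) $f$. The diffusion $\Sigma_e$ is nondegenerate, and Brownian motion with locally bounded drift reaches any open set with positive probability. This contradicts Step 2.

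The subtle points are twofold. First, the initial law must be handled, either by starting the argument from the initial state or by using that the support of the initial law is reached. Second, Girsanov needs care when $f$ is unbounded. I would stop the process at the exit time of a large ball containing both the start and $B$, where $f$ is bounded, and apply Novikov there. This is the step I expect to require the most care.

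\emph{Step 4: algebra (as in the OU case).} Set $M(z)=\tilde\Sigma_1^{-1/2}D\varphi(z)\Sigma_1^{1/2}$. Regime 1 gives $MM^\top=I$, so $M(z)$ is orthogonal. Regime 2 gives
\[
M\,(\Sigma_1^{-1/2}\Sigma_2\Sigma_1^{-1/2})\,M^\top=\tilde\Sigma_1^{-1/2}\tilde\Sigma_2\tilde\Sigma_1^{-1/2}.
\]
The left-hand middle factor is diagonal with simple spectrum by (S4), and the right-hand side is diagonal. An orthogonal matrix conjugating one diagonal matrix with simple spectrum into another must therefore be a signed permutation.

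$M$ is continuous in $z$, signed permutations form a discrete set, and $\R^d$ is connected. So $M$ is constant, and hence $D\varphi\equiv L=\Pi\Lambda$. It follows that $\varphi(z)=Lz+c$ and $\tilde\Sigma_e=L\Sigma_eL^\top$.

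\emph{Step 5: drift.} Since $\varphi$ is affine, its Hessian vanishes and the It\^o correction disappears. The drift comparison of Step 2 becomes $Lf(z_t)=\tilde f(Lz_t+c)$ along paths, almost everywhere in $t$ and hence for all $t$ by continuity. The support argument of Step 3, applied to the closed set where $Lf=\tilde f(L\cdot+c)$, extends this to all $z\in\R^d$.
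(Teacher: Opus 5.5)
Your proposal is correct and follows essentially the same route as the paper: you couple on a common path space, and use It\^o's formula and quadratic variation to get $D\varphi\,\Sigma_e D\varphi^\top=\tilde\Sigma_e$ along paths. You then extend this to $\R^d$ by a localized Girsanov--Novikov support argument (the paper's open-set positivity lemma, which handles the initial law by conditioning on $z_0$), apply the whitening, simple-spectrum and connectedness argument, and finish with the drift comparison and the same support step. The differences are minor: you use a hitting-probability form of positivity where the paper uses fixed-time marginal positivity, and your ``comparing finite-variation parts'' implicitly needs both decompositions in a common filtration, which the paper supplies via the innovation representation in the shared natural filtration of $z$ and $\varphi(z)$.
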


\begin{remark}[Support coverage is automatic]
\label{rem:nonlinear_state_coverage}
The proof only needs that, for some $t_\star>0$, the law of
$z_{t_\star}^{(e)}$ has full support on $\R^d$. Under \textup{(S3)} and
\textup{(S5)}, this follows from the open-set positivity lemma in
Appendix~\ref{app:open_set_positivity}, which covers arbitrary initial laws.
\end{remark}

\paragraph{Proof sketch.}
The proof, deferred to Appendix~\ref{app:deferred_proofs}, reuses the OU
argument. The coupling, quadratic-variation comparison, and
whitening/simple-spectrum steps are unchanged. The new ingredient is the
open-set positivity lemma, which upgrades the along-trajectory covariance
identity to a pointwise identity on $\R^d$ without Gaussian transition
densities. The rest follows as in the OU case: $D\varphi$ is a constant
monomial matrix, hence $\varphi(z)=Lz+c$, and the same semimartingale comparison
gives $\tilde f(Lz+c)=Lf(z)$.

Under \textup{(S1)--(S5)}, Theorem~\ref{thm:nonlinear_sde_two_regimes} shows
that any observationally equivalent latent coordinate change must be a monomial
affine map. When $f$ and $\tilde f$ are $C^1$, differentiating
$\tilde f(Lz+c)=Lf(z)$ yields the drift-Jacobian graph statement.

\begin{corollary}[Drift-Jacobian graph identifiability up to permutation]
\label{cor:jacobian_graph_identifiability}
Under the assumptions of Theorem~\ref{thm:nonlinear_sde_two_regimes}, assume
additionally that $f,\tilde f$ are $C^1$. Let $J_f(z):=Df(z)$,
$E_f(z):=\{(j\to i):(J_f(z))_{ij}\neq 0\}$, and define
$J_{\tilde f},E_{\tilde f}$ analogously. Then
\[
D\tilde f(Lz+c)=L\,Df(z)\,L^{-1}
\qquad \forall z\in\R^d,
\]
so the instantaneous drift-Jacobian causal graphs are identified up to the
permutation induced by $\Pi$. Writing $\Pi e_j=e_{\pi(j)}$, equivalently,
\[
(j\to i)\in E_f(z)
\quad \Longleftrightarrow \quad
(\pi(j)\to \pi(i))\in E_{\tilde f}(Lz+c).
\]
\end{corollary}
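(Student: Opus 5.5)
The plan is to take Theorem~\ref{thm:nonlinear_sde_two_regimes} as given and differentiate its conclusion. That theorem supplies a constant monomial matrix $L=\Pi\Lambda$, with $\Lambda$ diagonal and invertible, and a vector $c$ such that
\[
\tilde f(Lz+c)=Lf(z)\qquad \text{for all } z\in\R^d .
\]
Since $f$ and $\tilde f$ are now assumed $C^1$, both sides are $C^1$ functions of $z$. The left side is a composition with the affine map $z\mapsto Lz+c$. By the chain rule, differentiating gives
\[
D\tilde f(Lz+c)\,L = L\,Df(z).
\]
Since $L$ is invertible, this yields $D\tilde f(Lz+c)=L\,Df(z)\,L^{-1}$, which is the first displayed claim.

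Next, translate this matrix identity into support patterns. Write $L=\Pi\Lambda$, so that $L^{-1}=\Lambda^{-1}\Pi^{-1}=\Lambda^{-1}\Pi^\top$. Then
\[
L\,J_f(z)\,L^{-1}=\Pi\,\bigl(\Lambda J_f(z)\Lambda^{-1}\bigr)\,\Pi^\top .
\]
The inner matrix has entries $\lambda_i (J_f(z))_{ij}\lambda_j^{-1}$. Because every $\lambda_i\neq 0$, its entries are nonzero exactly where the entries of $J_f(z)$ are nonzero. Conjugating by the permutation matrix $\Pi$, with $\Pi e_j=e_{\pi(j)}$, moves the $(i,j)$ entry to position $(\pi(i),\pi(j))$. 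Concretely, $(\Pi M\Pi^\top)_{\pi(i)\pi(j)}=e_{\pi(i)}^\top\Pi M\Pi^\top e_{\pi(j)}=e_i^\top M e_j=M_{ij}$.

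Combining the two steps,
\[
(J_{\tilde f}(Lz+c))_{\pi(i)\pi(j)}=\lambda_i\lambda_j^{-1}(J_f(z))_{ij},
\]
so $(j\to i)\in E_f(z)$ if and only if $(\pi(j)\to\pi(i))\in E_{\tilde f}(Lz+c)$.

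There is no real obstacle here. The only points needing care are the index convention for $\Pi$, which must match the $\Pi e_j=e_{\pi(j)}$ convention fixed in the statement, and the fact that the scaling factors $\lambda_i$ are nonzero, which holds because $L$ is invertible as the Jacobian of the diffeomorphism $\varphi$.
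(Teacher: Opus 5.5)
Your proposal is correct and follows essentially the same route as the paper: it differentiates the drift relation $\tilde f(Lz+c)=Lf(z)$ from Theorem~\ref{thm:nonlinear_sde_two_regimes} by the chain rule, then uses $L=\Pi\Lambda$ to argue that diagonal scaling preserves the zero pattern while conjugation by $\Pi$ relabels the indices. The only cosmetic difference is that the paper first rewrites the relation as $\tilde f(\tilde z)=L f(L^{-1}(\tilde z-c))$ and differentiates in $\tilde z$, whereas you differentiate in $z$ and then multiply by $L^{-1}$; your entrywise formula $(J_{\tilde f}(Lz+c))_{\pi(i)\pi(j)}=\lambda_i\lambda_j^{-1}(J_f(z))_{ij}$ spells out the same support argument a bit more explicitly.
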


\section{Two-stage estimation}
\label{sec:estimation}

We translate the identifiability results into a two-stage estimator. Stage~1
learns the latent coordinate system from short-time transition statistics across
the two regimes. Stage~2 is optional and estimates a sparse drift graph when
explicit graph recovery is desired.

\subsection{Model parameterization}
\label{sec:model_param}

We observe trajectories
$\{x_{t_0}^{(e)},x_{t_1}^{(e)},\dots,x_{t_N}^{(e)}\}$ from two regimes
$e\in\{1,2\}$ at uniform intervals $\Delta t$. The encoder $h_\theta$ maps
observations into a learned latent coordinate system, playing the estimator-side
role of $\tilde g^{-1}$ in
Sections~\ref{sec:theory_ou}--\ref{sec:theory_nonlinear}. We keep the tilde
notation $(\tilde z,\tilde f,\tilde\Sigma_e)$ for learned coordinates, drift,
and diffusion parameters, with $\theta,\psi$ denoting parametric components.
Our learnable model consists of three components:
\begin{itemize}[leftmargin=1.5em]
    \item an encoder $h_\theta:\R^d\to\R^d$, implemented as an MLP in our
    experiments, with $\tilde z_t=h_\theta(x_t)$;
    \item a differentiable drift model $\tilde f_\psi:\R^d\to\R^d$, shared
    across regimes and implemented as either a linear map or an MLP;
    \item diagonal per-regime diffusion parameters
    $\tilde\Sigma_e=\diag(\tilde\sigma_{e,1}^2,\ldots,\tilde\sigma_{e,d}^2)$,
    parameterized by $\log \tilde\sigma_{e,i}^2$ to ensure positivity.
\end{itemize}
We do not enforce $h_\theta$ to be globally invertible during training, so the
Stage~1 likelihood below is interpreted as a pseudo-likelihood in general.

\subsection{Stage~1: short-time transition fitting}
\label{sec:stage1}

Stage~1 uses the short-time behavior of additive-noise SDEs. For small
$\Delta t$, the Euler--Maruyama discretization gives a Gaussian approximation to
one-step transitions \cite{oksendal2013stochastic,sarkka2019applied}. In regime
$e$, the encoded-space working model is
\begin{equation}
\label{eq:EM_transition}
\tilde z_{t+\Delta t}\mid \tilde z_t,e
\sim
\mathcal{N}\!\Big(\tilde z_t + \tilde f_\psi(\tilde z_t)\,\Delta t,
\tilde\Sigma_e\,\Delta t\Big).
\end{equation}
Equivalently,
\[
\tilde z_{t+\Delta t}
=
\tilde z_t + \tilde f_\psi(\tilde z_t)\,\Delta t + \varepsilon_t^{(e)},
\qquad
\varepsilon_t^{(e)}\sim \mathcal{N}(0,\tilde\Sigma_e\,\Delta t).
\]
The diagonal $\tilde\Sigma_e$ makes the one-step residual coordinates
conditionally independent under this model.

If the encoded states $\tilde z_t$ were observed, the per-transition negative
log-likelihood, up to the additive constant
$\tfrac{d}{2}\log(2\pi\Delta t)$, is
\begin{equation}
\label{eq:nll_ztilde}
\ell_e(\tilde z_t,\tilde z_{t+\Delta t})
=
\frac{1}{2}\sum_{i=1}^d \log \tilde\sigma_{e,i}^2
+
\frac{1}{2}\sum_{i=1}^d
\frac{\big(\tilde z_{t+\Delta t,i}-\tilde z_{t,i}-\tilde f_{\psi,i}(\tilde z_t)\,\Delta t\big)^2}
{\tilde\sigma_{e,i}^2\,\Delta t}.
\end{equation}
This is the Gaussian transition score for \eqref{eq:EM_transition}.

\paragraph{Observation-space objective.}
We observe only $x_t$ and set $\tilde z_t=h_\theta(x_t)$. If $h_\theta$ is a
$C^1$ diffeomorphism, the change-of-variables formula gives
\[
p_\theta(x_{t+\Delta t}\mid x_t,e)
=
p_{\tilde z,e}\big(h_\theta(x_{t+\Delta t})\mid h_\theta(x_t)\big)
\,\big|\det Dh_\theta(x_{t+\Delta t})\big|,
\]
where $p_{\tilde z,e}$ is the Gaussian density in \eqref{eq:EM_transition} and
$Dh_\theta$ is the encoder Jacobian. The observed-data per-transition negative
log-likelihood, up to the same additive constant, is
\begin{equation}
\label{eq:nll_obs}
s_e(x_t,x_{t+\Delta t})
=
\ell_e\big(h_\theta(x_t),h_\theta(x_{t+\Delta t})\big)
-
\log\big|\det Dh_\theta(x_{t+\Delta t})\big|.
\end{equation}

In implementation, $h_\theta$ is differentiable but need not be globally
invertible, so \eqref{eq:nll_obs} is generally a
\emph{transition pseudo-likelihood}. The first term fits encoded increments to
the diagonal Gaussian transition model, while the log-determinant term provides
a local volume correction. If $h_\theta$ is a $C^1$ diffeomorphism, the same
score is the exact Euler--Maruyama one-step negative log-likelihood.

\paragraph{Stage~1 objective.}
We average the per-transition score over regimes and sampled consecutive pairs:
\begin{equation}
\label{eq:stage1_loss}
\mathcal{L}_{\mathrm{S1}}(\theta,\psi,\{\tilde\sigma_{e,i}^2\})
=
\frac{1}{2}\sum_{e=1}^{2}
\E_{(x_t,x_{t+\Delta t})\sim\mathcal{D}_e}
\Big[
 s_e(x_t,x_{t+\Delta t})
\Big],
\end{equation}
where $\mathcal{D}_e$ is the empirical distribution of consecutive pairs in
regime $e$. Stage~1 jointly learns
$(\theta,\psi,\{\tilde\sigma_{e,i}^2\})$.

\paragraph{Why this objective matches the theory.}
Over a short interval $\Delta t$, the drift contributes a mean shift of order
$O(\Delta t)$, whereas the stochastic increment has standard deviation
$O(\sqrt{\Delta t})$ and covariance $\tilde\Sigma_e\,\Delta t$. Diffusion
therefore dominates the short-time second-order behavior. Our theory shows that
this regime-dependent diagonal diffusion structure anchors the latent coordinate
system. Thus \eqref{eq:nll_ztilde}--\eqref{eq:stage1_loss} trains the encoder to find
coordinates in which short-time residual increments
\[
\tilde z_{t+\Delta t}-\tilde z_t-\tilde f_\psi(\tilde z_t)\,\Delta t
\]
fit a shared-drift, regime-specific diagonal Gaussian model. This is the
finite-step analogue of the two-regime identifiability mechanism in
Section~\ref{sec:theory}.

\subsection{Stage~2: sparse drift regression}
\label{sec:stage2}

Stage~2 is optional and used only to estimate a sparse causal graph. For latent
disentanglement alone, Stage~1 is the relevant part of the pipeline.

\paragraph{Velocity targets.}
With the encoder $h_\theta$ frozen, we form velocity targets from encoded transitions:
\begin{equation}
\label{eq:velocity_target}
\hat v_t
=
\frac{h_\theta(x_{t+\Delta t'})-h_\theta(x_t)}{\Delta t'},
\end{equation}
where $\Delta t'=s\Delta t$ for an integer stride $s\ge 1$. Larger strides
reduce target variance,
$\Var(\hat v_{t,i})\approx \tilde\sigma_{e,i}^2/\Delta t'$, at the cost of
$O(\Delta t')$ bias from time-averaging the drift.

\paragraph{Sparse drift estimation.}
We reinitialize $\tilde f_\psi$ and fit it to the velocity targets with a
Jacobian sparsity penalty:
\begin{equation}
\label{eq:stage2_loss}
\mathcal{L}_{\mathrm{S2}}(\psi)
=
\underbrace{
\frac{1}{2}\sum_{e=1}^{2}
\E_{(\tilde z_t,\hat v_t)\sim\mathcal{D}_e}
\Big\|\tilde f_\psi(\tilde z_t)-\hat v_t\Big\|^2
}_{\text{velocity MSE}}
+
\lambda_{\mathrm{sparse}}
\underbrace{
\E_{\tilde z\sim\mathcal{D}_1\cup\mathcal{D}_2}
\sum_{i=1}^d\sum_{j=1}^d
\bigg|\frac{\partial \tilde f_{\psi,i}}{\partial \tilde z_j}(\tilde z)\bigg|
}_{\ell_1\text{ drift-Jacobian penalty}}.
\end{equation}
This stage is estimator-side only and is used when a stable sparse graph
estimate is needed.

\paragraph{Graph extraction.}
By Corollary~\ref{cor:jacobian_graph_identifiability}, the learned
drift-Jacobian zero pattern is meaningful only up to the latent permutation
induced by $\Pi$. In practice, the binary adjacency matrix is obtained by
thresholding the mean absolute Jacobian at a fixed threshold $\tau>0$:
\[
\overline{|D\tilde f_\psi|}
:=
\E_{\tilde z\sim\mathcal{D}_1\cup\mathcal{D}_2}
\big[|D\tilde f_\psi(\tilde z)|\big].
\]

\paragraph{Summary.}
Algorithm~\ref{alg:two_stage} in Appendix~\ref{app:algorithm} summarizes the
training and graph-extraction procedure.

\section{Experiments}
\label{sec:simulation}

\subsection{Synthetic experiments}
\label{subsec:synthetic}

We test whether the estimator matches the identifiability boundary predicted by
the theory.  We use three latent drift families: dense linear OU, sparse linear
OU, and nonlinear sparse drift.  For each family, we compare three
diffusion-regime conditions: \emph{Distinct
ratios}, where two environments have coordinate-wise distinct diffusion-variance
ratios and hence satisfy the separation requirement in Assumption~(S4);
\emph{One regime}, which removes environment variation; and \emph{Proportional
diffusions}, which keeps two environments but makes all variance ratios equal.
The theory predicts coordinate recovery only under Distinct ratios.

Each drift family is tested at $d\in\{5,7\}$ under two nonlinear mixing
functions.  The main-text setting uses a three-layer MLP with leaky-$\tanh$
activation $\phi(z)=\tanh(z)+0.1z$, yielding a $C^\infty$ diffeomorphism that
satisfies the theory's smoothness assumptions.  We also test LeakyReLU mixing as
a robustness check beyond the $C^2$ setting.  To keep the main text focused, we
report the $d=5$ leaky-$\tanh$ results below; LeakyReLU and $d=7$ results are in
Appendix~\ref{app:simulations}, with the data-generating process, training
pipeline, hyperparameters, and metric definitions in
Appendix~\ref{app:sim_setup} (Table~\ref{tab:hyperparams}).  We report mean
correlation coefficient (MCC) for one-to-one absolute-correlation alignment
between learned and true coordinates; monomial score (Mon.) for how close the
mean absolute Jacobian of $h\circ g$ is to a permutation-scaling matrix; and,
for sparse linear and nonlinear drifts, graph match rate (GMR) for entrywise
agreement between the thresholded learned drift-Jacobian graph and the
ground-truth graph.  Higher is better for all metrics.

Table~\ref{tab:simulation_main} summarizes the quantitative results.  Across all
three drift families, Distinct ratios gives near-perfect MCC and high monomial
scores, whereas both controls show substantially lower coordinate recovery and
less monomial encoder Jacobians.  In the sparse linear and nonlinear families,
Distinct ratios also recovers the drift graph almost exactly.  GMR is omitted
for the dense linear family because sparse graph recovery is not the target
there.  The same qualitative pattern holds at $d=7$ and under LeakyReLU mixing
(Appendix~\ref{app:simulations}).

\begin{table*}[t]
  \caption{Synthetic $d=5$ results under three-layer leaky-$\tanh$ MLP mixing.
  Entries are mean $\pm$ s.d. over five training seeds.  \textbf{Bold}
  highlights the identifiable condition (Distinct ratios); the other two
  conditions are controls.  GMR is omitted for the dense linear family because
  sparse graph recovery is not the target there.}
  \label{tab:simulation_main}
  \centering
  \small
  \setlength{\tabcolsep}{6pt}
  \begin{tabular}{llccc}
    \toprule
    Drift family & Regime condition & MCC $\uparrow$ & Mon.\ $\uparrow$ & GMR $\uparrow$ \\
    \midrule
    Dense linear & Distinct ratios & $\mathbf{0.9976 \pm 0.0001}$ & $\mathbf{0.8697 \pm 0.0027}$ & -- \\
    Dense linear & One regime & $0.7819 \pm 0.0160$ & $0.4750 \pm 0.0253$ & -- \\
    Dense linear & Proportional diffusions & $0.8087 \pm 0.0428$ & $0.4940 \pm 0.0344$ & -- \\
    \midrule
    Sparse linear & Distinct ratios & $\mathbf{0.9976 \pm 0.0002}$ & $\mathbf{0.8920 \pm 0.0038}$ & $\mathbf{1.0000 \pm 0.0000}$ \\
    Sparse linear & One regime & $0.7206 \pm 0.0382$ & $0.4131 \pm 0.0349$ & $0.5760 \pm 0.0784$ \\
    Sparse linear & Proportional diffusions & $0.7018 \pm 0.0367$ & $0.4061 \pm 0.0269$ & $0.6400 \pm 0.0566$ \\
    \midrule
    Nonlinear & Distinct ratios & $\mathbf{0.9977 \pm 0.0001}$ & $\mathbf{0.8924 \pm 0.0035}$ & $\mathbf{1.0000 \pm 0.0000}$ \\
    Nonlinear & One regime & $0.7331 \pm 0.0232$ & $0.4234 \pm 0.0280$ & $0.6800 \pm 0.0506$ \\
    Nonlinear & Proportional diffusions & $0.7531 \pm 0.0141$ & $0.4386 \pm 0.0248$ & $0.7840 \pm 0.0742$ \\
    \bottomrule
  \end{tabular}
\end{table*}

Figures~\ref{fig:scatter} and~\ref{fig:diagnostics} give qualitative diagnostics
for the nonlinear $d=5$ setting, covering both latent disentanglement and graph
recovery.  Each column shows one regime condition, using the same randomly
selected training seed across conditions.  Figure~\ref{fig:scatter} shows that
Distinct ratios gives a near-permutation alignment between true and learned
coordinates, while the controls do not.  Figure~\ref{fig:diagnostics} shows that
the corresponding $\overline{|D\varphi|}$ matrix is near-monomial and the drift
graph is recovered exactly under Distinct ratios; the controls produce denser
encoder Jacobians and spurious or missed graph edges.  Analogous figures for
other drift families, dimensions, and mixing functions are in
Appendix~\ref{app:simulations}.

\begin{figure*}[t]
  \centering
  \includegraphics[width=0.99\textwidth]{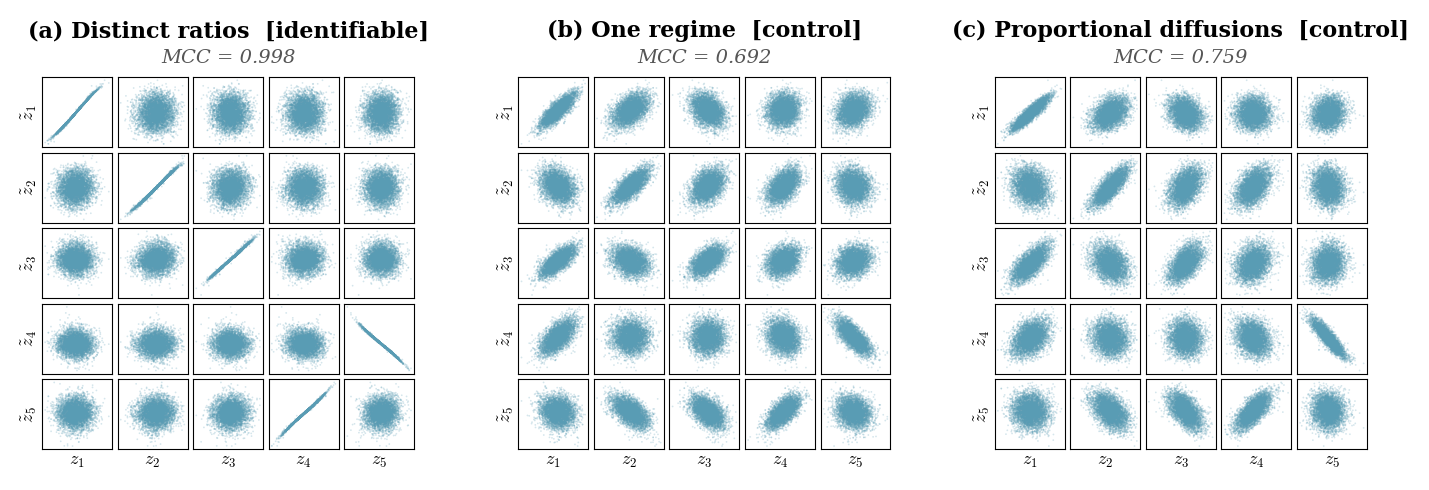}
  \caption{Latent disentanglement in the nonlinear $d=5$ setting (three-layer
  leaky-$\tanh$ MLP mixing). Each column shows the $5\times5$ scatter matrix of
  true coordinates $z_j$ (horizontal) versus learned coordinates $\tilde z_i$
  (vertical) for one regime condition. Distinct ratios yields the expected
  near-permutation alignment; the controls do not recover a clean one-to-one
  alignment.}
  \label{fig:scatter}
\end{figure*}

\begin{figure*}[t]
  \centering
  \includegraphics[width=0.99\textwidth]{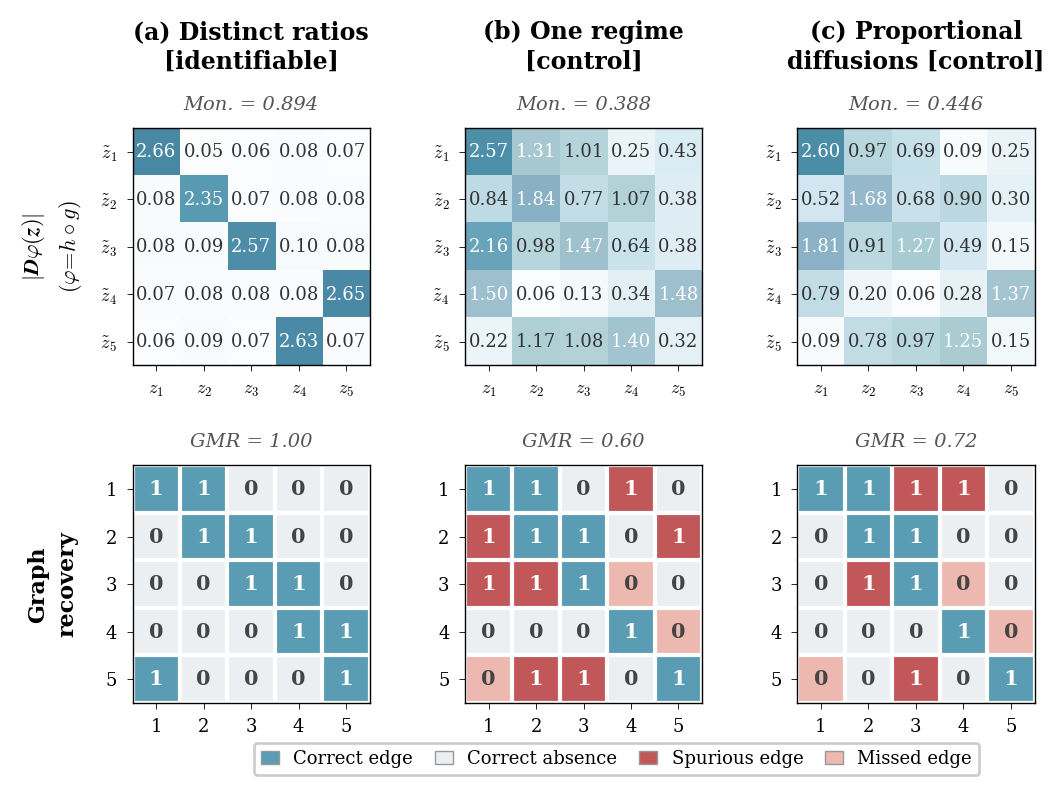}
  \caption{Structural diagnostics for the same setting and run as
  Figure~\ref{fig:scatter}. \textbf{Top row}: mean absolute encoder Jacobian
  $\overline{|D\varphi|}$ for $\varphi=h\circ g$; near-monomial structure
  indicates inversion up to permutation and scaling. \textbf{Bottom row}: learned
  versus ground-truth drift graph, with correct edges/absences in blue/gray and
  spurious/missed edges in red/pink. Distinct ratios achieves an exact graph
  match; both controls introduce graph errors.}
  \label{fig:diagnostics}
\end{figure*}

\subsection{Real-world case study}
\label{subsec:hardanger}

We further test the method on operational accelerometer recordings from the
Hardanger Bridge, a long-span suspension bridge instrumented for long-term wind
and vibration monitoring \cite{fenerci2020wind,fenerci2021data}.  The model is
trained only on bridge-deck acceleration channels; wind measurements define
regimes but are not used as model inputs.  This also probes robustness beyond
the square diffeomorphic setting of Assumption~(S1): the input has $16$ sensor
channels and the learned latent space has dimension $d=5$.  To obtain a
real-world analogue of the diffusion-separation condition in Assumption~(S4),
we split segments within a
matched wind-speed bin by turbulence intensity (TI).  This keeps mean wind
speed fixed while changing stochastic excitation, so low- and high-TI segments
are intended to share structural dynamics while carrying different diffusion
fingerprints.

We compare three regime constructions: \emph{TI-split regimes (ours)}, using
the low- and high-TI groups as two regimes; \emph{One regime}, merging the two
TI groups and removing regime-specific diffusion variation; and \emph{Shuffled
TI labels}, preserving the two-regime training format but destroying the
turbulence-based contrast.  Since ground-truth latent coordinates and graphs are
unavailable, we evaluate cross-seed reproducibility rather than accuracy, using
pairwise mean matched absolute correlation coefficient (MCC) and graph match
rate (GMR).  Table~\ref{tab:hardanger} reports results over $K=10$ seeds at
$d=5$.  The TI-split construction gives the highest MCC and GMR, consistent with
the synthetic gap between identifiable diffusion shifts and controls.
Appendix~\ref{app:hardanger} gives the dataset construction, training protocol,
aggregate diagnostics, Assumption~(S4)-style diffusion-fingerprint checks, a
representative drift-Jacobian interpretation, and limitations of the real-world
diagnostic.

\begin{table}[h]
  \caption{Hardanger Bridge accelerometer case study
  \cite{fenerci2020wind,fenerci2021data} ($K=10$ seeds, $d=5$).  Entries are
  mean pairwise cross-seed reproducibility scores (MCC, or GMR with
  $\tau=0.05$) over $\binom{K}{2}=45$ seed pairs.}
  \label{tab:hardanger}
  \centering
  \small
  \begin{tabular}{lcc}
    \toprule
    Condition                   & MCC $\uparrow$  & GMR $\uparrow$  \\
    \midrule
    TI-split regimes (ours)     & $\mathbf{0.806}$ & $\mathbf{0.877}$ \\
    One regime (control)        & $0.660$          & $0.813$          \\
    Shuffled TI labels (control) & $0.601$          & $0.830$          \\
    \bottomrule
  \end{tabular}
\end{table}


\section{Related work}
\label{sec:related}

Identifiable nonlinear ICA and temporal CRL show that auxiliary structure, such
as nonstationarity, temporal dependence, interventions, grouping, or environment
variation, can break nonlinear mixing ambiguities
\cite{hyvarinen2019nonlinear,khemakhem2020variational,yao2022learning,yao2022temporally,song2024causal,lippe2022citris,chen2024caring}.
Their theorem-level temporal results, however, are largely discrete-time, based
on finite-lag transitions, autoregressive mechanisms, or temporal context on a
grid. Continuous-time SDE identification and causal discovery usually assume
that the stochastic state is directly observed
\cite{wang2023generator,wangneural,guan2024identifying,manten2025signature,zweig2025towards};
the closest latent continuous-time work treats stochastic point processes or
system-level parameters rather than latent SDE state coordinates
\cite{ren2026causal,yao2024marrying,baumgartner2026disentangling}. We address
this gap by identifying the latent coordinate system and drift-Jacobian graph of
continuous-time SDE states observed through an unknown nonlinear map, using two
environments with shared drift and different diffusion covariances.
Appendix~\ref{app:extended_related_work} gives a fuller comparison.

\section{Discussion and limitations}
\label{sec:discussion}

This paper gives a theorem-level account of identifiable CRL for
continuous-time latent SDEs. The main message is that \emph{multi-regime
diffusion structure} can anchor the latent coordinate system: with shared drift
and diagonal diffusion covariances whose coordinate-wise variance ratios are
distinct, nonlinear observations identify the coordinates up to permutation and
scaling, and the instantaneous drift-Jacobian graph follows up to the same
permutation, without drift sparsity. This is a step toward CRL for continuous
latent dynamics, where most temporal identifiability results are discrete-time
and most continuous-time SDE identification assumes observed states. The
assumptions are intentionally simple: an invertible observation map, additive
state-independent diffusion, a shared drift, and separated diffusion regimes.
The diffeomorphism assumption is the strongest idealization, excluding
non-invertible or nonsmooth observation maps. Our LeakyReLU experiments and
Hardanger study probe robustness beyond the smooth square diffeomorphic setting.
Extending the theory to weaker observation maps, state-dependent noise, and
imperfect regime labels remains important future work.

\bibliographystyle{abbrv}
\bibliography{references}

\newpage
\appendix

\section{Notation}
\label{app:notation}

Table~\ref{tab:notation} summarizes the principal notation used throughout the paper.

\begin{table}[ht!]
  \caption{Summary of notation.}
  \label{tab:notation}
  \centering
  \small
  \setlength{\tabcolsep}{6pt}
  \renewcommand{\arraystretch}{1.15}
  \begin{tabular}{ll}
    \toprule
    \textbf{Symbol} & \textbf{Description} \\
    \midrule
    \multicolumn{2}{l}{\textit{Spaces and indices}} \\
    $d$                          & Dimension of the latent (and observed) state space \\
    $e \in \{1,2\}$             & Environment (regime) index \\
    $t \in [0, T]$              & Continuous time; $T$ is the time horizon \\
    $\Delta t$                   & Discretization step size \\
    \midrule
    \multicolumn{2}{l}{\textit{Processes}} \\
    $z_t^{(e)} \in \R^d$        & Latent state in environment $e$ at time $t$ \\
    $x_t^{(e)} \in \R^d$        & Observed state: $x_t^{(e)} = g(z_t^{(e)})$ \\
    $\tilde{z}_t^{(e)} \in \R^d$ & Alternative latent state under the tilded parametrization \\
    $W_t^{(e)}$                  & Standard $d$-dimensional Brownian motion \\
    \midrule
    \multicolumn{2}{l}{\textit{SDE components}} \\
    $f \colon \R^d \to \R^d$    & Latent drift function (shared across environments) \\
    $A \in \R^{d \times d}$     & Drift matrix in the linear (OU) case: $f(z) = Az$ \\
    $G_e \in \R^{d \times d}$   & Diffusion matrix in environment $e$ \\
    $\Sigma_e = G_e G_e^\top$   & Diffusion covariance in environment $e$ \\
    $\sigma_{e,i}^2$            & Diagonal entry of $\Sigma_e$ (variance of coordinate $i$ in environment $e$) \\
    \midrule
    \multicolumn{2}{l}{\textit{Mixing and coordinate change}} \\
    $g \colon \R^d \to \R^d$    & Unknown nonlinear observation map (mixing diffeomorphism) \\
    $\varphi = \tilde{g}^{-1} \circ g$ & Latent coordinate change between two parametrizations \\
    $L = \Pi\Lambda$            & Monomial matrix (permutation and coordinate-wise scaling) \\
    $\Pi$                        & Permutation matrix \\
    $\Lambda$                    & Diagonal scaling matrix \\
    $c \in \R^d$                & Constant shift: $\varphi(z) = Lz + c$ \\
    \midrule
    \multicolumn{2}{l}{\textit{Jacobians and causal graph}} \\
    $Dg(z)$                      & Jacobian of $g$ at $z$ \\
    $D\varphi(z)$               & Jacobian of the coordinate change $\varphi$ at $z$ \\
    $Df(z)$                      & Jacobian of the latent drift $f$ at $z$ \\
    $E_f(z)$                     & Instantaneous drift-Jacobian graph: $\{(j \to i) : (Df(z))_{ij} \neq 0\}$ \\
    $\hat{E}_f$                  & Estimated drift-Jacobian graph (from Stage~2) \\
    $\overline{|D\tilde{f}_\psi|}$ & Mean absolute Jacobian of the learned drift \\
    \midrule
    \multicolumn{2}{l}{\textit{Estimation}} \\
    $h_\theta \colon \R^d \to \R^d$ & Encoder mapping observations to learned coordinates \\
    $\tilde{f}_\psi \colon \R^d \to \R^d$ & Learned drift model \\
    $\tilde{\sigma}_{e,i}^2$    & Learned diffusion variance for coordinate $i$ in environment $e$ \\
    $\hat{v}_t$                  & Velocity target: $(h_\theta(x_{t+\Delta t'}) - h_\theta(x_t))/\Delta t'$ \\
    $\mathcal{L}_{\mathrm{S1}}$ & Stage~1 objective (transition negative log-likelihood) \\
    $\mathcal{L}_{\mathrm{S2}}$ & Stage~2 objective (velocity MSE $+$ Jacobian $L_1$ penalty) \\
    $\lambda_{\mathrm{sparse}}$ & Sparsity penalty weight (Stage~2) \\
    $\tau$                       & Threshold for graph extraction from $\overline{|D\tilde{f}_\psi|}$ \\
    \midrule
    \multicolumn{2}{l}{\textit{Evaluation metrics}} \\
    MCC                          & Mean correlation coefficient for learned/true coordinate alignment \\
    Mon.                         & Monomial score for the mean absolute encoder Jacobian \\
    GMR                          & Graph match rate for thresholded drift-Jacobian graphs \\
    \bottomrule
  \end{tabular}
\end{table}

\newpage
\section{Extended related work}
\label{app:extended_related_work}

\paragraph{Identifiable nonlinear ICA.}
General nonlinear mixing is not identifiable without extra structure \cite{hyvarinen1999nonlinear,locatello2019challenging}. Identifiable nonlinear ICA overcomes this via nonstationarity, temporal dependence, or auxiliary variables \cite{hyvarinen2016unsupervised,hyvarinen2017nonlinear,hyvarinen2019nonlinear,khemakhem2020variational}, with structured extensions to hidden Markov and structured dependent sources \cite{halva2020hidden,halva2021disentangling}. This line provides the main conceptual backdrop for CRL: the nonlinear mixing ambiguity can be broken once suitable side information is available.

\paragraph{Temporal CRL in discrete time.}
A closely related CRL line studies nonlinear mixtures of latent temporal processes in discrete time. LEAP identifies latent temporal causal processes under nonstationary noise \cite{yao2022learning}; TDRL studies nonparametric time-delayed causal processes \cite{yao2022temporally}; NCTRL addresses unknown nonstationarity \cite{song2023temporally}; and CtrlNS uses sparse-transition structure to recover nonstationary latent dynamics without known domain variables \cite{song2024causal}. A parallel branch uses stronger side information: CITRIS and iCITRIS leverage interventions \cite{lippe2022citris,lippecausal}, G-CaRL uses grouping information \cite{morioka2024causal}, and CaRiNG allows non-invertible temporal generation processes by using temporal context to recover lost information \cite{chen2024caring}. More recent extensions study instantaneous dependence via sparse latent dynamics \cite{liidentification}, hierarchical temporal latents \cite{litowards}, and continuously evolving mechanisms rather than discrete switches \cite{fan2026trace}. These papers are close in spirit because they identify latent causal processes from sequential observations, but their identifying arguments are built around discrete observations, transition factorizations, or auxiliary mechanism structure rather than continuous-time SDE diffusion structure. Our object of interest is a continuous-time latent SDE path, and our proof leverages diffusion covariance changes rather than lagged transition sparsity or intervention metadata.

\paragraph{Multi-environment identification and partial observability in CRL.}
Beyond temporal structure, environment heterogeneity and observation structure have emerged as important sources of identifiability in CRL. Multi-environment results address unknown interventions \cite{von2023nonparametric}, cross-domain invariance conditions \cite{ahuja2024multi}, general environment families \cite{zhang2024causal,varici2024general,ng2025causal}, and the limits of purely observational identification \cite{welch2024identifiability}. Multi-view and partial observability settings offer a complementary route by exploiting additional observation structure \cite{yao2024multi,xu2024sparsity}. Recent work has also sought unifying perspectives connecting invariance-based CRL formulations with exchangeability and causal discovery \cite{yaounifying,reizingeridentifiable}. Our work shares the multi-environment motivation, but the identifying structure is specifically continuous-time: the drift is invariant across environments while the diffusion covariance changes, and pairwise distinct diffusion-variance ratios pin down the latent coordinates up to permutation and scaling.

\paragraph{Continuous-time identification and causal discovery for stochastic processes.}
A separate literature studies what can be identified when the stochastic system itself is observed. Wang et al. analyze generator identifiability for linear SDEs \cite{wang2023generator}. SCOTCH studies observed-space structure learning with neural SDEs under irregular sampling \cite{wangneural}. Guan et al. identify linear drift, diffusion, and causal structure from temporal marginals \cite{guan2024identifying}. Manten et al. derive path-space conditional independence constraints and corresponding tests for SDE models \cite{manten2025signature}, and extend the graphical treatment to asymmetric independence on path spaces \cite{manten2025asymmetric}. Zweig et al. analyze identifiability of interventional SDEs from stationary distributions \cite{zweig2025towards}. Among continuous-time latent-process CRL papers, Ren et al. \cite{ren2026causal} study identifiable CRL for continuous-time latent stochastic point processes, not latent SDE state coordinates. Closely related dynamical-systems papers connect CRL with trajectory-specific or system-parameter identification \cite{yao2024marrying,baumgartner2026disentangling}. Our paper differs from all of these in the recovery target: we identify continuously evolving latent \emph{state coordinates} and their instantaneous Jacobian graph from unknown nonlinear mixtures, using two environments with shared drift and changed diffusion.

\newpage
\section{Deferred technical material}
\label{app:deferred_proofs}

\subsection{Auxiliary lemmas}
\label{app:aux_lemmas}

\begin{lemma}[Orthogonal conjugation of a diagonal matrix with simple spectrum]
\label{lem:orth_conj_simple_diag}
Let $K=\mathrm{diag}(k_1,\dots,k_d)\in\R^{d\times d}$ have real pairwise
distinct diagonal entries, $k_i\neq k_j$ for all $i\neq j$.
Let $Q\in \R^{d\times d}$ be an orthogonal matrix.
If $QKQ^\top$ is diagonal, then $Q$ is a signed permutation matrix: there
exists a permutation $\pi$ and signs $s_i\in\{\pm 1\}$ such that
\[
Q e_i = s_i e_{\pi(i)}\quad\text{for all }i,
\]
equivalently $Q = \Pi S$ where $\Pi$ is a permutation matrix and $S$ is diagonal
with $\pm 1$ entries. In particular, $QKQ^\top = \Pi K \Pi^\top$.
\end{lemma}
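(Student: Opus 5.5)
Write $D:=QKQ^\top$, which is diagonal by hypothesis. The plan is to compare eigenvectors. Rewrite the hypothesis as $QK=DQ$. Reading this identity column by column gives
\[
D(Qe_i)=QKe_i=k_i\,Qe_i ,
\]
so each column $q_i:=Qe_i$ is an eigenvector of the diagonal matrix $D$ with eigenvalue $k_i$.

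Next, identify the spectrum of $D$. Since $D$ is orthogonally similar to $K$, its diagonal entries are a rearrangement of $k_1,\dots,k_d$: they are its eigenvalues, counted with multiplicity. These values are pairwise distinct, so each eigenspace of $D$ is one-dimensional. Because $D$ is diagonal, the eigenspace for eigenvalue $k_i$ is spanned by the single standard basis vector $e_{\pi(i)}$, where $\pi(i)$ is the unique index with $D_{\pi(i)\pi(i)}=k_i$.

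The map $\pi$ is a permutation. It is injective because the $k_i$ are distinct, and surjective because the diagonal of $D$ is a rearrangement of the $k_i$. It follows that $q_i=s_ie_{\pi(i)}$ for some scalar $s_i$. Orthogonality of $Q$ gives $\|q_i\|=1$, so $s_i\in\{\pm1\}$. Collecting columns yields $Q=\Pi S$, with $\Pi$ the permutation matrix satisfying $\Pi e_i=e_{\pi(i)}$ and $S=\diag(s_1,\dots,s_d)$.

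For the final identity, note that $S$ commutes with the diagonal matrix $K$ and satisfies $S^2=I$. Hence
\[
QKQ^\top=\Pi SKS\Pi^\top=\Pi K\Pi^\top .
\]

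The only delicate step is showing that the eigenspaces of $D$ are exactly the coordinate axes. This is where the simple-spectrum hypothesis is essential: with a repeated eigenvalue, a rotation inside the corresponding eigenspace would give a counterexample. Everything else in the argument is routine linear algebra.
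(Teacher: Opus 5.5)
Your proof is correct and follows essentially the same route as the paper's. Both arguments show that $Qe_i$ is an eigenvector of the diagonal matrix $QKQ^\top$ with eigenvalue $k_i$, use the simple spectrum to confine each eigenspace to a single coordinate axis $\mathrm{span}\{e_{\pi(i)}\}$, and get the signs from $\|Qe_i\|=1$; your computation $SKS=KS^2=K$ simply spells out the sign cancellation that the paper states in one line.
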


\begin{proof}
Because $K$ is diagonal with pairwise distinct diagonal entries, its eigenvalues
are exactly $\{k_1,\dots,k_d\}$ with algebraic (and geometric) multiplicity $1$
each. Hence the eigenspace associated with $k_i$ is one-dimensional and equals
$\mathrm{span}\{e_i\}$.

Consider $\widehat K := QKQ^\top$. Since $Q$ is orthogonal, $\widehat K$ is
similar to $K$, so it has the same multiset of eigenvalues and hence also has
$d$ distinct eigenvalues. Because $\widehat K$ is diagonal by assumption, each
eigenvalue appears as a diagonal entry of $\widehat K$, and its eigenspace is
spanned by a standard basis vector. Concretely, there exists a permutation
$\pi$ such that $\widehat K_{\pi(i)\pi(i)}=k_i$ for all $i$.

Now note that
\[
\widehat K(Qe_i) = QK e_i = Q(k_i e_i) = k_i (Qe_i),
\]
so $Qe_i$ is an eigenvector of $\widehat K$ with eigenvalue $k_i$. But the
eigenspace of $\widehat K$ associated with $k_i$ is
$\mathrm{span}\{e_{\pi(i)}\}$, therefore $Qe_i \in
\mathrm{span}\{e_{\pi(i)}\}$. Since $Q$ is orthogonal,
$\|Qe_i\|=\|e_i\|=1$, hence $Qe_i = s_i e_{\pi(i)}$ with
$s_i\in\{\pm 1\}$. This shows $Q$ is a signed permutation matrix, and the
final statement $QKQ^\top=\Pi K\Pi^\top$ follows because the sign diagonal
cancels in conjugation.
\end{proof}

\begin{lemma}[Innovation representation in the natural filtration]
\label{lem:innovation}
Let $(z_t)_{t\in[0,T]}$ be a solution (strong or weak) on some filtered
probability space $(\Omega,\mathcal{F},(\mathcal{F}_t),\mathbb{P})$ to
\[
dz_t = f(z_t)\,dt + G\,dW_t,
\]
where $f:\R^d\to\R^d$ is continuous, $G\in\R^{d\times d}$ is constant, and
$\Sigma:=GG^\top$ is positive definite. Let $(\mathcal{F}_t^z)$ denote the
usual augmentation (completed and right-continuous) of the natural filtration
$\sigma(z_s:0\le s\le t)$. Then the process
\[
\hat W_t := \Sigma^{-1/2}\!\Big(z_t-z_0-\int_0^t f(z_s)\,ds\Big),
\qquad t\in[0,T],
\]
is a standard $d$-dimensional Brownian motion with respect to
$(\mathcal{F}_t^z)$, and $z$ satisfies
\[
dz_t = f(z_t)\,dt + \Sigma^{1/2}\,d\hat W_t
\]
in the natural filtration $(\mathcal{F}_t^z)$.
\end{lemma}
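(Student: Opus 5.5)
The plan is to identify $\hat W$ as the Brownian motion obtained by Lévy's characterization, applied in the smaller filtration $(\mathcal{F}_t^z)$. Define $M_t := z_t - z_0 - \int_0^t f(z_s)\,ds$. Because $f$ is continuous and $z$ has continuous paths, the integrand $s\mapsto f(z_s)$ is continuous, so the Riemann integral exists and is a continuous process. The SDE gives $M_t = G W_t$ almost surely for all $t$.

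Under $\mathbb{P}$, $M$ is then a continuous $(\mathcal{F}_t)$-martingale. It is square integrable with bounded second moments, since $\E|GW_t|^2 = t\,\mathrm{tr}\,\Sigma$. Its quadratic covariation is $\langle M^i,M^j\rangle_t = (GG^\top)_{ij}t = \Sigma_{ij}t$.

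The key structural observation is that $M_t$ is a measurable functional of the path $(z_s)_{s\le t}$. Indeed, $z_t$, $z_0$ and $\int_0^t f(z_s)\,ds$ are each measurable with respect to $\sigma(z_s: s\le t)$; the integral is a limit of Riemann sums in $f(z_{s_k})$. Hence $M$ is adapted to the natural filtration, and a fortiori to its usual augmentation $(\mathcal{F}_t^z)$.

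We also need the inclusion $\mathcal{F}_t^z\subseteq\mathcal{F}_t$, possibly after augmenting $(\mathcal{F}_t)$ itself. We may assume this, since a weak solution's filtration can be taken to satisfy the usual conditions and $z$ is $(\mathcal{F}_t)$-adapted.

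A martingale with respect to a larger filtration, adapted to a smaller one, remains a martingale in the smaller one by the tower property. For $s<t$ and $B\in\mathcal{F}_s^z\subseteq\mathcal{F}_s$ we have $\E[(M_t-M_s)\mathbf{1}_B]=0$. Completion and right-continuity cause no issue: completion only adds null sets, and for right-continuity we pass to the limit using $L^2$-continuity of $M$.

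Likewise, $M^iM^j-\Sigma_{ij}t$ is an $(\mathcal{F}_t)$-martingale adapted to $(\mathcal{F}_t^z)$, so it is an $(\mathcal{F}_t^z)$-martingale. Thus the quadratic covariation of $M$ computed in $(\mathcal{F}_t^z)$ is still $\Sigma t$. This is also clear because quadratic variation is a pathwise limit of sums of squared increments and does not depend on the filtration.

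Now set $\hat W=\Sigma^{-1/2}M$. This is well defined because $\Sigma$ is positive definite, and it is a continuous $(\mathcal{F}_t^z)$-local martingale starting at $0$ with
\[
\langle \hat W\rangle_t=\Sigma^{-1/2}\Sigma\Sigma^{-1/2}t=I_d\,t.
\]
By Lévy's characterization, $\hat W$ is a standard $d$-dimensional $(\mathcal{F}_t^z)$-Brownian motion. Rearranging the definition gives $z_t=z_0+\int_0^t f(z_s)\,ds+\Sigma^{1/2}\hat W_t$, which is the claimed SDE in the natural filtration.

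I expect the only delicate point to be the filtration bookkeeping: justifying $\mathcal{F}^z_t\subseteq\mathcal{F}_t$ under the usual augmentation, and the measurability of the drift integral. If needed, I would handle the right-continuity step by first proving the martingale property for the raw natural filtration. I would then extend it to $\mathcal{F}^z_{s+}$ via backward martingale convergence, using continuity of paths and the uniform integrability of $M$ on $[0,T]$.
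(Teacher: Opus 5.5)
Your proposal is correct and follows the paper's proof essentially step for step. You write $M_t=z_t-z_0-\int_0^t f(z_s)\,ds=GW_t$, note that it is $(\mathcal{F}_t^z)$-adapted, transfer the martingale property and $\langle M\rangle_t=\Sigma t$ to the smaller filtration by the tower property, and conclude with L\'evy's characterization for $\hat W=\Sigma^{-1/2}M$. Your extra bookkeeping on the usual augmentation (augmenting $(\mathcal{F}_t)$, or extending from the raw natural filtration to $\mathcal{F}^z_{s+}$ via path continuity and uniform integrability) justifies the inclusion $\mathcal{F}_t^z\subseteq\mathcal{F}_t$, which the paper simply asserts.
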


\begin{proof}
Set $M_t := z_t-z_0-\int_0^t f(z_s)\,ds = \int_0^t G\,dW_s$. Then $M$ is a
continuous $\mathcal{F}_t$-martingale with $\langle M\rangle_t = \Sigma t$, and
$M_t$ is $\mathcal{F}_t^z$-adapted because it is a measurable function of the
path of $z$. Since $\mathcal{F}_t^z \subset \mathcal{F}_t$, the tower property
gives, for $s\le t$,
\[
\E[M_t\mid \mathcal{F}_s^z]
= \E\!\big[\E[M_t\mid \mathcal{F}_s]\mid \mathcal{F}_s^z\big]
= \E[M_s\mid \mathcal{F}_s^z]
= M_s,
\]
so $M$ is a continuous $\mathcal{F}_t^z$-martingale. Setting
$\hat W_t := \Sigma^{-1/2}M_t$, we have
$\langle \hat W\rangle_t = \Sigma^{-1/2}\Sigma\Sigma^{-1/2} t = It$.
By L\'evy's characterization theorem, $\hat W$ is a standard
$\mathcal{F}_t^z$-Brownian motion.
\end{proof}

\begin{remark}[Common natural filtration under diffeomorphic reparameterization]
\label{rem:common_filtration}
If $\tilde z_t = \varphi(z_t)$ where $\varphi:\R^d\to\R^d$ is a homeomorphism
(in particular, a diffeomorphism), then
$\mathcal{F}_t^z = \mathcal{F}_t^{\tilde z}$: indeed
$\tilde z_s = \varphi(z_s)$ gives $\mathcal{F}_t^{\tilde z}\subset
\mathcal{F}_t^z$, and $z_s = \varphi^{-1}(\tilde z_s)$ gives
$\mathcal{F}_t^z\subset \mathcal{F}_t^{\tilde z}$. In particular, a process
that is a (local) martingale with respect to one filtration is automatically a
(local) martingale with respect to the other.
\end{remark}

\subsection{Proof of Theorem~\ref{thm:ou_two_regimes_monomial}}
\label{app:proof_ou}

\begin{proof}[Proof of Theorem~\ref{thm:ou_two_regimes_monomial}]
We prove that equality in law of observed OU trajectories across two diffusion
regimes forces the latent reparameterization
$\varphi=\tilde g^{-1}\circ g$ to be affine with a constant monomial Jacobian.

\medskip
\noindent\textbf{Step 0: Coupling and a pathwise state relation.}

Fix a regime $e\in\{1,2\}$. By (S2), the observed trajectories
$\{x_t^{(e)}\}_{t\in[0,T]}$ and $\{\tilde x_t^{(e)}\}_{t\in[0,T]}$ have the
same law on the Polish path space $C([0,T];\R^d)$. We may therefore realize
both as the canonical coordinate process on $C([0,T];\R^d)$ equipped with
their common law, yielding a single process $\{X_t^{(e)}\}_{t\in[0,T]}$
satisfying
\[
X^{(e)} = x^{(e)} = \tilde x^{(e)}
\quad \text{a.s. as elements of } C([0,T];\R^d).
\]
Since $g$ and $\tilde g$ are diffeomorphisms $\R^d\to\R^d$ by (S1), we
can invert $x_t^{(e)}=g(z_t^{(e)})$ and
$\tilde x_t^{(e)}=\tilde g(\tilde z_t^{(e)})$ pointwise to obtain
\[
  z_t^{(e)} = g^{-1}(X_t^{(e)}),
  \qquad
  \tilde z_t^{(e)} = \tilde g^{-1}(X_t^{(e)}),
  \qquad t\in[0,T].
\]
Therefore, with $\varphi:=\tilde g^{-1}\circ g$,
\begin{equation}
\label{eq:pathwise_phi_relation_sde}
  \tilde z_t^{(e)} = \varphi(z_t^{(e)})
  \qquad \text{a.s. for all } t\in[0,T].
\end{equation}

\medskip
\noindent\textbf{Step 1: Quadratic variation identifies an equation for $D\varphi$.}

Fix a regime $e$. Recall that under regime $e$ the latent OU satisfies
\[
dz_t^{(e)} = A z_t^{(e)}\,dt + G_e\,dW_t^{(e)},
\qquad
\Sigma_e := G_eG_e^\top.
\]
Since $\varphi\in C^2$ by (S1), It\^o's formula applies to
$\tilde z_t^{(e)}=\varphi(z_t^{(e)})$ and gives
\begin{align*}
d\tilde z_t^{(e)}
&=
D\varphi(z_t^{(e)})\,dz_t^{(e)}
+
\frac{1}{2}\sum_{k,\ell=1}^d (\Sigma_e)_{k\ell}
\,\partial_{k\ell}\varphi(z_t^{(e)})\,dt \\
&=
\Big(D\varphi(z_t^{(e)})A z_t^{(e)}
+\frac{1}{2}\sum_{k,\ell=1}^d (\Sigma_e)_{k\ell}
\,\partial_{k\ell}\varphi(z_t^{(e)})\Big)dt
+
D\varphi(z_t^{(e)})G_e\,dW_t^{(e)}.
\end{align*}

If a $d$-dimensional process $Y_t$ has the form
$dY_t = b_t\,dt + H_t\,dW_t$, then only the noise term contributes to
quadratic variation, and
\[
d\langle Y\rangle_t = H_tH_t^\top\,dt.
\]
Applying this with $Y_t=\tilde z_t^{(e)}$ and
$H_t=D\varphi(z_t^{(e)})G_e$, we obtain
\begin{equation}
\label{eq:QV_phi1}
  d\langle \tilde z^{(e)}\rangle_t
  =
  D\varphi(z_t^{(e)})\,\Sigma_e\,D\varphi(z_t^{(e)})^\top \, dt.
\end{equation}
On the other hand, under the alternative OU model \eqref{eq:ou_alt}, the
diffusion covariance is constant and equals
$\tilde\Sigma_e=\tilde G_e\tilde G_e^\top$, hence
\begin{equation}
\label{eq:QV_tilde}
  d\langle \tilde z^{(e)}\rangle_t = \tilde\Sigma_e\,dt.
\end{equation}
Equating \eqref{eq:QV_phi1} and \eqref{eq:QV_tilde} yields
\begin{equation}
\label{eq:Dphi_congruence_along_path}
  D\varphi(z_t^{(e)})\,\Sigma_e\,D\varphi(z_t^{(e)})^\top
  =
  \tilde\Sigma_e,
\end{equation}
holding for Lebesgue-a.e. $t\in[0,T]$ and almost surely.

Define the matrix-valued function along each sample path
\[
F(t,\omega):=D\varphi(z_t^{(e)}(\omega))\,\Sigma_e\,D\varphi(z_t^{(e)}(\omega))^\top.
\]
Because $t\mapsto z_t^{(e)}$ is continuous a.s. and $D\varphi$ is continuous,
$t\mapsto F(t,\omega)$ is continuous on $[0,T]$ for almost every $\omega$.
Moreover, \eqref{eq:Dphi_congruence_along_path} holds for a.e. $t$ and a.s. in
$\omega$; by a standard Fubini-type argument, there exists an event $\Omega_0$
with $\mathbb{P}(\Omega_0)=1$ such that for each $\omega\in\Omega_0$,
\eqref{eq:Dphi_congruence_along_path} holds for Lebesgue-a.e. $t\in[0,T]$.
Fix such an $\omega$. Since $F(\cdot,\omega)$ is continuous and equals the
constant matrix $\tilde\Sigma_e$ for a.e. $t$, it must equal $\tilde\Sigma_e$
for all $t\in[0,T]$. Therefore
\eqref{eq:Dphi_congruence_along_path} holds for all $t\in[0,T]$ a.s.

\medskip
\noindent\textbf{Step 2: Strictly positive density upgrades the pathwise identity to a pointwise identity on $\R^d$.}

Fix $e\in\{1,2\}$ and any $t_\star\in(0,T]$. Define the continuous
matrix-valued function on $\R^d$ by
\[
B_e(z):=D\varphi(z)\,\Sigma_e\,D\varphi(z)^\top-\tilde\Sigma_e.
\]
Evaluating \eqref{eq:Dphi_congruence_along_path} at $t=t_\star$ gives
$B_e(z_{t_\star}^{(e)})=0$ a.s.

We claim this implies $B_e(z)\equiv 0$ on $\R^d$. Suppose not. Then there
exists $z_0\in\R^d$ with $B_e(z_0)\neq 0$. By continuity of $B_e$, there exist
$\varepsilon>0$ and an open neighborhood $V$ of $z_0$ such that
$\|B_e(z)\|_F>\varepsilon$ for all $z\in V$. Since $V$ is open, it has positive
Lebesgue measure. Now, the OU solution satisfies
$z_{t_\star}^{(e)}=e^{At_\star}z_0^{(e)}+\xi_{t_\star}^{(e)}$, where
$\xi_{t_\star}^{(e)}:=\int_0^{t_\star} e^{A(t_\star-s)}G_e\,dW_s^{(e)}$ is a
zero-mean Gaussian with covariance
\[
P_{t_\star}
=
\int_0^{t_\star} e^{A(t_\star-s)}\Sigma_e e^{A^\top(t_\star-s)}\,ds \succ 0.
\]
Indeed, for any $v\neq 0$,
\[
v^\top P_{t_\star}v
=
\int_0^{t_\star}\|\Sigma_e^{1/2}e^{A^\top(t_\star-s)}v\|^2\,ds > 0,
\]
since $\Sigma_e^{1/2}$ is invertible and
$s\mapsto e^{A^\top(t_\star-s)}v$ is continuous and nonvanishing. Hence the
conditional density $p(z_{t_\star}^{(e)}\mid z_0^{(e)})$ is strictly positive on
all of $\R^d$ for every value of $z_0^{(e)}$, and therefore the marginal
density of $z_{t_\star}^{(e)}$ is also strictly positive on $\R^d$. Thus
\[
\mathbb{P}(z_{t_\star}^{(e)}\in V)>0,
\]
contradicting $B_e(z_{t_\star}^{(e)})=0$ a.s. Hence $B_e\equiv 0$ on $\R^d$,
i.e.
\begin{equation}
\label{eq:Dphi_congruence_on_U}
  D\varphi(z)\,\Sigma_e\,D\varphi(z)^\top = \tilde\Sigma_e,
  \qquad \forall z\in \R^d,\ \ e\in\{1,2\}.
\end{equation}

\medskip
\noindent\textbf{Step 3: Reduce to an orthogonal conjugation constraint.}

Use \eqref{eq:Dphi_congruence_on_U} with $e=1$. Since $\Sigma_1$ and
$\tilde\Sigma_1$ are diagonal positive definite by (S3), their square
roots $\Sigma_1^{1/2}$ and $\tilde\Sigma_1^{1/2}$ are well-defined diagonal
positive definite. Define
\[
  M(z) := \tilde\Sigma_1^{-1/2}\,D\varphi(z)\,\Sigma_1^{1/2}.
\]
Then $M(z)M(z)^\top=I$, so $M(z)\in \R^{d\times d}$ is an orthogonal matrix for
all $z\in\R^d$. Using \eqref{eq:Dphi_congruence_on_U} with $e=2$ yields
\begin{equation}
\label{eq:orth_conj_K_to_Ktilde}
  M(z)\,K\,M(z)^\top = \tilde K,
\end{equation}
where
\[
  K:=\Sigma_1^{-1/2}\Sigma_2\Sigma_1^{-1/2}
  = \mathrm{diag}\!\Big(\frac{\sigma_{2,1}^2}{\sigma_{1,1}^2},\dots,\frac{\sigma_{2,d}^2}{\sigma_{1,d}^2}\Big),
\qquad
  \tilde K:=\tilde\Sigma_1^{-1/2}\tilde\Sigma_2\tilde\Sigma_1^{-1/2},
\]
and $\tilde K$ is diagonal positive definite. By (S4), $K$ has simple
spectrum.

\medskip
\noindent\textbf{Step 4: Conclude $M(z)$ is a signed permutation, hence $D\varphi(z)$ is monomial.}

Because $M(z)\in \R^{d\times d}$ is an orthogonal matrix and
$M(z)KM(z)^\top$ is diagonal, Lemma~\ref{lem:orth_conj_simple_diag} implies that
$M(z)$ is a signed permutation matrix for each $z\in\R^d$. Hence
\[
  D\varphi(z)=\tilde\Sigma_1^{1/2} M(z)\Sigma_1^{-1/2}
\]
is monomial for each $z\in\R^d$.

\medskip
\noindent\textbf{Step 5: Connectedness forces $D\varphi$ to be constant on $\R^d$.}

The map $z\mapsto M(z)$ is continuous on $\R^d$ and takes values in a finite
(discrete) set (signed permutations), so it must be constant on connected
$\R^d$. Thus $M(z)\equiv M_0$ for a fixed signed permutation $M_0$, and
therefore
\[
  D\varphi(z)\equiv L := \tilde\Sigma_1^{1/2}M_0\Sigma_1^{-1/2}
  \qquad \forall z\in\R^d.
\]
Since $\Sigma_1^{1/2}$ and $\tilde\Sigma_1^{1/2}$ are diagonal and $M_0$ is a
signed permutation, $L$ is a constant monomial matrix.

\medskip
\noindent\textbf{Step 6: Constant Jacobian implies $\varphi$ is affine; identify $\tilde A$.}

Since $D\varphi\equiv L$ on $\R^d$, $\varphi$ is affine:
\[
\varphi(z)=Lz+c,
\qquad z\in\R^d,
\]
for some constant $c\in\R^d$.

Fix a regime $e$. Since $\varphi$ is a global $C^2$ diffeomorphism
by (S1), It\^o's formula applied to
$\tilde z_t^{(e)} = \varphi(z_t^{(e)})$ gives
\begin{align*}
d\tilde z_t^{(e)}
&= D\varphi(z_t^{(e)})\,dz_t^{(e)}
 + \frac{1}{2} \sum_{k,\ell=1}^d (\Sigma_e)_{k\ell}
   \partial_{k\ell}\varphi(z_t^{(e)})\,dt \\
&= \Big(D\varphi(z_t^{(e)})A z_t^{(e)}
 + \tfrac{1}{2}\,\Sigma_e : D^2\varphi(z_t^{(e)})\Big)\,dt
 + D\varphi(z_t^{(e)})G_e\,dW_t^{(e)},
\end{align*}
where $\Sigma_e : D^2\varphi$ denotes
$\sum_{k,\ell}(\Sigma_e)_{k\ell}\partial_{k\ell}\varphi$.

On the other hand, the alternative OU model gives
\[
d\tilde z_t^{(e)}=\tilde A \tilde z_t^{(e)}\,dt + \tilde G_e\,d\tilde W_t^{(e)}.
\]

By Lemma~\ref{lem:innovation}, we may replace $G_e\,dW_t^{(e)}$ in the It\^o
expansion with $\Sigma_e^{1/2}\,d\hat W_t^{(e)}$, where $\hat W^{(e)}$ is a
standard $\mathcal{F}_t^{z^{(e)}}$-Brownian motion. The noise term in the
expansion thus becomes
$D\varphi(z_t^{(e)})\Sigma_e^{1/2}\,d\hat W_t^{(e)}$, a continuous
$\mathcal{F}_t^{z^{(e)}}$-local martingale. Likewise,
Lemma~\ref{lem:innovation} applied to the alternative model gives
$d\tilde z_t^{(e)} = \tilde A\tilde z_t^{(e)}\,dt + \tilde\Sigma_e^{1/2}
\,d\hat{\tilde W}_t^{(e)}$ with $\hat{\tilde W}^{(e)}$ an
$\mathcal{F}_t^{\tilde z^{(e)}}$-Brownian motion. By
Remark~\ref{rem:common_filtration},
$\mathcal{F}_t^{z^{(e)}} = \mathcal{F}_t^{\tilde z^{(e)}}$ since
$\tilde z^{(e)} = \varphi(z^{(e)})$ with $\varphi$ a diffeomorphism, so both
noise integrals are continuous local martingales in the common natural
filtration. Subtracting the two representations of $d\tilde z^{(e)}$, the
difference has the form $\int_0^t H(s)\,ds + M_t$, where
$\int_0^t H(s)\,ds$ is of finite variation and $M_t$ is a continuous local
martingale in the common filtration. A continuous process that is both of
finite variation and a local martingale must be constant; since it starts at
$0$, it is identically $0$. Hence $H(t)=0$ for Lebesgue-a.e. $t\in[0,T]$,
almost surely. Comparing drift terms yields
\[
\tilde A\,\tilde z_t^{(e)}
= D\varphi(z_t^{(e)})A z_t^{(e)}
 + \tfrac{1}{2}\,\Sigma_e : D^2\varphi(z_t^{(e)})
\quad\text{for a.e. } t\ \text{and a.s.}
\]
Define
\[
H(t):=\tilde A\,\tilde z_t^{(e)}-
\Big(D\varphi(z_t^{(e)})A z_t^{(e)}+\tfrac12\,\Sigma_e:D^2\varphi(z_t^{(e)})\Big).
\]
We have shown that $H(t)=0$ for a.e. $t\in[0,T]$, a.s. Since
$t\mapsto z_t^{(e)}$ and $t\mapsto \tilde z_t^{(e)}$ are a.s. continuous and
$\varphi\in C^2$, the map $t\mapsto H(t)$ is a.s. continuous. Hence
$H(t)=0$ for all $t\in[0,T]$, a.s. In particular, fix any $t_\star\in(0,T]$;
the identity holds at $t=t_\star$.

Since $D\varphi \equiv L$ and $D^2\varphi \equiv 0$ on $\R^d$ (from Steps~4--5),
evaluating at $t = t_\star$ gives
\[
\tilde A\,\tilde z_{t_\star}^{(e)} = L A z_{t_\star}^{(e)} \quad \text{a.s.}
\]
Since $\tilde z_{t_\star}^{(e)} = \varphi(z_{t_\star}^{(e)}) = Lz_{t_\star}^{(e)}+c$,
this becomes
\[
\tilde A(Lz_{t_\star}^{(e)} + c) = L A z_{t_\star}^{(e)} \quad \text{a.s.}
\]

We now upgrade this to a pointwise identity on $\R^d$. Define the continuous
function $F(z):=\tilde A(Lz+c)-LAz$ on $\R^d$. We have shown
$F(z_{t_\star}^{(e)})=0$ a.s. If $F(z_0)\neq 0$ for some $z_0\in\R^d$, then by
continuity there exists an open set $V\ni z_0$ with $\|F(z)\|>0$ for all
$z\in V$, contradicting $F(z_{t_\star}^{(e)})=0$ a.s. since
$z_{t_\star}^{(e)}$ has a strictly positive density on $\R^d$ (as shown in
Step~2). Therefore,
\[
\tilde A(Lz+c)=LAz
\qquad \forall z\in\R^d.
\]
Expanding the left-hand side gives $\tilde A L z + \tilde A c =LAz$ for all
$z\in\R^d$. Matching coefficients of $z$ and constant terms yields
\[
\tilde A L=LA,
\qquad
\tilde A c=0.
\]
Finally, $\tilde\Sigma_e=L\Sigma_eL^\top$ follows from
\eqref{eq:Dphi_congruence_on_U} with $D\varphi\equiv L$.
Since $L=\Pi\Lambda$ with $\Lambda$ invertible diagonal,
\[
  \tilde A = LAL^{-1}
  = \Pi\big(\Lambda A\Lambda^{-1}\big)\Pi^{-1}.
\]
Left and right multiplication by invertible diagonal matrices preserves the
zero pattern of $A$, while conjugation by $\Pi$ relabels coordinates by the
permutation $\pi$. Therefore
\[
(j\to i)\in E_A
\quad \Longleftrightarrow \quad
(\pi(j)\to \pi(i))\in E_{\tilde A},
\]
which proves the graph-support statement.
\end{proof}

\subsection{Open-set positivity for additive-noise SDEs}
\label{app:open_set_positivity}

\begin{lemma}[Open-set positivity under nondegenerate additive noise]
\label{lem:open_set_positivity}
Let $(z_t)_{t\in[0,T]}$ be a weak solution with continuous sample paths to the
SDE
\[
dz_t = f(z_t)\,dt + \Sigma^{1/2}\,dW_t,
\]
where $f:\R^d\to\R^d$ is continuous, $\Sigma\in\R^{d\times d}$ is positive
definite, and the initial condition $z_0$ has an arbitrary law $\mu$ on $\R^d$.
Then for every $t\in(0,T]$ and every nonempty open set $V\subset\R^d$,
\[
\mathbb{P}(z_t\in V)>0.
\]
Equivalently, the law of $z_t$ charges every nonempty open subset of $\R^d$.
\end{lemma}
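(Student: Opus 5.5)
The plan is to prove the lemma by a \emph{last-window steering} argument: use the Brownian increment on a short final interval $[t-h,t]$ to push the state into $V$. Continuity of $f$ enters only through a uniform bound on a compact set, so no Lipschitz condition, Girsanov transform or transition density is needed.

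Fix $t\in(0,T]$ and a nonempty open $V$. Choose $y\in V$ and $r>0$ with $B(y,r)\subset V$. Let $(\Omega,\mathcal F,(\mathcal F_s),\mathbb P)$ be the filtered space carrying the weak solution, so that $W$ is an $(\mathcal F_s)$-Brownian motion and
\[
z_s=z_0+\int_0^s f(z_u)\,du+\Sigma^{1/2}W_s .
\]
The steps, in order, are as follows.

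\textbf{Step 1 (tightness at time $t-h$).} Because $z$ has continuous paths, $\sup_{s\le t}|z_s|<\infty$ almost surely. Hence there is an $R$, not depending on $h$, such that
\[
E_R:=\Big\{\sup_{s\le t}|z_s|\le R\Big\}\quad\text{has}\quad \mathbb P(E_R)\ge \tfrac12 .
\]
It follows that $\mathbb P(|z_{t-h}|\le R)\ge\tfrac12$ for every $h\in(0,t]$.

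\textbf{Step 2 (fix the compact set before $h$).} Set
\[
C:=\|\Sigma^{-1/2}\|\,(R+|y|),\qquad K:=\overline B\big(0,\;R+1+\|\Sigma^{1/2}\|(C+1)\big),\qquad M:=\sup_K|f|<\infty .
\]
Then choose $\delta\in(0,1]$ and $h\in(0,t]$ so that $\|\Sigma^{1/2}\|\delta<r/2$, $Mh<r/2$ and $Mh<1$.

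\textbf{Step 3 (steering event).} Define the $\mathcal F_{t-h}$-measurable target increment
\[
v:=\Sigma^{-1/2}(y-z_{t-h}),
\]
which satisfies $|v|\le C$ on $\{|z_{t-h}|\le R\}$. Define the event
\[
G:=\Big\{\sup_{u\in[0,h]}\big|W_{t-h+u}-W_{t-h}-\tfrac{u}{h}v\big|\le\delta\Big\}.
\]
The increment process $\beta_u:=W_{t-h+u}-W_{t-h}$ is a Brownian motion independent of $\mathcal F_{t-h}$. Conditionally on $\mathcal F_{t-h}$, therefore, $\mathbb P(G\mid\mathcal F_{t-h})=p(v)$, where
\[
p(w):=\mathbb P\Big(\sup_{u\le h}|\beta_u-\tfrac{u}{h}w|\le\delta\Big).
\]
I claim $\inf_{|w|\le C}p(w)>0$. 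For each fixed $w$ the linear path $u\mapsto \tfrac{u}{h}w$ lies in the Cameron--Martin space. By the Cameron--Martin formula, $p(w)$ equals the small-ball probability $\mathbb P(\sup_{u\le h}|\beta_u|\le\delta)>0$ weighted by an explicit positive density factor. That factor is bounded below uniformly for $|w|\le C$, because its exponent $\tfrac{w}{h}\cdot\beta_h-\tfrac{|w|^2}{2h}$ is bounded on the small ball. This gives
\[
\mathbb P\big(G\cap\{|z_{t-h}|\le R\}\big)\ \ge\ \tfrac12\inf_{|w|\le C}p(w)\ >\ 0 .
\]

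\textbf{Step 4 (pathwise estimate on the event).} On $G\cap\{|z_{t-h}|\le R\}$, let $\tau$ be the first time after $t-h$ at which $z$ leaves $K$. Up to $\tau\wedge t$,
\[
|z_s-z_{t-h}|\le Mh+\|\Sigma^{1/2}\|(C+\delta)<1+\|\Sigma^{1/2}\|(C+1).
\]
So $z$ stays in the interior of $K$, and hence $\tau\ge t$. At time $t$, using $\Sigma^{1/2}v=y-z_{t-h}$,
\[
z_t-y=\int_{t-h}^t f(z_u)\,du+\Sigma^{1/2}\big(W_t-W_{t-h}-v\big),
\]
and therefore
\[
|z_t-y|\le Mh+\|\Sigma^{1/2}\|\delta<r .
\]
Thus $z_t\in V$ on an event of positive probability, which proves the lemma.

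The main obstacle is the absence of any Lipschitz or growth control on $f$. A tube argument that tracks a deterministic skeleton path over the whole interval $[0,t]$ via Gronwall fails, because the modulus of continuity $\omega(\eta)$ of $f$ need not be $O(\eta)$. Steering only in the last window sidesteps this: the drift is merely bounded by $M$ and its contribution is killed by taking $h$ small. The delicate point is the order of choices. The compact set $K$, and hence $M$, must be fixed \emph{before} $h$; this works because the target increment $v$ is bounded by $C$ independently of $h$. Care is also needed with the uniform lower bound on $p(w)$ over $|w|\le C$ and with the independence of $\beta$ from $\mathcal F_{t-h}$, which holds because $W$ is an $(\mathcal F_s)$-Brownian motion in the weak solution. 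Nothing beyond weak existence with continuous paths and positive definiteness of $\Sigma$ is used, so the argument covers an arbitrary initial law $\mu$.
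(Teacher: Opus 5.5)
Your proof is correct, but it reaches the conclusion by a different route from the paper.

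\textbf{The paper's route.} The paper first fixes a deterministic start $z_0=x$ and localizes with the exit time $\tau_R$ from a ball containing $x$ and $y$. It then applies Girsanov with the truncated drift $\Sigma^{-1/2}f(z_s)\mathbf{1}_{\{s\le\tau_R\}}$ over the whole interval $[0,t]$. Under the equivalent measure, the stopped state equals $x+\Sigma^{1/2}\widetilde W$. Full support of Wiener measure then charges a tube around the straight segment from $x$ to $y$. The same exit-time contradiction you use shows that this tube forces $\tau_R>t$. The paper then needs a separate, fairly technical disintegration step to pass to an arbitrary initial law: regular conditional laws given $Z_0=x$, plus a countable monotone-class check that $W$ remains a Brownian motion under $\mathbf{P}_x$ for $\mu$-a.e.\ $x$.

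\textbf{Your route.} You use tightness of $\sup_{s\le t}|z_s|$ to confine $z_{t-h}$ to a ball with probability at least $\tfrac12$, uniformly in $h$. On the final window you treat the drift only as a perturbation of size $Mh$. You steer with the increment $W_{\cdot}-W_{t-h}$, which is independent of $\mathcal{F}_{t-h}$. Because your target $v$ is random, you need a tube-probability lower bound that is uniform over $|w|\le C$. Cameron--Martin for a deterministic linear shift, combined with positivity of the small-ball probability, supplies this cheaply.

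In exchange, you handle an arbitrary initial law in one pass. You never change measure for the SDE itself, you bypass the paper's Step~2 entirely, and you get an explicit quantitative lower bound. The order of choices ($R$, then $C,K,M$, then $h,\delta$) is what makes this work, and you have it right.

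\textbf{What the paper's route buys.} It gives a stronger structural fact. On the localization event, the path law is equivalent to that of a Brownian motion started at $x$. So it yields positivity of tubes around arbitrary continuous paths from $x$, not just the time-$t$ marginal statement. It is also the natural template for path-space support results.

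\textbf{A cosmetic point.} With the usual Cameron--Martin convention, the density factor in $\mathbb{P}(\sup_{u\le h}|\beta_u-\tfrac{u}{h}w|\le\delta)$ is $\exp(-\tfrac{w}{h}\cdot\beta_h-\tfrac{|w|^2}{2h})$, not $\exp(+\tfrac{w}{h}\cdot\beta_h-\tfrac{|w|^2}{2h})$. The sign does not matter here: the small ball is symmetric, and you only need the exponent bounded below by $-C\delta/h-C^2/(2h)$ on it.
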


\begin{proof}
Fix $t\in(0,T]$ and a nonempty open set $V\subset\R^d$. We first prove the
claim for a deterministic initial condition, and then pass to an arbitrary
initial law by conditioning on $z_0$.

\medskip
\noindent\textbf{Step 1: deterministic initial condition.}
Fix $x\in\R^d$ and suppose first that $z_0=x$ almost surely. Pick $y\in V$ and
choose $r>0$ such that $B(y,r)\subset V$.

Let $(\mathcal F_s)_{0\le s\le T}$ denote the filtration on the underlying
weak-solution probability space. Choose $R>0$ so large that
$x\in B(0,R)$ and $B(y,r)\subset B(0,R)$. Define the exit time
\[
\tau_R := \inf\bigl\{s\in[0,T]: \|z_s\|\ge R\bigr\},
\]
with the convention $\inf\varnothing=\infty$. Since $z$ has continuous adapted
paths and starts from $x\in B(0,R)$, we have $\tau_R>0$ almost surely.

Because $f$ is continuous, it is bounded on the compact set $\overline{B(0,R)}$,
hence
\[
M_R := \sup_{\|u\|\le R}\bigl\|\Sigma^{-1/2}f(u)\bigr\| < \infty.
\]
Define
\[
\theta_s := \Sigma^{-1/2}f(z_s)\,\mathbf{1}_{\{s\le \tau_R\}},
\qquad 0\le s\le t.
\]
Then $\theta$ is progressively measurable and
\[
\int_0^t \|\theta_s\|^2\,ds \le M_R^2 t
\qquad\text{a.s.}
\]
Therefore Novikov's condition holds:
\[
\mathbb{E}\Big[\exp\Big(\frac12\int_0^t \|\theta_s\|^2\,ds\Big)\Big]
\le e^{M_R^2 t/2}<\infty.
\]

Define
\[
\Lambda_t
:=
\exp\biggl(
-\int_0^t \theta_s^\top\,dW_s
-\frac12\int_0^t \|\theta_s\|^2\,ds
\biggr),
\qquad
\frac{d\mathbb Q}{d\mathbb P}\bigg|_{\mathcal F_t} := \Lambda_t.
\]
Then $\Lambda_t>0$ almost surely and $\mathbb E_{\mathbb P}[\Lambda_t]=1$, so
$\mathbb P$ and $\mathbb Q$ are equivalent on $\mathcal F_t$.

By Girsanov's theorem, the process
\[
\widetilde W_s := W_s + \int_0^s \theta_u\,du,
\qquad 0\le s\le t,
\]
is a standard $d$-dimensional Brownian motion under $\mathbb Q$.
Moreover, for every $s\in[0,t]$,
\[
\begin{aligned}
z_{s\wedge \tau_R}
&=
x + \int_0^{s\wedge \tau_R} f(z_u)\,du + \Sigma^{1/2}W_{s\wedge \tau_R}\\
&=
x + \int_0^s f(z_u)\,\mathbf{1}_{\{u\le \tau_R\}}\,du
+ \Sigma^{1/2}\biggl(
\widetilde W_{s\wedge \tau_R}
-
\int_0^{s\wedge \tau_R}\theta_u\,du
\biggr)\\
&=
x + \Sigma^{1/2}\widetilde W_{s\wedge \tau_R},
\end{aligned}
\]
since $\Sigma^{1/2}\theta_u = f(z_u)\,\mathbf{1}_{\{u\le \tau_R\}}$ for
Lebesgue-a.e.\ $u$.

Now define
\[
X_s := x + \Sigma^{1/2}\widetilde W_s,
\qquad 0\le s\le t.
\]
Under $\mathbb Q$, the law of $\widetilde W$ on
$C_0([0,t];\R^d):=\{\eta\in C([0,t];\R^d):\eta(0)=0\}$ is Wiener measure,
which has full support on $C_0([0,t];\R^d)$. Since
\[
X=\Psi(\widetilde W),
\qquad
\Psi(\eta)(s):=x+\Sigma^{1/2}\eta(s),
\]
and $\Psi$ is a homeomorphism from $C_0([0,t];\R^d)$ onto
\[
C_x([0,t];\R^d)
:=
\{\eta\in C([0,t];\R^d):\eta(0)=x\},
\]
the law of $X$ under $\mathbb Q$ has full support on
$C_x([0,t];\R^d)$.

Consider the straight-line path
\[
\gamma(s):= x + \frac{s}{t}(y-x),
\qquad s\in[0,t].
\]
Because $B(0,R)$ is convex and $x,y\in B(0,R)$, we have
$\gamma([0,t])\subset B(0,R)$. Since $\gamma([0,t])$ is compact and $B(0,R)$
is open, there exists $\delta>0$ such that
\[
\delta<r
\qquad\text{and}\qquad
\sup_{0\le s\le t}\|\gamma(s)\|+\delta<R.
\]
Define the event
\[
E := \Bigl\{\sup_{0\le s\le t}\|X_s-\gamma(s)\|<\delta\Bigr\}.
\]
Then $E\in\mathcal F_t$, and by the support statement above,
\[
\mathbb Q(E)>0.
\]

We claim that
\[
E \subset \{z_t\in B(y,r)\}\subset \{z_t\in V\}.
\]
Indeed, fix $\omega\in E$. Then for every $s\in[0,t]$,
\[
\|X_s(\omega)\| \le \|\gamma(s)\|+\delta < R.
\]
If $\tau_R(\omega)\le t$, then by continuity of $z$ and the definition of
$\tau_R$,
\[
\|z_{\tau_R(\omega)}(\omega)\|=R.
\]
But from the stopped identity above,
\[
z_{\tau_R(\omega)}(\omega)
=
z_{\tau_R(\omega)\wedge \tau_R(\omega)}(\omega)
=
X_{\tau_R(\omega)}(\omega),
\]
which contradicts $\|X_{\tau_R(\omega)}(\omega)\|<R$. Hence
$\tau_R(\omega)>t$. Therefore
\[
z_t(\omega)=z_{t\wedge \tau_R}(\omega)=X_t(\omega),
\]
and so
\[
\|z_t(\omega)-y\|
=
\|X_t(\omega)-\gamma(t)\|
<\delta<r.
\]
Thus $z_t(\omega)\in B(y,r)\subset V$, proving the claim.

Since $\mathbb P$ and $\mathbb Q$ are equivalent on $\mathcal F_t$,
\[
\mathbb P(z_t\in V)>0
\]
whenever $z_0=x$ almost surely. This proves the deterministic-start claim.

\medskip
\noindent\textbf{Step 2: arbitrary initial law.}
Now return to the original weak solution with arbitrary initial law
$\mu=\mathcal{L}(z_0)$. Realize the weak solution on the canonical product path space
\[
\Omega := C([0,T];\R^d)\times C([0,T];\R^d),
\]
with canonical coordinates $(Z,W)$ and raw canonical filtration
\[
\mathcal G_t := \sigma(Z_s,W_s: 0\le s\le t),
\qquad 0\le t\le T.
\]
Let $\mathbf P$ denote the joint law of $(z,W)$ on $\Omega$. Since $\Omega$ and
$\R^d$ are Polish, there exists a regular conditional law
$(\mathbf P_x)_{x\in\R^d}$ of $(Z,W)$ given $Z_0=x$.

We first identify a $\mu$-full set of initial states on which the canonical pair
solves the same SDE with deterministic start. By definition of the conditional
law, there exists a Borel set $E_0\subset\R^d$ with $\mu(E_0)=1$ such that for
every $x\in E_0$,
\[
Z_0=x
\qquad \mathbf P_x\text{-a.s.}
\]
Since $W_0=0$ $\mathbf P$-a.s., after shrinking $E_0$ if needed we may also
assume
\[
W_0=0
\qquad \mathbf P_x\text{-a.s.}
\]
for every $x\in E_0$.

Let $A_{\mathrm{SDE}}$ be the measurable set of path pairs on which
\[
Z_t = Z_0 + \int_0^t f(Z_s)\,ds + \Sigma^{1/2}W_t
\]
holds for all rational $t\in[0,T]$. Since this identity holds $\mathbf P$-a.s.,
there exists a Borel set $E_1\subset\R^d$ with $\mu(E_1)=1$ such that
$\mathbf P_x(A_{\mathrm{SDE}})=1$ for all $x\in E_1$. By continuity of $Z$ and
$W$, the integral identity then holds for all $t\in[0,T]$, $\mathbf P_x$-a.s.,
for every $x\in E_1$.

It remains to verify the Brownian property. For each rational $0\le s<t\le T$,
let $\mathcal C_s$ be a countable $\pi$-system generating $\mathcal G_s$
(for example, finite cylinder sets at rational times with rational rectangles),
and let $\mathcal I$ be the countable $\pi$-system of rectangles with rational
endpoints generating $\mathcal B(\R^d)$. Under $\mathbf P$, the canonical
second coordinate $W$ is a Brownian motion with respect to $(\mathcal G_t)$:
this follows because on the original filtered space the Brownian increment is
independent of the underlying filtration, hence in particular independent of the
smaller sigma-field generated by the past of $(z,W)$. Therefore, for every rational $s<t$, every $A\in\mathcal C_s$, every $B\in\mathcal I$, and every
bounded Borel function $\phi:\R^d\to\R$,
\begin{equation*}
\begin{split}
\int_{\R^d} \phi(x)\,\mathbf P_x\!\big(A\cap\{W_t-W_s\in B\}\big)\,\mu(dx)
&= \mathbb E_{\mathbf P}\big[\phi(Z_0)\,\mathbf 1_A\,\mathbf 1_{\{W_t-W_s\in B\}}\big]\\
&= \mathbb E_{\mathbf P}\big[\phi(Z_0)\,\mathbf 1_A\big]\,\gamma_{t-s}(B)\\
&= \int_{\R^d} \phi(x)\,\mathbf P_x(A)\,\gamma_{t-s}(B)\,\mu(dx),    
\end{split}    
\end{equation*}

where $\gamma_{t-s}=N(0,(t-s)I_d)$. Hence, for each fixed rational $s<t$,
$A\in\mathcal C_s$, and $B\in\mathcal I$, there exists a Borel set
$E_{s,t,A,B}\subset\R^d$ with $\mu(E_{s,t,A,B})=1$ such that
\[
\mathbf P_x\big(A\cap\{W_t-W_s\in B\}\big)
=
\mathbf P_x(A)\,\gamma_{t-s}(B)
\qquad \forall x\in E_{s,t,A,B}.
\]
Intersecting over the countable family of such tuples gives a Borel set
$E_2\subset\R^d$ with $\mu(E_2)=1$ such that for every $x\in E_2$ and every
rational $0\le s<t\le T$, the increment $W_t-W_s$ under $\mathbf P_x$ is
independent of $\mathcal G_s$ and has law $\gamma_{t-s}$. By a monotone-class
argument, this extends from $\mathcal C_s$ and $\mathcal I$ to all
$A\in\mathcal G_s$ and all Borel $B\subset\R^d$. Since $W$ has continuous
paths, the Brownian property extends from rational times to all real times.
Hence, for every $x\in E_2$, $W$ is a Brownian motion with respect to the raw
canonical filtration under $\mathbf P_x$, and therefore also with respect to
its $\mathbf P_x$-usual augmentation.

Set $E:=E_0\cap E_1\cap E_2$. Then $\mu(E)=1$, and for every $x\in E$ the
pair $(Z,W)$ is a weak solution of the same SDE with deterministic initial
condition $x$. Applying Step~1 under $\mathbf P_x$ gives
\[
\mathbf P_x(Z_t\in V)>0
\qquad \text{for every }x\in E.
\]
Integrating over the initial law,
\[
\mathbb P(z_t\in V)
=
\mathbf P(Z_t\in V)
=
\int_{\R^d} \mathbf P_x(Z_t\in V)\,\mu(dx)
\ge \int_E \mathbf P_x(Z_t\in V)\,\mu(dx)
>0.
\]
The last inequality holds because the integrand is measurable and strictly
positive on the full-measure set $E$.
This proves the lemma.
\end{proof}

\subsection{Proof of Theorem~\ref{thm:nonlinear_sde_two_regimes} and Corollary~\ref{cor:jacobian_graph_identifiability}}
\label{app:proof_nonlinear}

\begin{proof}[Proof of Theorem~\ref{thm:nonlinear_sde_two_regimes}]
The proof is arranged to reuse the OU argument as much as possible. More
precisely, Steps~0, 1, 3, 4, and 5 of the proof of
Theorem~\ref{thm:ou_two_regimes_monomial}, together with the implication
``constant Jacobian $\Rightarrow$ affine map'' at the start of Step~6 there,
depend only on \textup{(S1)--(S4)} and on having a pointwise version of the
quadratic-variation constraint. They do not use the linear form of the drift.
We therefore record only the places where the nonlinear setting requires a
modified argument.

\medskip
\noindent\textbf{Step 0: Coupling and the along-path covariance constraint.}

By exactly the same coupling construction as in Step~0 of the proof of
Theorem~\ref{thm:ou_two_regimes_monomial}, now applied to
\eqref{eq:true_sde_model} and \eqref{eq:alt_sde_model}, we may realize the two
observed path laws on a common canonical path space and obtain
\[
\tilde z_t^{(e)} = \varphi(z_t^{(e)})
\qquad \text{a.s. for all } t\in[0,T],\ e\in\{1,2\}.
\]
By (S5), the latent paths are weak solutions with a.s. continuous sample paths;
as SDE solutions, they are continuous semimartingales. Hence It\^o's formula
and quadratic variation apply exactly as in the OU proof.
Repeating Step~1 of Theorem~\ref{thm:ou_two_regimes_monomial} word for
word, we obtain for each regime $e$:
\begin{equation}
\label{eq:QV_phi}
d\langle \tilde z^{(e)}\rangle_t
= D\varphi(z_t^{(e)})\Sigma_e D\varphi(z_t^{(e)})^\top\,dt,
\qquad \Sigma_e := G_eG_e^\top,
\end{equation}
while under the alternative model,
\begin{equation}
\label{eq:QV_alt}
d\langle \tilde z^{(e)}\rangle_t = \tilde\Sigma_e\,dt,
\qquad \tilde\Sigma_e := \tilde G_e\tilde G_e^\top.
\end{equation}
Therefore, with the same continuity-in-time upgrade used in the OU proof,
\begin{equation}
\label{eq:matrix_constraint_along_paths}
D\varphi(z_t^{(e)})\Sigma_e D\varphi(z_t^{(e)})^\top = \tilde\Sigma_e
\qquad \text{a.s. for all } t\in[0,T].
\end{equation}

\medskip
\noindent\textbf{Step 1: Full support upgrades the along-path identity to a pointwise identity on $\R^d$.}

Fix $e\in\{1,2\}$ and any $t_\star\in(0,T]$. Define the continuous
matrix-valued function on $\R^d$ by
\[
B_e(z) := D\varphi(z)\Sigma_e D\varphi(z)^\top - \tilde\Sigma_e.
\]
Evaluating \eqref{eq:matrix_constraint_along_paths} at $t=t_\star$ gives
$B_e(z_{t_\star}^{(e)}) = 0$ a.s.

We claim $B_e\equiv 0$ on $\R^d$. Suppose not. Then there exists
$z_0\in\R^d$ with $B_e(z_0)\neq 0$. By continuity, there exist
$\varepsilon>0$ and an open neighborhood $V$ of $z_0$ such that
$\|B_e(z)\|_F>\varepsilon$ for all $z\in V$. By
Lemma~\ref{lem:open_set_positivity}---proved by first establishing positivity
conditional on a deterministic initial state and then integrating over the law
of $z_0^{(e)}$---we have $\mathbb{P}(z_{t_\star}^{(e)}\in V)>0$, contradicting
$B_e(z_{t_\star}^{(e)}) = 0$ a.s. Hence
\begin{equation}
\label{eq:matrix_constraint_pointwise}
D\varphi(z)\,\Sigma_e\, D\varphi(z)^\top = \tilde \Sigma_e
\qquad \forall z\in \R^d,\ \forall e\in\{1,2\}.
\end{equation}

\medskip
\noindent\textbf{Step 2: Reuse the OU monomial-identifiability core.}

Once \eqref{eq:matrix_constraint_pointwise} is available, Steps~3, 4, and 5 of
the proof of Theorem~\ref{thm:ou_two_regimes_monomial} apply verbatim. Define
\begin{equation}
\label{eq:M_def}
M(z):=\tilde\Sigma_1^{-1/2}\, D\varphi(z)\, \Sigma_1^{1/2}.
\end{equation}
Then $M(z)$ is orthogonal for every $z\in\R^d$, and using regime $e=2$ gives
\begin{equation}
\label{eq:diag_conjugacy_eq}
M(z)\,K\, M(z)^\top = \tilde K,
\qquad
K:=\Sigma_1^{-1/2}\Sigma_2\Sigma_1^{-1/2},
\quad
\tilde K:=\tilde\Sigma_1^{-1/2}\tilde\Sigma_2\tilde\Sigma_1^{-1/2}.
\end{equation}
By (S3), $\tilde K$ is diagonal, and by (S4), $K$ has simple spectrum. The same
whitening/simple-spectrum argument as in the OU proof therefore shows that
$M(z)$ is a constant signed permutation matrix. Consequently,
\begin{equation}
\label{eq:Dphi_constant}
D\varphi(z)\equiv L := \tilde\Sigma_1^{1/2} M_0 \Sigma_1^{-1/2}
\qquad \forall z\in\R^d,
\end{equation}
for a constant monomial matrix $L=\Pi\Lambda$. Exactly as at the start of
Step~6 in the OU proof, $D\varphi\equiv L$ implies that
$\varphi(z)=Lz+c$ for some constant $c\in\R^d$. Substituting
$D\varphi\equiv L$ into \eqref{eq:matrix_constraint_pointwise} gives
\[
\tilde\Sigma_e = L\Sigma_e L^\top
\qquad \text{for each } e\in\{1,2\}.
\]

\medskip
\noindent\textbf{Step 3: Identify the drift transformation.}

This step parallels the latter half of Step~6 in the OU proof, with $Az$ and
$\tilde A\tilde z$ replaced by the general drifts $f(z)$ and
$\tilde f(\tilde z)$. Fix a regime $e$. Since $\varphi$ is a global $C^2$
diffeomorphism by (S1), applying It\^o's formula to
$\tilde z_t^{(e)} = \varphi(z_t^{(e)})$ yields
\begin{align*}
d\tilde z_t^{(e)}
&= D\varphi(z_t^{(e)})\,dz_t^{(e)}
 + \frac{1}{2} \sum_{k,\ell=1}^d (\Sigma_e)_{k\ell}
   \partial_{k\ell}\varphi(z_t^{(e)})\,dt \\
&= \Big(D\varphi(z_t^{(e)})f(z_t^{(e)})
 + \tfrac{1}{2}\,\Sigma_e : D^2\varphi(z_t^{(e)})\Big)\,dt
 + D\varphi(z_t^{(e)})G_e\,dW_t^{(e)}.
\end{align*}
Under the alternative model \eqref{eq:alt_sde_model},
\[
d\tilde z_t^{(e)} = \tilde f(\tilde z_t^{(e)})\,dt + \tilde G_e\,d\tilde W_t^{(e)}.
\]

To justify comparing drift terms, we use Lemma~\ref{lem:innovation}:
replacing $G_e\,dW_t^{(e)}$ with $\Sigma_e^{1/2}\,d\hat W_t^{(e)}$ (where
$\hat W^{(e)}$ is a standard $\mathcal{F}_t^{z^{(e)}}$-Brownian motion), the
noise term in the It\^o expansion becomes
$D\varphi(z_t^{(e)})\Sigma_e^{1/2}\,d\hat W_t^{(e)}$, a continuous
$\mathcal{F}_t^{z^{(e)}}$-local martingale. Likewise, the alternative model's
innovation representation yields a noise integral that is an
$\mathcal{F}_t^{\tilde z^{(e)}}$-local martingale. By
Remark~\ref{rem:common_filtration}, these filtrations coincide since
$\tilde z^{(e)}=\varphi(z^{(e)})$ with $\varphi$ a diffeomorphism.
Subtracting the two representations of $d\tilde z^{(e)}$, the difference has
the form $\int_0^t H(s)\,ds + M_t$, where $\int_0^t H(s)\,ds$ is of finite
variation and $M_t$ is a continuous local martingale in the common filtration.
A continuous process that is both of finite variation and a local martingale
must be constant; since it starts at $0$, it is identically $0$. Hence
$H(t)=0$ for Lebesgue-a.e. $t\in[0,T]$, almost surely. Therefore,
\[
\tilde f(\tilde z_t^{(e)})
= D\varphi(z_t^{(e)})f(z_t^{(e)})
 + \tfrac{1}{2}\,\Sigma_e : D^2\varphi(z_t^{(e)})
\quad\text{for a.e. } t\ \text{and a.s.}
\]

Define
\[
H(t):=\tilde f(\tilde z_t^{(e)})-
\Big(D\varphi(z_t^{(e)})f(z_t^{(e)})+\tfrac12\,\Sigma_e: D^2\varphi(z_t^{(e)})\Big).
\]
As above, $H(t)=0$ for a.e. $t$ a.s., and $H$ is a.s. continuous in $t$.
Therefore $H(t)=0$ for all $t\in[0,T]$ a.s. In particular, fix any
$t_\star\in(0,T]$; the identity holds at $t=t_\star$.

Since $D\varphi \equiv L$ and $D^2\varphi \equiv 0$ on $\R^d$ (from
Step~2), evaluating at $t = t_\star$ gives
\[
\tilde f(\varphi(z_{t_\star}^{(e)})) = L f(z_{t_\star}^{(e)}) \quad \text{a.s.}
\]
Since $\varphi(z) = Lz+c$ for all $z\in\R^d$, this becomes
\[
\tilde f(Lz_{t_\star}^{(e)} + c) = L f(z_{t_\star}^{(e)}) \quad \text{a.s.}
\]

We now upgrade this almost-sure equality to a pointwise identity on $\R^d$.
Define the continuous function $F(z) := \tilde f(Lz+c) - Lf(z)$ on $\R^d$
(continuous by (S5)). We have shown $F(z_{t_\star}^{(e)})=0$ a.s. Suppose
$F(z_0)\neq 0$ for some $z_0\in\R^d$. By continuity of $F$, there exist
$\varepsilon>0$ and an open neighborhood $V$ of $z_0$ such that
$\|F(z)\|>\varepsilon$ for all $z\in V$. Applying
Lemma~\ref{lem:open_set_positivity} once more (again via conditioning on the
initial state $z_0^{(e)}$ and integrating over its law) gives
$\mathbb{P}(z_{t_\star}^{(e)}\in V)>0$, contradicting
$F(z_{t_\star}^{(e)})=0$ a.s. Therefore $F\equiv 0$ on $\R^d$, i.e.
\[
\tilde f(Lz+c) = Lf(z)
\qquad \forall z\in\R^d.
\]
\end{proof}

\begin{proof}[Proof of Corollary~\ref{cor:jacobian_graph_identifiability}]
Assume $f$ and $\tilde f$ are $C^1$ on $\R^d$. From
Theorem~\ref{thm:nonlinear_sde_two_regimes} we have, for all $z\in\R^d$,
\[
\tilde f(\tilde z)=L f(z),
\qquad \text{where } \tilde z:=\varphi(z)=Lz+c.
\]
Equivalently, using $z=L^{-1}(\tilde z-c)$, we may write
\[
\tilde f(\tilde z)=L\,f\!\big(L^{-1}(\tilde z-c)\big),
\qquad \forall\,\tilde z\in\R^d.
\]
Differentiating with respect to $\tilde z$ and applying the chain rule yields,
for all $\tilde z\in\R^d$,
\[
D\tilde f(\tilde z)
= L\,Df\!\big(L^{-1}(\tilde z-c)\big)\,L^{-1}.
\]
Equivalently, writing $\tilde z=Lz+c$, we have
\[
D\tilde f(Lz+c)=L\,Df(z)\,L^{-1},
\qquad \forall z\in\R^d.
\]
Write $L=\Pi\Lambda$ with $\Pi$ a permutation matrix and $\Lambda$ an
invertible diagonal matrix. Then
\[
L\,Df(z)\,L^{-1}
= \Pi\big(\Lambda\,Df(z)\,\Lambda^{-1}\big)\Pi^{-1}.
\]
Left/right multiplication by an invertible diagonal matrix preserves the zero
pattern, hence $\Lambda Df(z)\Lambda^{-1}$ has the same support as $Df(z)$.
Therefore the Jacobian edge set is permuted by $\Pi$:
\[
(\partial_j f_i(z)\neq 0)
\quad\Longleftrightarrow\quad
\big(\partial_{\pi(j)}\tilde f_{\pi(i)}(\tilde z)\neq 0\big),
\qquad \tilde z=Lz+c.
\]
Equivalently,
\[
(j\to i)\in E_f(z)
\quad \Longleftrightarrow \quad
(\pi(j)\to \pi(i))\in E_{\tilde f}(Lz+c).
\]
This proves the claimed Jacobian-graph identifiability.
\end{proof}

\subsection{Why weak well-posedness suffices}
\label{app:weak_vs_strong}

\begin{remark}[Why weak well-posedness suffices]
\label{rem:weak_vs_strong}
(S5) requires only \emph{weak} well-posedness of the SDE
(existence of a weak solution that is unique in law), rather than the stronger
notion of a \emph{strong} (pathwise) solution. This distinction is worth
clarifying.

A \textbf{strong solution} is constructed on a given probability space with a
given Brownian motion $W_t$: one seeks a process $z_t$ that is adapted to the
filtration of $W$ and satisfies the integral equation
\[
z_t = z_0 + \int_0^t f(z_s)\,ds + \int_0^t G_e\,dW_s.
\]
Uniqueness is \emph{pathwise}: for each sample path of $W$, the solution
$z_t(\omega)$ is uniquely determined. A standard sufficient condition for
strong existence and pathwise uniqueness is global Lipschitz continuity,
together with a linear-growth condition.

A \textbf{weak solution} allows freedom to choose the probability space and the
driving Brownian motion: one seeks some probability space supporting both a
Brownian motion $W_t$ and a process $z_t$ that together satisfy the SDE.
Uniqueness is \emph{in law}: any two weak solutions induce the same law on path
space $C([0,T];\R^d)$. Weak existence can hold under much milder conditions than
strong existence; see, for example, \cite{stroock2007multidimensional}. In this
paper we directly assume weak well-posedness in (S5) and do not rely on a
specific existence theorem.

The identifiability argument in
Theorem~\ref{thm:nonlinear_sde_two_regimes} depends only on the
\emph{distributional} properties of the observed trajectories:
\begin{itemize}[leftmargin=1.5em,itemsep=2pt]
    \item (S2) compares the laws of $\{x_t^{(e)}\}$ and
    $\{\tilde x_t^{(e)}\}$;
    \item the coupling in Step~0 is a distributional construction;
    \item It\^o's formula and quadratic variation apply to any continuous
    semimartingale, including weak solutions;
    \item the full-support argument in Steps~1 and~3 uses the law of
    $z_{t_\star}^{(e)}$, not its pathwise construction.
\end{itemize}
Requiring strong solutions would unnecessarily restrict the class of admissible
drifts without strengthening the identifiability conclusion.
\end{remark}

\newpage
\section{Estimation algorithm}
\label{app:algorithm}

Algorithm~\ref{alg:two_stage} summarizes the two-stage estimator used in the
experiments. Stage~1 fits the shared-drift, regime-specific diagonal-diffusion
transition model in the learned coordinate system. Stage~2 is optional and is
used only when a sparse drift-Jacobian graph estimate is needed.

\begin{algorithm}[h]
\caption{Two-stage latent SDE estimation with optional sparse drift-Jacobian graph recovery}
\label{alg:two_stage}
\begin{algorithmic}[1]
\REQUIRE Multi-regime trajectories $\{x_{t_0}^{(e)},\ldots,x_{t_N}^{(e)}\}_{e=1}^{2}$;
         time step $\Delta t$; stride $s$; hyperparameter $\lambda_{\mathrm{sparse}}$;
         threshold $\tau$
\ENSURE  Encoder $h_\theta$; learned drift model $\tilde f_\psi$; optional drift-Jacobian graph estimate $\hat E_f$

\medskip
\STATE \textbf{Stage~1: short-time transition fitting in learned coordinates}
\STATE Initialize encoder $h_\theta$, drift model $\tilde f_\psi$, and diffusion
       parameters $\{\log \tilde\sigma_{e,i}^2\}_{e,i}$
\FOR{epoch $=1,\ldots,T_1$}
    \FOR{$e=1,2$}
        \STATE Sample mini-batch $(x_t,x_{t+\Delta t})\sim\mathcal{D}_e$
        \STATE Encode $\tilde z_t\leftarrow h_\theta(x_t)$ and
               $\tilde z_{t+\Delta t}\leftarrow h_\theta(x_{t+\Delta t})$
        \STATE Compute the per-regime transition score from \eqref{eq:nll_obs}
    \ENDFOR
    \STATE Update $(\theta,\psi,\{\log \tilde\sigma_{e,i}^2\})$
           using $\nabla \mathcal{L}_{\mathrm{S1}}$
\ENDFOR

\medskip
\STATE \textbf{Optional Stage~2: sparse drift-Jacobian graph recovery}
\STATE Freeze $h_\theta$ and $\{\tilde\sigma_{e,i}^2\}$; reinitialize $\tilde f_\psi$
\STATE Compute velocity targets
       $\hat v_t \leftarrow \big(h_\theta(x_{t+s\Delta t})-h_\theta(x_t)\big)/(s\Delta t)$
\FOR{epoch $=1,\ldots,T_2$}
    \STATE Sample mini-batches $(\tilde z_t,\hat v_t)$ from both regimes
    \STATE Update $\psi$ using $\nabla \mathcal{L}_{\mathrm{S2}}$
\ENDFOR

\medskip
\STATE Estimate
       $\overline{|D\tilde f_\psi|}_{ij}
       \leftarrow
       \E_{\tilde z\sim\mathcal{D}_1\cup\mathcal{D}_2}
       \big[|\partial \tilde f_{\psi,i}/\partial \tilde z_j(\tilde z)|\big]$
\STATE Return the thresholded drift-Jacobian edge set
       $\hat E_f = \{(j\to i): \overline{|D\tilde f_\psi|}_{ij}>\tau\}$ if graph
       recovery is desired
\end{algorithmic}
\end{algorithm}

\newpage

\section{Simulation setup}
\label{app:sim_setup}

This section specifies the data-generating process, observation model, training procedure, evaluation metrics, and hyperparameters for all simulation experiments.

\subsection{Latent process}
\label{app:latent_process}

Each experiment generates $N$ independent trajectories from a $d$-dimensional latent stochastic differential equation
\[
  \mathrm{d}z_t^{(e)} = f(z_t^{(e)})\,\mathrm{d}t + \Sigma_e^{1/2}\,\mathrm{d}W_t^{(e)},
\]
where $f\colon \mathbb{R}^d \to \mathbb{R}^d$ is the latent drift, $\Sigma_e = \operatorname{diag}(\sigma_{e,1}^2,\dots,\sigma_{e,d}^2)$ is the environment-specific diagonal diffusion covariance, and $W_t^{(e)}$ is a standard $d$-dimensional Brownian motion. Trajectories are discretized via the Euler--Maruyama scheme with step size $\Delta t$. In the implementation, we set $n_{\mathrm{steps}}=\lfloor T/\Delta t\rfloor=20$ and store $n_{\mathrm{steps}}$ points including the initial condition, yielding $n_{\mathrm{steps}}-1=19$ adjacent one-step transition pairs. Initial conditions are drawn i.i.d.\ from $\mathcal{N}(0,\, 0.25\,I_d)$.

We consider three drift families: dense linear OU, sparse linear OU, and nonlinear sparse drift, whose constructions are detailed in Sections~\ref{app:dense_linear} to~\ref{app:nonlinear}. Within each drift family and dimension, the drift $f$ is fixed and shared across all regime conditions and all training seeds.

\subsection{Regime conditions and diffusion covariances}
\label{app:regimes}

Each drift family is evaluated under three regime conditions. These conditions vary only the number and choice of diagonal diffusion covariances; all other components are held fixed. In all conditions, the first environment uses
\[
  \sigma_{1,j}^2 = 0.5\,j, \qquad j = 1, \dots, d,
\]
i.e.\ $\Sigma_1 = \operatorname{diag}(0.5,\,1.0,\,1.5,\dots,0.5d)$.

\paragraph{Distinct ratios (identifiable).}
Two environments ($e \in \{1,2\}$) with $\Sigma_2$ chosen so that the variance ratios $\sigma_{2,j}^2 / \sigma_{1,j}^2$ are pairwise distinct across all coordinates $j = 1, \dots, d$. The diagonal entries of $\Sigma_2$ are
\begin{align*}
  d=5\colon \quad \sigma_{2,j}^2 &= (2.0,\; 1.5,\; 0.8,\; 0.5,\; 0.4), \\
  d=7\colon \quad \sigma_{2,j}^2 &= (2.5,\; 0.3,\; 2.1,\; 0.35,\; 1.8,\; 0.43,\; 1.6).
\end{align*}

\paragraph{One regime (control).}
A single environment with diffusion covariance $\Sigma_1$ only. Since there is only one environment, the distinct-ratios condition is not applicable.

\paragraph{Proportional diffusions (control).}
Two environments with $\sigma_{2,j}^2 = 2\,\sigma_{1,j}^2$ for all $j$, so that all variance ratios $\sigma_{2,j}^2/\sigma_{1,j}^2$ equal $2$. This violates the two-regime separation requirement in Assumption~(S4), since all ratios are identical.

\noindent For a fixed dimension $d$, the same diffusion-covariance choices are used across all three drift families.

\subsection{Observation model}
\label{app:obs_model}

The learner observes $x_t^{(e)} = g(z_t^{(e)})$. We implement $g\colon \mathbb{R}^d \to \mathbb{R}^d$ as an invertible MLP mixing map with orthogonal linear layers and componentwise activations. We test two choices of activation within the same $m$-layer orthogonal MLP architecture:
\[
  y_1=\phi(R_1z+b_1), \qquad
  y_k=\phi(R_k y_{k-1}+b_k)\;\;(2\le k\le m-1), \qquad
  g(z)=R_m y_{m-1},
\]
where each $R_k \in \mathrm{O}(d)$ is a random orthogonal matrix (drawn via the Haar measure), $b_k \sim \mathcal{N}(0, I_d)$ for $k=1,\ldots,m-1$ are bias vectors, and $\phi$ is a componentwise nonlinearity. The final layer is a pure rotation with no bias or nonlinearity. In all experiments $m = 3$.

\paragraph{Leaky-$\tanh$ (main text).}
$\phi(z) = \tanh(z) + 0.1\,z$. Since $\phi'(z) = \operatorname{sech}^2(z) + 0.1 > 0$ everywhere and $\phi \in C^\infty$, the composition $g$ is a $C^\infty$ diffeomorphism, consistent with the smoothness assumptions of our identifiability results.

\paragraph{LeakyReLU (appendix).}
$\phi(z) = \max(z,\, 0.2\,z)$. This map is invertible and piecewise linear, but it is not $C^1$ at zero and hence not $C^2$, providing a robustness check beyond the smoothness assumptions.

\noindent The orthogonal matrices and biases are drawn with a fixed seed, so the same $g$ is used across all regime conditions and training seeds within each fixed drift family, dimension, and activation choice.

\subsection{Training procedure}
\label{app:training}

We follow the two-stage procedure described in Section~\ref{sec:estimation} and summarized in Algorithm~\ref{alg:two_stage}.

\paragraph{Stage~1: Short-time transition fitting.}
The encoder $h_\theta\colon \mathbb{R}^d \to \mathbb{R}^d$ is a fully connected MLP with hidden dimension~256, four hidden layers, and ELU activations. We minimize the Stage~1 objective $\mathcal{L}_{\mathrm{S1}}$ (Eq.~\ref{eq:stage1_loss}), including the change-of-variables log-determinant term from Eq.~\ref{eq:nll_obs}:
\[
  -\log\big|\det Dh_\theta(x_{t+\Delta t})\big|.
\]
The log-determinant term is the local volume correction in the observation-space pseudo-likelihood and penalizes collapse of the encoder Jacobian. All parameters $(\theta, \psi, \{\tilde\sigma_{e,i}^2\})$ are optimized jointly with Adam (learning rate $10^{-3}$, batch size $4{,}096$) for $20{,}000$ epochs.

\paragraph{Stage~2: Sparse drift-Jacobian graph recovery (sparse linear and nonlinear only).}
After Stage~1, the encoder is frozen and we minimize $\mathcal{L}_{\mathrm{S2}}$ (Eq.~\ref{eq:stage2_loss}) to fit a drift model to the encoded velocity targets. For the sparse linear family, the drift model is a linear map $\tilde{f}_\psi(\tilde{z}) = \hat{A}\tilde{z}$, with sparsity penalty $\sum_{i,j}|\hat{A}_{ij}|$. For the nonlinear family, $\tilde{f}_\psi$ is a three-layer MLP with ELU activations, with the Jacobian $L_1$ penalty in Eq.~\ref{eq:stage2_loss} evaluated on encoded samples. We optimize with Adam (learning rate $5 \times 10^{-4}$). Dense linear experiments skip Stage~2 because the drift is dense by design, making sparse drift-Jacobian graph recovery inapplicable.

\paragraph{Computational resources.}
The reported experiments were run as standard Slurm jobs requesting one NVIDIA L40S GPU and 32 GB memory per job.  For a fixed setting, the five training seeds used in the synthetic tables typically completed in a few minutes.  The implementation does not rely on specialized hardware and can also run on CPU or on other CUDA GPUs, with CPU runs expected to be slower.

\paragraph{Graph thresholding.}
Following Algorithm~\ref{alg:two_stage}, the learned drift-Jacobian graph is obtained by thresholding the mean absolute Jacobian $\overline{|D\tilde{f}_\psi|}$ at a fixed threshold $\tau = 0.05$ in the encoded space, before any scaling correction. The resulting binary adjacency is then permuted back to the original coordinate ordering for comparison with the ground truth.

\paragraph{Evaluation.}
Each experiment is repeated over five independent training seeds. We report mean $\pm$ standard deviation of three metrics defined below. For diagnostic plots (scatter matrices, Jacobian heatmaps, graph comparisons), we use one representative seed, held fixed across the regime conditions shown in each figure.

\subparagraph{Mean correlation coefficient (MCC).}
Given $n$ samples of the true latent coordinates $z \in \R^{n \times d}$ and the learned coordinates $\tilde{z} \in \R^{n \times d}$, we compute the $d \times d$ absolute Pearson correlation matrix $C$ with entries $C_{ij} = \bigl|\operatorname{corr}(z_{\cdot,i},\, \tilde{z}_{\cdot,j})\bigr|$. The optimal one-to-one assignment $\pi^*$ is found by the Hungarian algorithm:
\[
  \pi^* = \arg\max_{\pi \in S_d} \sum_{i=1}^{d} C_{i,\pi(i)},
  \qquad
  \mathrm{MCC} = \frac{1}{d}\sum_{i=1}^{d} C_{i,\pi^*(i)}.
\]
A value near~1 indicates that each learned coordinate aligns with exactly one true latent.

\subparagraph{Monomial score (Mon.).}
Let $\bar{J} = \overline{|D(h_\theta \circ g)(z)|}$ denote the mean absolute Jacobian of the composition of the encoder and observation map, a $d \times d$ non-negative matrix. We normalize $\bar{J}$ along rows and columns separately:
\[
  R_{ij} = \frac{\bar{J}_{ij}}{\sum_{k} \bar{J}_{ik}},
  \qquad
  Q_{ij} = \frac{\bar{J}_{ij}}{\sum_{k} \bar{J}_{kj}},
\]
and define the row and column concentration as the mean of the row-wise and column-wise maxima:
\[
  \rho_{\mathrm{row}} = \frac{1}{d}\sum_{i=1}^{d} \max_{j}\, R_{ij},
  \qquad
  \rho_{\mathrm{col}} = \frac{1}{d}\sum_{j=1}^{d} \max_{i}\, Q_{ij}.
\]
The monomial score is $\mathrm{Mon.} = \tfrac{1}{2}(\rho_{\mathrm{row}} + \rho_{\mathrm{col}})$. For a perfect monomial matrix every row and column has exactly one nonzero entry, giving $\mathrm{Mon.} = 1$; a uniform matrix yields $\mathrm{Mon.} = 1/d$.

\subparagraph{Graph match rate (GMR).}
The mean absolute Jacobian of the learned drift, $\overline{|D\tilde{f}_\psi|}$, is binarized at threshold $\tau$:
$\hat{G}^{\mathrm{enc}}_{ij} = \mathbf{1}\!\bigl[\,\overline{|D\tilde{f}_\psi|}_{ij} > \tau\,\bigr]$.
Let $p$ denote the encoder permutation extracted from the encoder Jacobian, where $p(k)$ is the true coordinate matched to encoded coordinate $k$. This learned drift-Jacobian adjacency is then permuted back to the original coordinate ordering:
$\hat{G}_{ij} = \hat{G}^{\mathrm{enc}}_{p^{-1}(i),\,p^{-1}(j)}$.
In practice, $p$ is obtained by applying the Hungarian algorithm to the mean absolute encoder Jacobian $\bar{J}$.
The ground-truth adjacency $G^*$ includes self-loops ($G^*_{ii} = 1$ for all $i$) because every coordinate has a self-regulating drift term. The graph match rate is the fraction of all $d^2$ entries that agree:
\[
  \mathrm{GMR} = \frac{1}{d^2}\sum_{i,j=1}^{d} \mathbf{1}\!\bigl[\hat{G}_{ij} = G^*_{ij}\bigr].
\]

Table~\ref{tab:hyperparams} summarizes the key simulation and training hyperparameters. Values that differ across drift families are noted explicitly.

\begin{table}[ht!]
  \caption{Simulation and training hyperparameters. All values are shared across the three drift families unless noted otherwise.}
  \label{tab:hyperparams}
  \centering
  \small
  \setlength{\tabcolsep}{5pt}
  \begin{tabular}{lcc}
    \toprule
    & $d=5$ & $d=7$ \\
    \midrule
    \multicolumn{3}{l}{\textit{Data generation}} \\
    Step size $\Delta t$              & 0.005       & 0.0025      \\
    Trajectory length $T$             & 0.1         & 0.05        \\
    Number of trajectories per regime $N$ & 10{,}000 & 20{,}000    \\
    \midrule
    \multicolumn{3}{l}{\textit{Stage 1 (encoder)}} \\
    Hidden dim / layers               & 256 / 4     & 256 / 4     \\
    Learning rate                     & $10^{-3}$   & $10^{-3}$   \\
    Epochs                            & 20{,}000    & 20{,}000    \\
    Batch size                        & 4{,}096     & 4{,}096     \\
    \midrule
    \multicolumn{3}{l}{\textit{Stage 2 (drift, sparse linear \& nonlinear only)}} \\
    Drift MLP dim / layers$^\dagger$  & 128 / 3     & 128 / 3     \\
    Learning rate                     & $5\!\times\!10^{-4}$ & $5\!\times\!10^{-4}$ \\
    Epochs (sparse linear)            & 10{,}000    & 10{,}000    \\
    Epochs (nonlinear)                & 20{,}000    & 20{,}000    \\
    Velocity stride $s$                & 1           & 1           \\
    Jacobian batch size (nonlinear)    & 256         & 512         \\
    $\lambda_{\mathrm{sparse}}$ (sparse linear) & 0.3 & 0.25    \\
    $\lambda_{\mathrm{sparse}}$ (nonlinear)$^\ddagger$     & 0.4 & 0.4      \\
    \bottomrule
  \end{tabular}
  \\[2pt]
  {\footnotesize $^\dagger$Nonlinear experiments only; sparse linear uses a linear drift model $\tilde{f}_\psi(\tilde{z}) = \hat{A}\tilde{z}$.}\\
  {\footnotesize $^\ddagger$Leaky-$\tanh$ mixing; the LeakyReLU experiments use $\lambda_{\mathrm{sparse}} = 0.9$ for both $d=5$ and $d=7$.}
\end{table}

\clearpage

\section{Additional simulation results}
\label{app:simulations}

Table~\ref{tab:simulation_lrelu} reports results under the same three drift families and regime conditions as in the main text (Table~\ref{tab:simulation_main}), but with a three-layer MLP using LeakyReLU activations as the mixing function. The same identifiability gap persists: Distinct ratios yields near-perfect disentanglement across all drift families, whereas both controls substantially reduce MCC and monomial score.

\begin{table*}[ht!]
  \caption{Simulation results on $d=5$ under a three-layer MLP with LeakyReLU activations. All entries are mean $\pm$ standard deviation over five seeds. \textbf{Bold} highlights the Distinct ratios condition; the other two conditions are controls. GMR is omitted for the dense linear family because graph sparsity is not the target there.}
  \label{tab:simulation_lrelu}
  \centering
  \small
  \setlength{\tabcolsep}{6pt}
  \begin{tabular}{llccc}
    \toprule
    Drift family & Regime condition & MCC $\uparrow$ & Mon.\ $\uparrow$ & GMR $\uparrow$ \\
    \midrule
    Dense linear & Distinct ratios & $\mathbf{0.9977 \pm 0.0002}$ & $\mathbf{0.8429 \pm 0.0068}$ & -- \\
    Dense linear & One regime & $0.7363 \pm 0.0516$ & $0.4178 \pm 0.0304$ & -- \\
    Dense linear & Proportional diffusions & $0.6978 \pm 0.0234$ & $0.3877 \pm 0.0113$ & -- \\
    \midrule
    Sparse linear & Distinct ratios & $\mathbf{0.9979 \pm 0.0003}$ & $\mathbf{0.8667 \pm 0.0088}$ & $\mathbf{1.0000 \pm 0.0000}$ \\
    Sparse linear & One regime & $0.7364 \pm 0.0491$ & $0.4129 \pm 0.0273$ & $0.5840 \pm 0.0543$ \\
    Sparse linear & Proportional diffusions & $0.7153 \pm 0.0600$ & $0.4081 \pm 0.0375$ & $0.5680 \pm 0.0588$ \\
    \midrule
    Nonlinear & Distinct ratios & $\mathbf{0.9979 \pm 0.0002}$ & $\mathbf{0.8660 \pm 0.0068}$ & $\mathbf{1.0000 \pm 0.0000}$ \\
    Nonlinear & One regime & $0.7387 \pm 0.0647$ & $0.4269 \pm 0.0332$ & $0.6160 \pm 0.0650$ \\
    Nonlinear & Proportional diffusions & $0.7336 \pm 0.0495$ & $0.4142 \pm 0.0328$ & $0.7200 \pm 0.1043$ \\
    \bottomrule
  \end{tabular}
\end{table*}

Tables~\ref{tab:simulation_ltanh_d7} and~\ref{tab:simulation_lrelu_d7} extend the analysis to $d=7$ under leaky-$\tanh$ and LeakyReLU mixing, respectively. The same qualitative pattern holds: Distinct ratios yields near-perfect disentanglement across all three drift families, whereas both controls substantially reduce coordinate recovery.

\begin{table*}[ht!]
  \caption{Simulation results on $d=7$ under a three-layer MLP with leaky-$\tanh$ activations. All entries are mean $\pm$ standard deviation over five seeds. \textbf{Bold} highlights the Distinct ratios condition; the other two conditions are controls. GMR is omitted for the dense linear family because graph sparsity is not the target there.}
  \label{tab:simulation_ltanh_d7}
  \centering
  \small
  \setlength{\tabcolsep}{6pt}
  \begin{tabular}{llccc}
    \toprule
    Drift family & Regime condition & MCC $\uparrow$ & Mon.\ $\uparrow$ & GMR $\uparrow$ \\
    \midrule
    Dense linear & Distinct ratios & $\mathbf{0.9949 \pm 0.0041}$ & $\mathbf{0.8283 \pm 0.0196}$ & -- \\
    Dense linear & One regime & $0.6745 \pm 0.0555$ & $0.3520 \pm 0.0406$ & -- \\
    Dense linear & Proportional diffusions & $0.6319 \pm 0.0276$ & $0.3252 \pm 0.0225$ & -- \\
    \midrule
    Sparse linear & Distinct ratios & $\mathbf{0.9974 \pm 0.0005}$ & $\mathbf{0.8616 \pm 0.0090}$ & $\mathbf{0.9959 \pm 0.0082}$ \\
    Sparse linear & One regime & $0.6441 \pm 0.0535$ & $0.3424 \pm 0.0279$ & $0.6327 \pm 0.0428$ \\
    Sparse linear & Proportional diffusions & $0.6350 \pm 0.0575$ & $0.3270 \pm 0.0351$ & $0.6776 \pm 0.0523$ \\
    \midrule
    Nonlinear & Distinct ratios & $\mathbf{0.9972 \pm 0.0002}$ & $\mathbf{0.8564 \pm 0.0053}$ & $\mathbf{0.9755 \pm 0.0153}$ \\
    Nonlinear & One regime & $0.6256 \pm 0.0468$ & $0.3264 \pm 0.0239$ & $0.7061 \pm 0.0277$ \\
    Nonlinear & Proportional diffusions & $0.6387 \pm 0.0484$ & $0.3230 \pm 0.0311$ & $0.8204 \pm 0.0327$ \\
    \bottomrule
  \end{tabular}
\end{table*}

\begin{table*}[ht!]
  \caption{Simulation results on $d=7$ under a three-layer MLP with LeakyReLU activations. All entries are mean $\pm$ standard deviation over five seeds. \textbf{Bold} highlights the Distinct ratios condition; the other two conditions are controls. GMR is omitted for the dense linear family because graph sparsity is not the target there.}
  \label{tab:simulation_lrelu_d7}
  \centering
  \small
  \setlength{\tabcolsep}{6pt}
  \begin{tabular}{llccc}
    \toprule
    Drift family & Regime condition & MCC $\uparrow$ & Mon.\ $\uparrow$ & GMR $\uparrow$ \\
    \midrule
    Dense linear & Distinct ratios & $\mathbf{0.9948 \pm 0.0006}$ & $\mathbf{0.7815 \pm 0.0039}$ & -- \\
    Dense linear & One regime & $0.6750 \pm 0.0519$ & $0.3364 \pm 0.0259$ & -- \\
    Dense linear & Proportional diffusions & $0.6452 \pm 0.0550$ & $0.3123 \pm 0.0324$ & -- \\
    \midrule
    Sparse linear & Distinct ratios & $\mathbf{0.9960 \pm 0.0005}$ & $\mathbf{0.8023 \pm 0.0114}$ & $\mathbf{1.0000 \pm 0.0000}$ \\
    Sparse linear & One regime & $0.6542 \pm 0.0410$ & $0.3327 \pm 0.0238$ & $0.5714 \pm 0.0671$ \\
    Sparse linear & Proportional diffusions & $0.6319 \pm 0.0251$ & $0.3213 \pm 0.0180$ & $0.5837 \pm 0.0909$ \\
    \midrule
    Nonlinear & Distinct ratios & $\mathbf{0.9961 \pm 0.0007}$ & $\mathbf{0.8095 \pm 0.0078}$ & $\mathbf{0.9388 \pm 0.0316}$ \\
    Nonlinear & One regime & $0.6343 \pm 0.0367$ & $0.3276 \pm 0.0168$ & $0.5429 \pm 0.0988$ \\
    Nonlinear & Proportional diffusions & $0.5969 \pm 0.0278$ & $0.3044 \pm 0.0136$ & $0.6612 \pm 0.1625$ \\
    \bottomrule
  \end{tabular}
\end{table*}

\subsection{Dense linear drift}
\label{app:dense_linear}

The dense linear experiments use an Ornstein--Uhlenbeck process $\mathrm{d}z = Az\,\mathrm{d}t + G\,\mathrm{d}W$ with a symmetric negative-definite drift matrix $A$. This family evaluates coordinate recovery without using graph sparsity as an evaluation target. We construct $A$ as
\[
  A = -(MM^\top + \lambda I), \qquad M_{ij} \stackrel{\mathrm{iid}}{\sim} \mathcal{N}(0,1),
\]
with $\lambda = 0.5$ to ensure a stability margin. The random matrix $M$ is drawn with a fixed seed, so the same $A$ is shared across all training seeds and all three regime conditions. For $d=5$ the resulting drift matrix, rounded to two decimals, is
\[
  A =
  \begin{pmatrix}
    -4.16  &   0.00 & -2.79 &  1.30 & -4.78 \\
     0.00  &  -1.74 & -1.56 &  1.26 &  1.57 \\
    -2.79  &  -1.56 & -5.71 &  2.56 & -0.36 \\
     1.30  &   1.26 &  2.56 & -3.51 &  0.15 \\
    -4.78  &   1.57 & -0.36 &  0.15 & -10.27
  \end{pmatrix}.
\]

Figure~\ref{fig:dense_scatter} shows the pairwise scatter plots of true latent coordinates $z_j$ versus learned representations $\tilde{z}_i$ for each regime condition. Under Distinct ratios, each learned coordinate aligns tightly with exactly one true coordinate, producing a near-permutation pattern. Both controls lose this structure.

Figure~\ref{fig:dense_diagnostics} displays the mean absolute Jacobian $\overline{|D\varphi|}$ of the composition $\varphi = h \circ g$. Under Distinct ratios the matrix is near-monomial, confirming that the encoder inverts the mixing up to coordinate permutation and scaling. Both controls yield less monomial, more mixed encoder Jacobians. Graph recovery is not shown because the drift is dense, so sparse graph recovery is not the evaluation target.

\begin{figure*}[ht!]
  \centering
  \includegraphics[width=\textwidth]{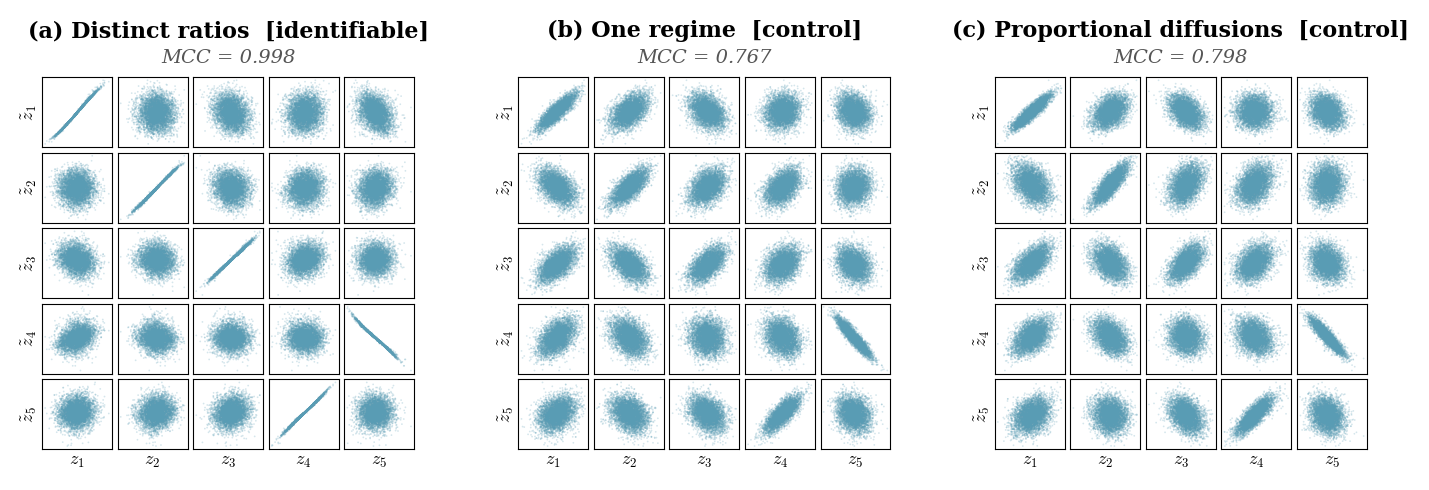}
  \caption{Latent recovery in the dense linear $d=5$ setting (three-layer MLP, leaky-$\tanh$ mixing). Each column shows the $5 \times 5$ scatter matrix of true latent coordinates $z_j$ (horizontal) versus learned representations $\tilde{z}_i$ (vertical) for one regime condition. Under Distinct ratios, each $\tilde{z}_i$ aligns with exactly one $z_j$. Both controls fail to disentangle.}
  \label{fig:dense_scatter}
\end{figure*}

\begin{figure*}[ht!]
  \centering
  \includegraphics[width=\textwidth]{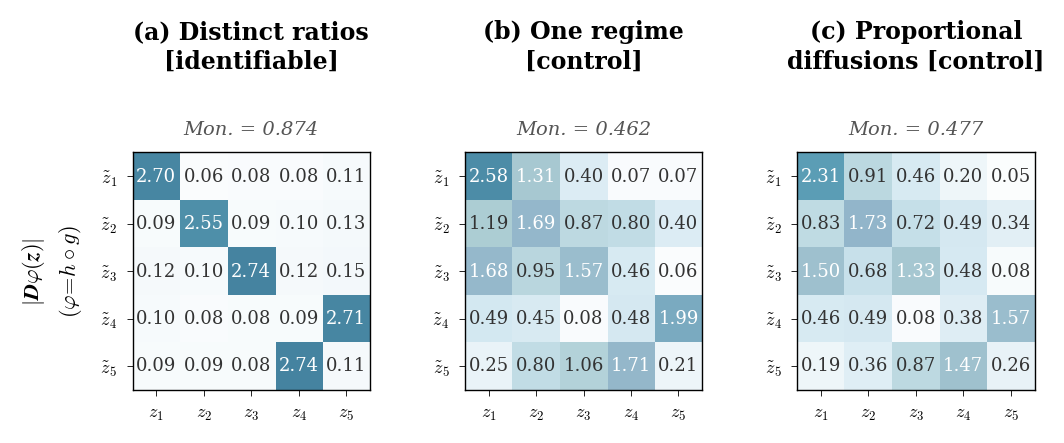}
  \caption{Mean absolute encoder Jacobian $\overline{|D\varphi|}$ ($\varphi = h \circ g$) for the dense linear $d=5$ setting. All three regime conditions use the same representative seed. A near-monomial matrix confirms successful inversion of the mixing under Distinct ratios. Both controls yield less monomial, more mixed encoder Jacobians.}
  \label{fig:dense_diagnostics}
\end{figure*}

The $d=7$ case uses the same construction with identical $\lambda = 0.5$. The resulting drift matrix, rounded to two decimals, is
\[
  A =
  \begin{pmatrix}
    -4.89 &  1.11 & -2.12 & -0.13 &  0.98 & -2.94 &  4.09 \\
     1.11 & -5.21 & -2.30 & -0.70 &  4.00 & -1.50 &  0.93 \\
    -2.12 & -2.30 & -9.39 & -0.37 & -0.35 & -5.45 &  4.37 \\
    -0.13 & -0.70 & -0.37 & -8.22 &  4.36 & -3.82 & -1.67 \\
     0.98 &  4.00 & -0.35 &  4.36 & -8.66 &  3.26 & -1.52 \\
    -2.94 & -1.50 & -5.45 & -3.82 &  3.26 & -7.56 &  4.88 \\
     4.09 &  0.93 &  4.37 & -1.67 & -1.52 &  4.88 & -8.83
  \end{pmatrix}.
\]

Figures~\ref{fig:dense_scatter_d7} and~\ref{fig:dense_diagnostics_d7} show the $d=7$ results. The same qualitative pattern holds: Distinct ratios achieves near-perfect disentanglement, while both controls show substantially less coordinate recovery.

\begin{figure*}[ht!]
  \centering
  \includegraphics[width=\textwidth]{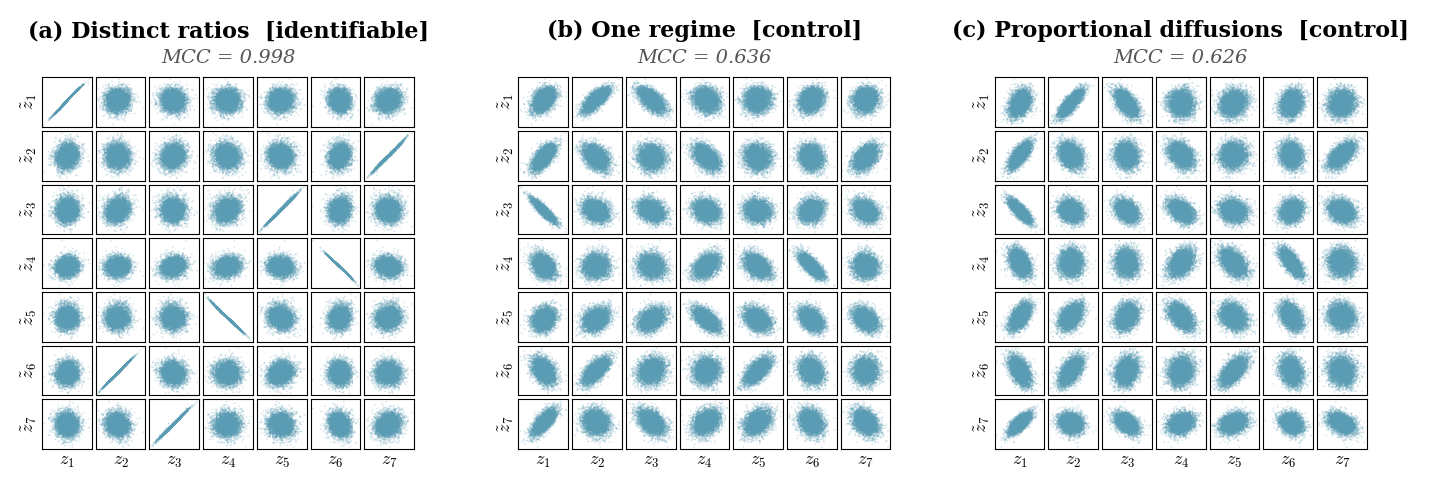}
  \caption{Latent recovery in the dense linear $d=7$ setting (three-layer MLP, leaky-$\tanh$ mixing). Layout follows Figure~\ref{fig:dense_scatter}. Under Distinct ratios, each $\tilde{z}_i$ aligns with exactly one $z_j$. Both controls fail to disentangle.}
  \label{fig:dense_scatter_d7}
\end{figure*}

\begin{figure*}[ht!]
  \centering
  \includegraphics[width=\textwidth]{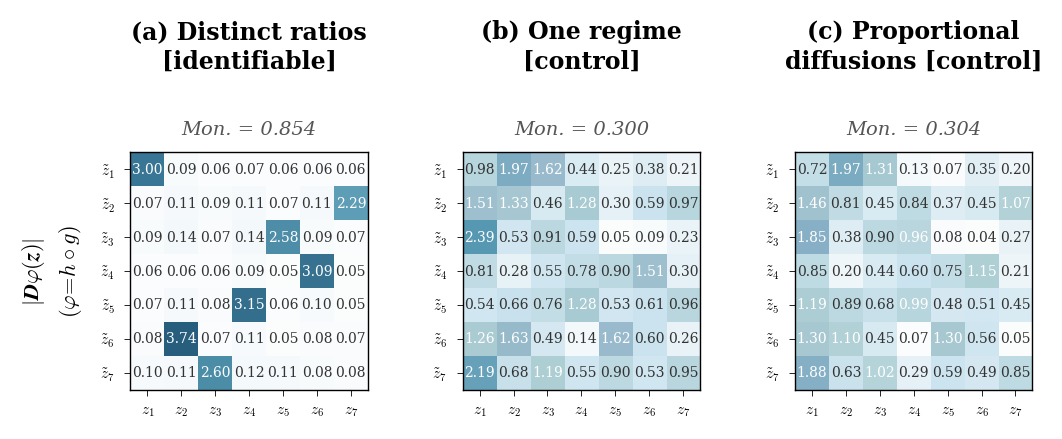}
  \caption{Mean absolute encoder Jacobian $\overline{|D\varphi|}$ ($\varphi = h \circ g$) for the dense linear $d=7$ setting. All three regime conditions use the same representative seed. A near-monomial matrix confirms successful inversion under Distinct ratios.}
  \label{fig:dense_diagnostics_d7}
\end{figure*}

\clearpage

\subsection{Sparse linear drift}
\label{app:sparse_linear}

The sparse linear experiments use an Ornstein--Uhlenbeck process with a sparse drift matrix $A$ built from a directed cycle support. The nonzero off-diagonal support entries are placed at $(i,i+1)$ for $i=1,\ldots,d-1$ and at $(d,1)$, and the drift matrix is constructed as
\[
  A_{ii} = -\alpha, \qquad
  A_{ij} = \beta \, c_{ij} \;\text{if } \mathrm{adj}(i,j)=1, \qquad
  A_{ij} = 0 \;\text{otherwise},
\]
where $\alpha = 1.0$, $\beta = 1.5$, and $c_{ij} \stackrel{\mathrm{iid}}{\sim} \mathrm{Uniform}(0.8, 1.2)$ with a fixed seed. The same $A$ is shared across all training seeds and all three regime conditions.

For $d=5$ the drift matrix, rounded to two decimals, is
\[
  A =
  \begin{pmatrix}
    -1.00 &  1.77 &  0    &  0    &  0    \\
     0    & -1.00 &  1.72 &  0    &  0    \\
     0    &  0    & -1.00 &  1.33 &  0    \\
     0    &  0    &  0    & -1.00 &  1.37 \\
     1.57 &  0    &  0    &  0    & -1.00
  \end{pmatrix}.
\]

Figure~\ref{fig:sparse_scatter} shows the pairwise scatter plots for the sparse linear $d=5$ setting. Figure~\ref{fig:sparse_diagnostics} displays the encoder Jacobian and graph recovery. Under Distinct ratios, the encoder inverts the mixing up to permutation and scaling, and the thresholded drift-Jacobian graph is recovered exactly (GMR $=1$ for the representative seed).

\begin{figure*}[ht!]
  \centering
  \includegraphics[width=\textwidth]{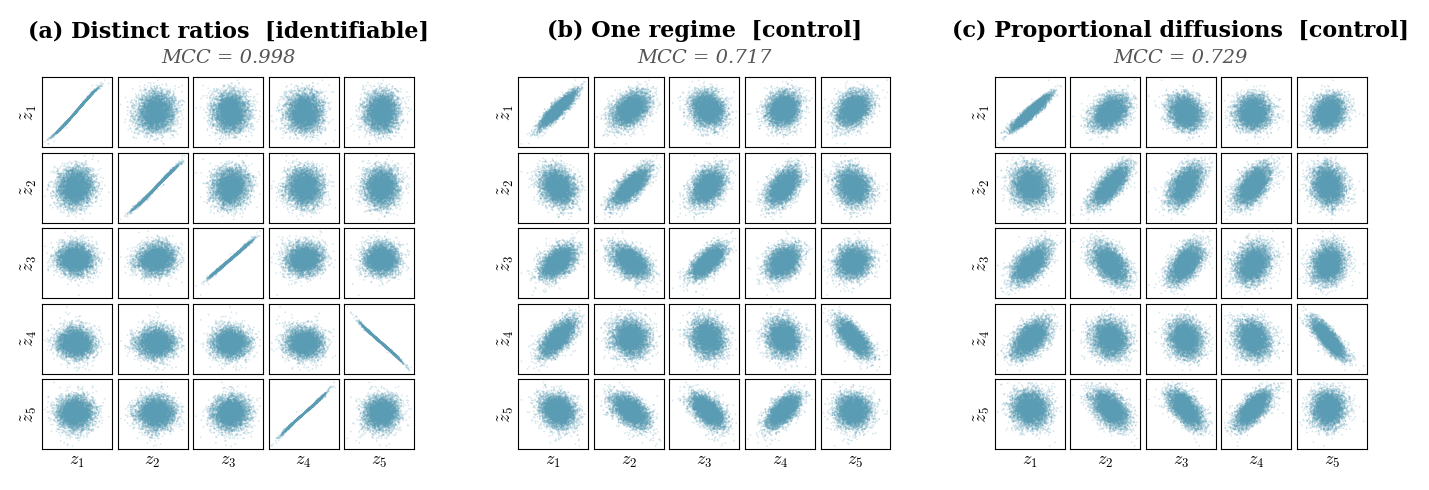}
  \caption{Latent recovery in the sparse linear $d=5$ setting (three-layer MLP, leaky-$\tanh$ mixing). Each column shows the $5 \times 5$ scatter matrix of true latent coordinates $z_j$ (horizontal) versus learned representations $\tilde{z}_i$ (vertical) for one regime condition.}
  \label{fig:sparse_scatter}
\end{figure*}

\begin{figure*}[ht!]
  \centering
  \includegraphics[width=\textwidth]{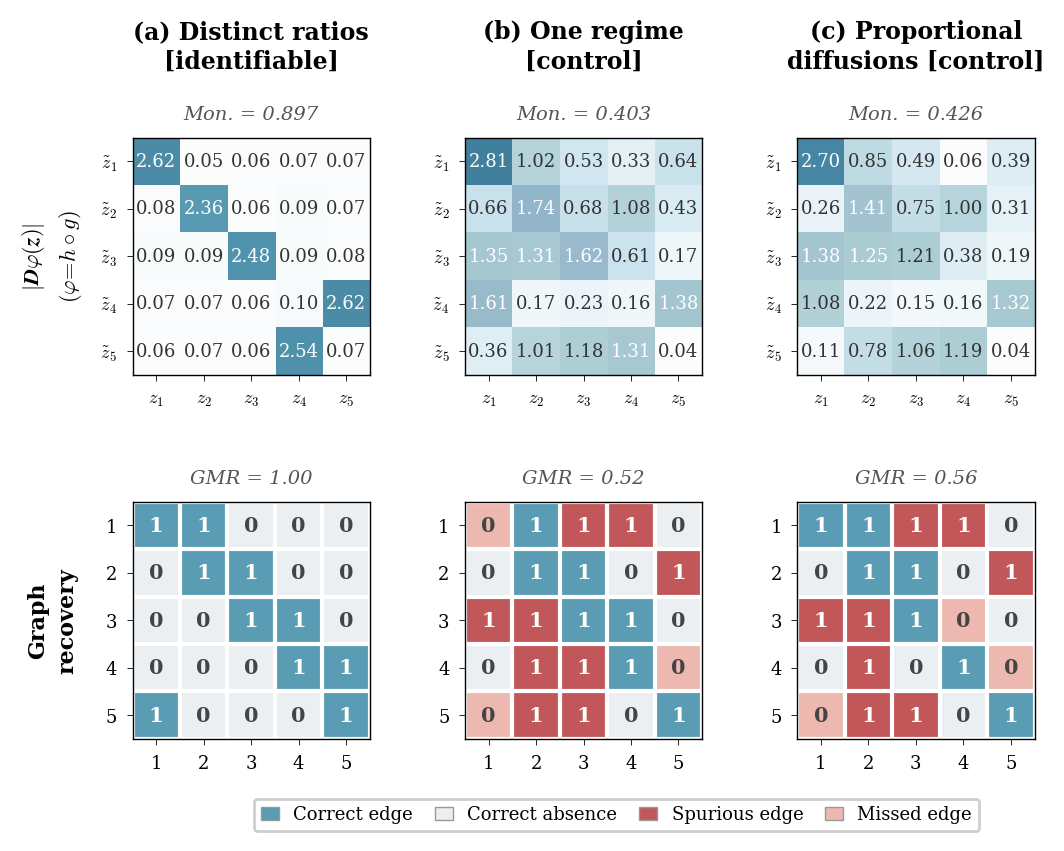}
  \caption{Structural diagnostics for the sparse linear $d=5$ setting. All three regime conditions use the same representative seed. \textbf{Top row}: mean absolute Jacobian $\overline{|D\varphi|}$. \textbf{Bottom row}: learned causal graph versus the ground-truth graph.}
  \label{fig:sparse_diagnostics}
\end{figure*}

The $d=7$ case uses the same construction. The drift matrix, rounded to two decimals, is
\[
  A =
  \begin{pmatrix}
    -1.00 &  1.77 &  0    &  0    &  0    &  0    &  0    \\
     0    & -1.00 &  1.62 &  0    &  0    &  0    &  0    \\
     0    &  0    & -1.00 &  1.51 &  0    &  0    &  0    \\
     0    &  0    &  0    & -1.00 &  1.67 &  0    &  0    \\
     0    &  0    &  0    &  0    & -1.00 &  1.77 &  0    \\
     0    &  0    &  0    &  0    &  0    & -1.00 &  1.50 \\
     1.22 &  0    &  0    &  0    &  0    &  0    & -1.00
  \end{pmatrix}.
\]

Figures~\ref{fig:sparse_scatter_d7} and~\ref{fig:sparse_diagnostics_d7} show the corresponding results for $d=7$. The same qualitative pattern holds: Distinct ratios gives near-monomial encoder Jacobians and near-perfect graph recovery, whereas the controls show mixed encoder Jacobians and lower GMR.

\begin{figure*}[ht!]
  \centering
  \includegraphics[width=\textwidth]{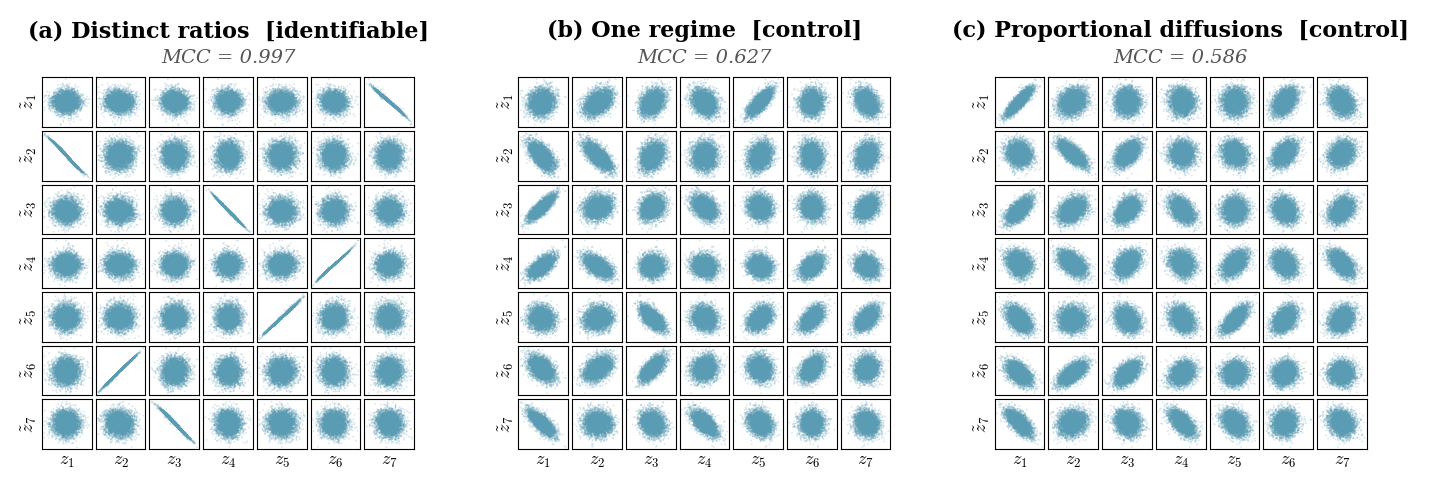}
  \caption{Latent recovery in the sparse linear $d=7$ setting (three-layer MLP, leaky-$\tanh$ mixing). Layout follows Figure~\ref{fig:sparse_scatter}.}
  \label{fig:sparse_scatter_d7}
\end{figure*}

\begin{figure*}[ht!]
  \centering
  \includegraphics[width=\textwidth]{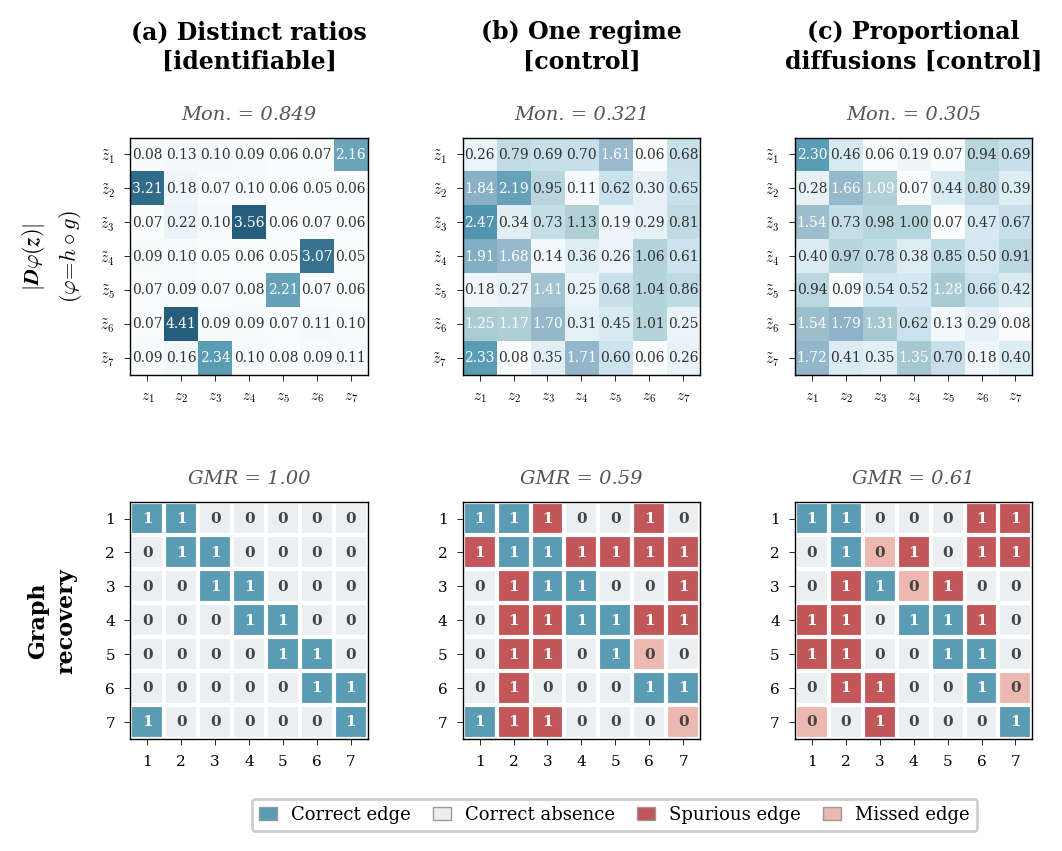}
  \caption{Structural diagnostics for the sparse linear $d=7$ setting. All three regime conditions use the same representative seed. \textbf{Top row}: mean absolute encoder Jacobian $\overline{|D\varphi|}$. \textbf{Bottom row}: learned causal graph versus the ground-truth graph.}
  \label{fig:sparse_diagnostics_d7}
\end{figure*}

\clearpage

\subsection{Nonlinear sparse drift}
\label{app:nonlinear}

The nonlinear experiments use the same directed cycle support as the sparse linear case (Section~\ref{app:sparse_linear}), but replace the linear coupling with a $\tanh$ nonlinearity. Concretely, the drift function is
\[
  f_i(z) \;=\; -\alpha \, z_i \;+\; \beta \sum_{j:\,\mathrm{adj}(i,j)=1} c_{ij}\,\tanh(z_j),
\]
where $\alpha = 1.0$ enforces mean-reversion along the diagonal and $\beta = 1.5$ controls the coupling strength. The coefficients $c_{ij}$ are drawn independently from $\mathrm{Uniform}(0.8, 1.2)$ with a fixed seed and are nonzero only where the adjacency is~$1$. For $d=5$ the nonzero coupling coefficients, rounded to two decimals, are
\[
  c_{12} = 1.18,\quad
  c_{23} = 1.15,\quad
  c_{34} = 0.88,\quad
  c_{45} = 0.92,\quad
  c_{51} = 1.04.
\]
Here $\mathrm{adj}(i,j)=1$ means that coordinate $z_j$ enters the $i$th drift component $f_i$.

For $d=7$ the nonzero off-diagonal support entries are again placed at $(i,i+1)$ for $i=1,\ldots,6$ and at $(7,1)$, and the nonzero coupling coefficients, rounded to two decimals, are
\[
  c_{12} = 1.09,\quad
  c_{23} = 0.95,\quad
  c_{34} = 1.13,\quad
  c_{45} = 1.06,\quad
  c_{56} = 1.03,\quad
  c_{67} = 0.97,\quad
  c_{71} = 1.08.
\]
The Jacobian of $f$ evaluated at the origin recovers the same sparse support pattern as in Section~\ref{app:sparse_linear}, but away from the origin the $\tanh$ saturates, making the dynamics genuinely nonlinear.

For each fixed dimension, the same drift function is shared across the leaky-$\tanh$ and LeakyReLU mixing experiments. The main text (Figures~\ref{fig:scatter} and~\ref{fig:diagnostics}) reports the $d=5$ results under leaky-$\tanh$ mixing. Here we show the corresponding results under LeakyReLU mixing for both $d=5$ and $d=7$. All three regime conditions in each figure use the same representative training seed for a fair comparison. The identifiability gap is consistent with the leaky-$\tanh$ results in the main text. Under Distinct ratios, the displayed run recovers clear coordinate alignment and a substantially more accurate thresholded graph; the controls show mixed coordinates and graph errors.

\begin{figure*}[ht!]
  \centering
  \includegraphics[width=\textwidth]{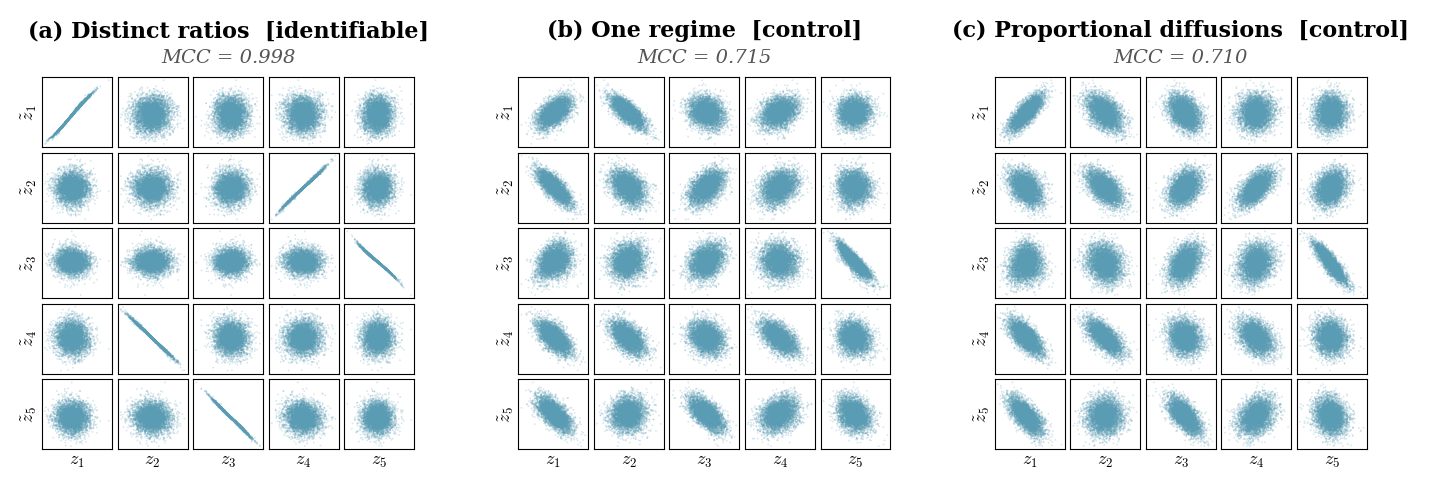}
  \caption{Latent recovery in the nonlinear $d=5$ setting (three-layer MLP, LeakyReLU mixing). Each column shows the $5 \times 5$ scatter matrix of true latent coordinates $z_j$ (horizontal) versus learned representations $\tilde{z}_i$ (vertical) for one regime condition.}
  \label{fig:nonlinear_lrelu_scatter}
\end{figure*}

\begin{figure*}[ht!]
  \centering
  \includegraphics[width=\textwidth]{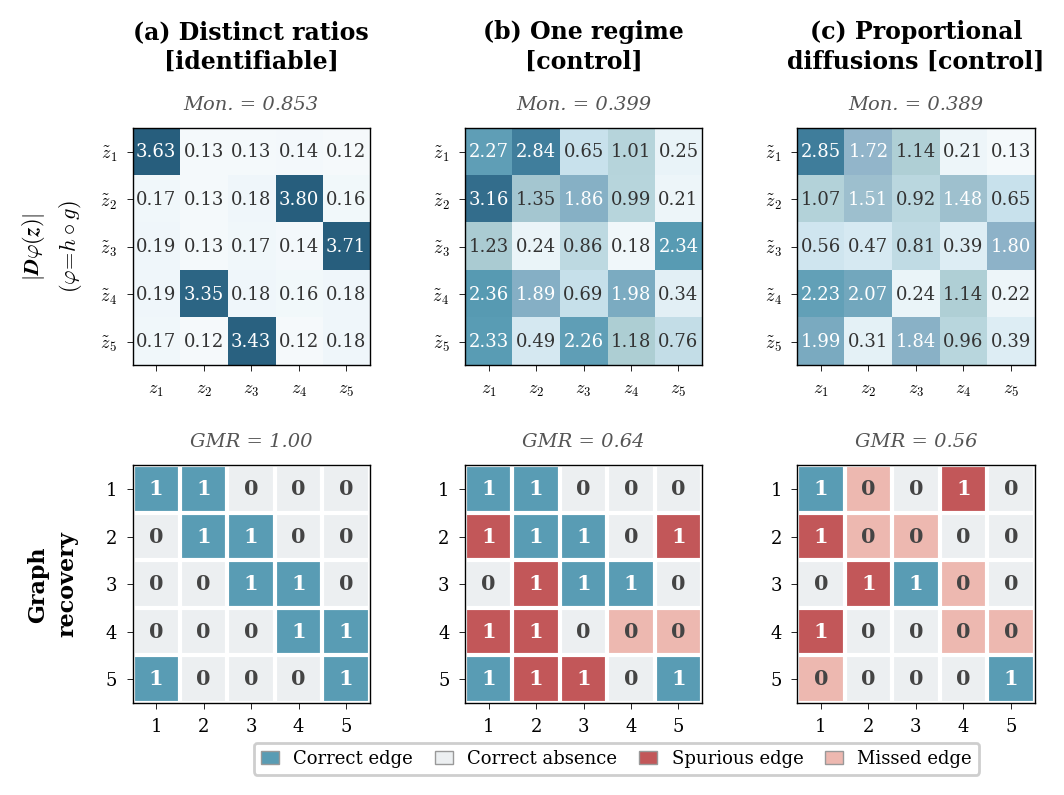}
  \caption{Structural diagnostics for the nonlinear $d=5$ setting with LeakyReLU mixing. All three regime conditions use the same representative seed. \textbf{Top row}: mean absolute encoder Jacobian $\overline{|D\varphi|}$, where $\varphi=h\circ g$. \textbf{Bottom row}: learned causal graph versus the ground-truth graph.}
  \label{fig:nonlinear_lrelu_diagnostics}
\end{figure*}

Figures~\ref{fig:nonlinear_lrelu_scatter_d7} and~\ref{fig:nonlinear_lrelu_diagnostics_d7} show the $d=7$ results under LeakyReLU mixing. The identifiability gap remains clear: Distinct ratios achieves near-perfect disentanglement and high graph match rate, whereas both controls show weaker coordinate recovery and less reliable graph recovery.

\begin{figure*}[ht!]
  \centering
  \includegraphics[width=\textwidth]{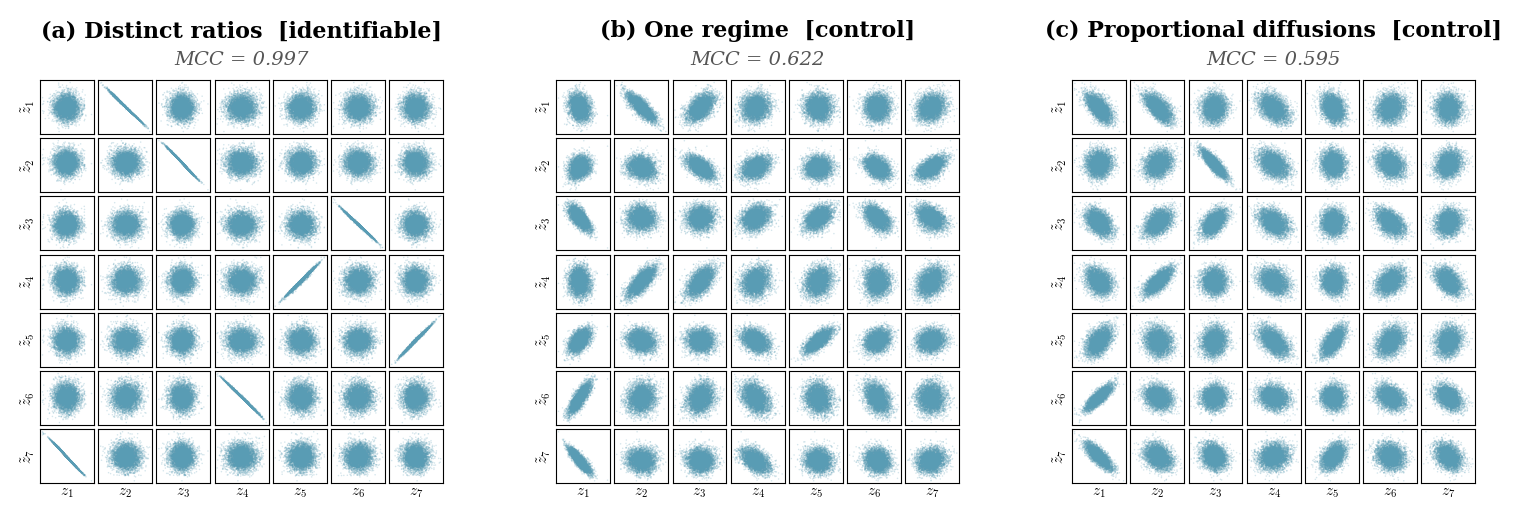}
  \caption{Latent recovery in the nonlinear $d=7$ setting (three-layer MLP, LeakyReLU mixing). Each column shows the $7 \times 7$ scatter matrix of true latent coordinates $z_j$ (horizontal) versus learned representations $\tilde{z}_i$ (vertical) for one regime condition. Under Distinct ratios, each $\tilde{z}_i$ aligns with exactly one $z_j$. Both controls fail to disentangle.}
  \label{fig:nonlinear_lrelu_scatter_d7}
\end{figure*}

\begin{figure*}[ht!]
  \centering
  \includegraphics[width=\textwidth]{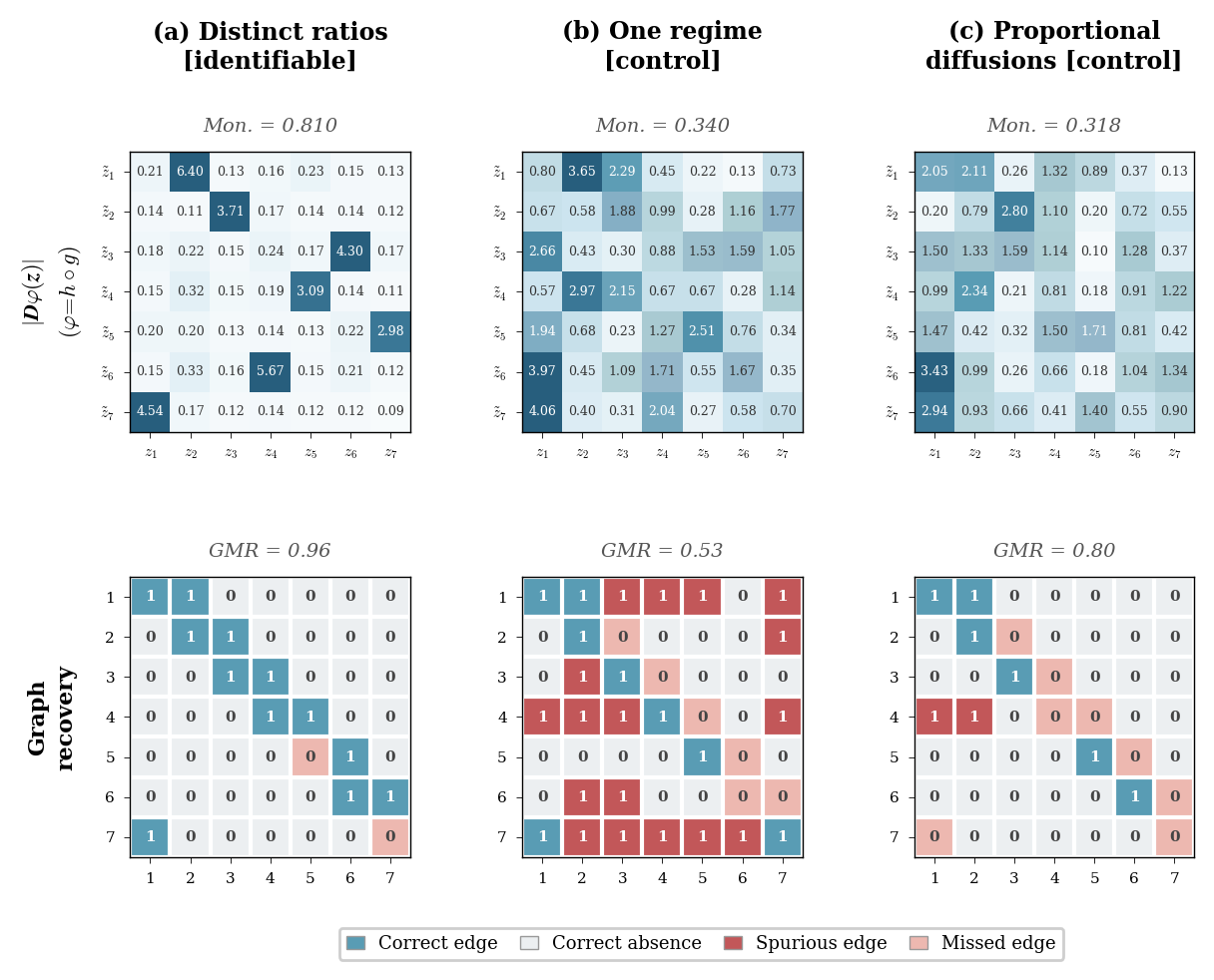}
  \caption{Structural diagnostics for the nonlinear $d=7$ setting with LeakyReLU mixing. All three regime conditions use the same representative seed. \textbf{Top row}: mean absolute encoder Jacobian $\overline{|D\varphi|}$, where $\varphi=h\circ g$. \textbf{Bottom row}: learned causal graph versus the ground-truth graph.}
  \label{fig:nonlinear_lrelu_diagnostics_d7}
\end{figure*}

\clearpage
\section{Real-world case study: Hardanger Bridge}
\label{app:hardanger}

This appendix gives the setup and diagnostics behind the Hardanger Bridge
experiment reported in Section~\ref{subsec:hardanger}.  The goal of this
experiment is not to evaluate recovery against ground-truth latent coordinates,
which are unavailable in the real bridge data.  Instead, we test whether the
TI-split two-regime construction produces a more reproducible latent
representation and drift structure than controls that remove the
turbulence-intensity (TI) contrast.  The appendix first describes the data and
training protocol, then reports cross-seed reproducibility, diffusion-fingerprint
diagnostics, and a representative learned drift structure.

\subsection{Dataset and regime construction}
\label{app:hardanger_setup}

We use operational acceleration recordings from the Hardanger Bridge, a
long-span suspension bridge monitored with wind and acceleration sensors
\cite{fenerci2020wind,fenerci2021data}.  The dataset DOI/repository record
lists a Creative Commons Attribution 4.0 International (CC BY 4.0) license
\cite{fenerci2020wind}.  The model receives only bridge-deck accelerometer
channels as observations.  Wind measurements are used only to construct
regimes, not as model inputs.

To keep the shared-drift assumption physically meaningful, we restrict the data
to a single mean-wind-speed bin, $4$--$6\,\mathrm{m/s}$.  Within this bin, each
raw recording is divided into non-overlapping $120\,\mathrm{s}$ sub-windows.
For each sub-window we compute the turbulence intensity
\[
  \mathrm{TI} = \frac{\mathrm{std}(\text{wind speed})}
                     {\mathrm{mean}(\text{wind speed})}.
\]
The two regimes are then defined by a quantile split of TI: the bottom $20\%$
of sub-windows form the low-TI regime and the top $20\%$ form the high-TI
regime; the middle $60\%$ is discarded.  This construction is the real-world
analogue of the distinct-diffusion-regime condition in the synthetic
experiments: the bridge dynamics are approximately shared within a matched wind
speed bin, while turbulence intensity changes the stochastic excitation.

Table~\ref{tab:hardanger_data_setup} summarizes the resulting dataset.  The
accelerometer signals are detrended, low-pass filtered at $2.5\,\mathrm{Hz}$,
resampled to $10\,\mathrm{Hz}$, and standardized with one scaler fitted jointly
across both regimes.  The selected $120\,\mathrm{s}$ sub-windows are then
converted into overlapping $12.8\,\mathrm{s}$ trajectory windows with
$6.4\,\mathrm{s}$ stride for training.

\begin{table}[ht!]
  \caption{Hardanger data construction for the reported $4$--$6\,\mathrm{m/s}$,
  16-channel experiment.  The same training tensors are used for the
  TI-split-regime experiment and for both controls, except that the controls
  alter or remove the regime labels as described in
  Section~\ref{app:hardanger_protocol}.}
  \label{tab:hardanger_data_setup}
  \centering
  \small
  \setlength{\tabcolsep}{6pt}
  \begin{tabular}{ll}
    \toprule
    Quantity & Value \\
    \midrule
    Mean-wind-speed bin & $4$--$6\,\mathrm{m/s}$ \\
    Accelerometer channels & $16$ bridge-deck channels \\
    Regime statistic window & $120\,\mathrm{s}$, non-overlapping \\
    Low-TI threshold & $0.0336$ \\
    High-TI threshold & $0.0794$ \\
    Selected sub-windows & $987$ low-TI and $987$ high-TI \\
    Trajectory window & $12.8\,\mathrm{s}$ ($128$ steps at $\Delta t=0.1$) \\
    Trajectory stride & $6.4\,\mathrm{s}$ \\
    Training trajectories & $16{,}779$ per regime \\
    Input dimension / latent dimension & $d_x=16$, $d_z=5$ \\
    \bottomrule
  \end{tabular}
\end{table}

\subsection{Training protocol and controls}
\label{app:hardanger_protocol}

We use the same two-stage estimator as in Section~\ref{sec:estimation}.  The
encoder is a three-layer MLP with hidden dimension $64$, and the drift model is
linear in the learned $d_z=5$ coordinates.  Both stages are trained for up to
$10{,}000$ epochs with early stopping patience $100$.  Stage~2 uses stride
$s=6$, sparse-Jacobian penalty $\lambda_{\mathrm{sparse}}=1.0$, and graph
threshold $\tau=0.05$.  All three regime constructions use the same list of ten
random training seeds.  The Hardanger runs use the same single-GPU compute setup
as the simulation experiments in Appendix~\ref{app:training}.

We compare three regime constructions:
\begin{itemize}[leftmargin=1.5em]
  \item \emph{TI-split regimes (ours)}: the bottom and top TI quantiles are
  used as two regimes.
  \item \emph{One regime (control)}: the two TI groups are merged into a single
  environment.  This preserves the total amount of data but removes
  regime-specific diffusion parameters.
  \item \emph{Shuffled TI labels (control)}: the two TI groups are pooled and
  randomly re-split into two equally sized groups.  This preserves the
  two-regime training format but destroys the turbulence-based contrast.
\end{itemize}

\subsection{Evaluation metrics}
\label{app:hardanger_metrics}

Because ground-truth latent states are not available, we evaluate
reproducibility across random seeds.  For each seed, we encode the same set of
$5{,}000$ evaluation points.  For every pair of seeds, we align learned
coordinates by Hungarian matching on absolute correlation and compute the mean
matched absolute correlation coefficient (MCC).  We then use the same alignment
to compare the thresholded drift-Jacobian graphs and report the graph match
rate (GMR).  With ten seeds, this gives $\binom{10}{2}=45$ seed pairs.  These
are reproducibility diagnostics rather than accuracy metrics: high values mean
that independent trainings recover compatible coordinate systems and drift
graphs.  As an auxiliary within-seed diagnostic, we also report the mean absolute
off-diagonal correlation among learned latent coordinates; lower values indicate
less redundant latent coordinates.

We also report diffusion-fingerprint diagnostics for the two-regime conditions.
For parametric fingerprints, we use the learned diagonal variances
$\tilde\sigma_{e,k}^2$ from Stage~1 and compute
$r_k^{\mathrm{par}}=\tilde\sigma_{2,k}^2/\tilde\sigma_{1,k}^2$.  For empirical
fingerprints, we compute residual covariance matrices $C_e$ after Stage~2 and
use $r_k^{\mathrm{emp}}=C_{2,kk}/C_{1,kk}$.  Since the learned coordinates are
identified only up to permutation and scaling, and since the sign of the
log-ratio follows the internal regime ordering, we summarize contrast magnitude
by $|\log r_k|$ pooled over seeds and latent dimensions.

\subsection{Cross-seed consistency}
\label{app:hardanger_consistency}

Table~\ref{tab:hardanger_cross_seed_full} expands the main-text result with
descriptive standard deviations across seed pairs.  TI-split regimes yield the
highest latent reproducibility and the highest drift-graph reproducibility.  The
MCC gain is substantial relative to both controls: $+0.146$ over One regime and
$+0.205$ over Shuffled TI labels.  The same construction also gives the lowest
mean latent off-diagonal correlation, suggesting that the learned coordinates are
less redundant within each learned representation.  Because the thresholded
drift graphs are mostly diagonal with sparse off-diagonal edges, absolute GMR
values can be helped by matching many absent edges; we therefore interpret the
relative TI-split gain together with MCC and the diffusion fingerprints below.

\begin{table}[ht!]
  \caption{Cross-seed consistency for the Hardanger experiment.  Entries are
  descriptive means $\pm$ s.d.  For MCC and GMR, the s.d. is across the $45$
  pairwise seed comparisons, not a standard error and not a standard deviation
  over ten independent seeds.  For mean latent off-diagonal correlation, the
  s.d. is across the ten seeds.  GMR thresholds the mean absolute drift
  Jacobian at $\tau=0.05$.}
  \label{tab:hardanger_cross_seed_full}
  \centering
  \small
  \setlength{\tabcolsep}{4pt}
  \begin{tabular}{lccc}
    \toprule
    Regime construction & MCC $\uparrow$ & GMR $\uparrow$ & \makecell{Mean latent\\offdiag.\ corr.\ $\downarrow$} \\
    \midrule
    TI-split regimes (ours)       & $\mathbf{0.806 \pm 0.061}$ & $\mathbf{0.877 \pm 0.081}$ & $\mathbf{0.182 \pm 0.022}$ \\
    One regime (control)          & $0.660 \pm 0.051$          & $0.813 \pm 0.077$          & $0.250 \pm 0.040$ \\
    Shuffled TI labels (control)  & $0.601 \pm 0.067$          & $0.830 \pm 0.070$          & $0.239 \pm 0.052$ \\
    \bottomrule
  \end{tabular}
\end{table}

\subsection{Diffusion-fingerprint diagnostics}
\label{app:hardanger_fingerprint}

Table~\ref{tab:hardanger_fingerprint} shows that the TI-split regimes produce a
strong learned diffusion contrast, whereas Shuffled TI labels collapse the
fingerprint ratios to nearly one.  This is the intended behavior of the control:
it preserves the existence of two batches in the objective but removes the
physical TI-based regime information.  Thus the reproducibility gap in
Table~\ref{tab:hardanger_cross_seed_full} is accompanied by the expected
difference in learned diffusion fingerprints.  Since Assumption~(S4) requires
latent coordinates to carry separating diffusion-ratio fingerprints across
regimes, the large TI-split contrasts provide empirical support that the
TI-based regime construction supplies the kind of diffusion variation required
by the theory; the shuffled-label control does not.  This is a diagnostic check
rather than a formal verification of S4 in the absence of ground-truth latent
coordinates.

The One-regime control is omitted from the fingerprint table because it has only
one diffusion covariance and therefore no variance ratio.

\begin{table}[ht!]
  \caption{Diffusion-fingerprint diagnostics.  Larger contrast magnitudes are
  more favorable for S4-style regime separation.  The first two numeric columns
  report mean $\pm$ s.d. of
  $|\log(\tilde\sigma^2_{2,k}/\tilde\sigma^2_{1,k})|$ over all seeds and latent
  dimensions ($10\times5$ values); this is a descriptive contrast magnitude,
  not a confidence interval.}
  \label{tab:hardanger_fingerprint}
  \centering
  \small
  \setlength{\tabcolsep}{4pt}
  \begin{tabular}{lcc}
    \toprule
    Regime construction
    & \makecell{Parametric\\contrast $\uparrow$}
    & \makecell{Empirical\\contrast $\uparrow$} \\
    \midrule
    TI-split regimes (ours)
      & $\mathbf{1.571 \pm 0.287}$
      & $\mathbf{0.797 \pm 0.150}$ \\
    Shuffled TI labels (control)
      & $0.009 \pm 0.005$
      & $0.003 \pm 0.002$ \\
    \bottomrule
  \end{tabular}
\end{table}

\subsection{Representative learned structure}
\label{app:hardanger_representative}

Figure~\ref{fig:hardanger_drift_jacobian} shows the learned drift Jacobians for
a randomly selected seed, shared across the three regime constructions and
ordered as in the main text.
In each heatmap, row $i$ corresponds to drift component
$\tilde f_{\psi,i}$ and column $j$ corresponds to source coordinate
$\tilde z_j$; the plotted value is the evaluation mean of
$|\partial \tilde f_{\psi,i}/\partial \tilde z_j|$.  Each panel is
diagonal-dominant, but the off-diagonal structure differs across regime
constructions.  This is the qualitative pattern one would expect if the learned
coordinates approximate a modal-like coordinate system for bridge vibrations:
self-dynamics should dominate, while cross-coordinate effects should appear as a
small number of structured couplings rather than as dense or scattered
dependence.  In this representative TI-split-regime run, the largest
off-diagonal entries form a reciprocal pair between two learned coordinates: one
drift component depends on the other coordinate with magnitude about $0.48$, and
the reverse dependence has magnitude about $0.40$.  A reciprocal pair of this
kind is consistent with a learned two-dimensional dynamical subspace, as can
arise from nearby vibration modes, non-proportional damping, aeroelastic or
environmental coupling, or residual mixing in modal-like coordinates.  By
contrast, the controls under the same representative seed also contain
off-diagonal entries, but these appear as more scattered one-way couplings across
different coordinate pairs rather than as a single coherent coupled block.  The
TI-split structure is therefore more naturally aligned with this modal-like
physical interpretation, while still serving only as an illustrative structural
diagnostic that complements the aggregate reproducibility and fingerprint results
above.

\begin{figure*}[ht!]
  \centering
  \includegraphics[width=\textwidth]{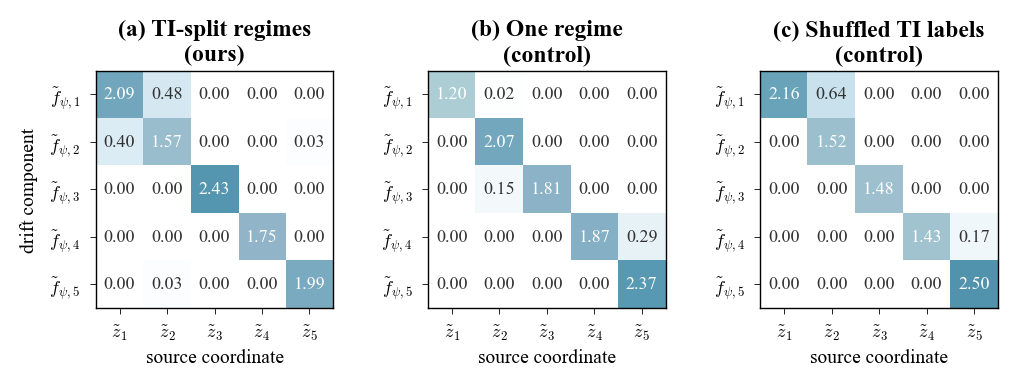}
  \caption{Drift-Jacobian diagnostics for a randomly selected seed, shared across
  the Hardanger regime constructions.  Panels follow the main-text order:
  TI-split regimes (ours), One regime (control), and Shuffled TI labels
  (control).  Rows index drift components $\tilde f_{\psi,i}$ and columns index
  source coordinates $\tilde z_j$; cell $(i,j)$ is the mean absolute derivative
  $\overline{|\partial \tilde f_{\psi,i}/\partial \tilde z_j|}$.}
  \label{fig:hardanger_drift_jacobian}
\end{figure*}

\subsection{Limitations of the real-world diagnostic}
\label{app:hardanger_limitations}

This experiment should be read as a real-world diagnostic rather than a
ground-truth recovery benchmark.  The Hardanger data do not provide true latent
coordinates or a true causal graph, so MCC and GMR measure cross-seed
reproducibility after alignment, not absolute accuracy; absolute GMR values can
also be influenced by matching many absent edges in sparse thresholded graphs.
The diffusion-fingerprint ratios support the presence of TI-dependent diffusion
variation, but they do not formally verify Assumption~(S4) because the true
latent coordinates and diffusion covariances are unobserved.  Likewise, the
modal-like interpretation of the representative drift Jacobian is not a
substitute for external validation against finite-element models or modal
analysis labels.  Finally, the TI quantile split is a proxy for diffusion-regime
variation.  Its usefulness depends on maintaining enough stochastic-forcing
contrast while keeping the deterministic structural dynamics approximately
shared, exactly the tradeoff made explicit by the theory.

\end{document}